\theoremstyle{plain}
\newtheorem{theorem}{Theorem}[section]
\newtheorem{lemma}[theorem]{Lemma}
\theoremstyle{definition}
\theoremstyle{remark}
\title{Unity by Diversity: Improved Representation Learning for Multimodal VAEs}
\author{ 
  \href{mailto:thomas.sutter@inf.ethz.ch}{Thomas M.~Sutter}$^1$\thanks{Corresponding author: \texttt{thomas.sutter@inf.ethz.ch}}, Yang Meng$^{3}$, Andrea Agostini$^1$, Daphné Chopard$^{1,2}$, \\
  \textbf{Norbert Fortin}$^4$, \textbf{Julia E. Vogt}$^1$, \textbf{Babak Shahbaba}$^3$, \textbf{Stephan Mandt}$^{3,5}$ \\
  \\
    $^1$Department of Computer Science, ETH Zurich \\
    $^2$Department of Intensive Care and Neonatology, University Children’s Hospital Zurich \\
    $^3$Department of Statistics, UC Irvine \\
    $^4$Department of Neurobiology and Behavior, UC Irvine \\
    $^5$Department of Computer Science, UC Irvine
}
\begin{document}

\maketitle

\begin{abstract}
Variational Autoencoders for multimodal data hold promise for many tasks in data analysis, such as representation learning, conditional generation, and imputation.
Current architectures either share the encoder output, decoder input, or both across modalities to learn a shared representation. 
Such architectures impose hard constraints on the model. 
In this work, we show that a better latent representation can be obtained by replacing these hard constraints with a soft constraint. We propose a new mixture-of-experts prior, softly guiding each modality's latent representation towards a shared aggregate posterior.
This approach results in a superior latent representation and allows each encoding to preserve information better from its uncompressed original features. In extensive experiments on multiple benchmark datasets and two challenging real-world datasets, we show improved learned latent representations and imputation of missing data modalities compared to existing methods. 
\end{abstract}

\section{Introduction}
\label{sec:intro}
The fusion of diverse modalities and data types is transforming our understanding of complex phenomena, enabling more nuanced and comprehensive insights through the integration of varied information sources. Consider, for instance, the role of a medical practitioner who synthesizes multiple tests and measurements during diagnosis and treatment. This process involves merging shared information across different tests and identifying test-specific details, both of which are critical for optimal patient care and medical decision-making. 

Among the existing methods, multimodal Variational Autoencoders (VAEs) have emerged as a promising approach for jointly modeling and learning from weakly-supervised heterogeneous data sources. While scalable multimodal VAEs utilizing a single shared latent space efficiently handle multiple modalities \citep{wu_multimodal_2018, shi_variational_2019, sutter2021}, finding an optimal method to aggregate these modalities remains challenging. The aggregation methods and resulting joint representations are often suboptimal and overly restrictive \citep{daunhawer2022, sutter2023imposing}, leading to inferior latent representations and generative quality. This trade-off between shared and modality-specific information in the latent representations of multimodal VAEs results in limited quality or coherence in generated samples, even in relatively simple scenarios.

In this work, we propose a novel multimodal VAE, termed the multimodal variational mixture-of-experts prior (MMVM) VAE, to overcome the aforementioned limitations. Instead of modeling the dependencies between different modalities through a joint posterior approximation, we introduce a multimodal and data-dependent prior distribution (see \cref{fig:exp_architectures}). Our proposed multimodal objective is inspired by the VAMP-prior formulation introduced by \citet{tomczak2017vaebb}, which is traditionally used to learn an optimal prior distribution between unimodal data samples, whereas we aim for an optimal prior between different modalities of the same data sample. The resulting regularization term in the VAE objective can be interpreted as minimizing the distance between positive pairs, similar to contrastive learning methods \citep{vandenOord2019, tian2020}; see \cref{sec:method} for details.

We demonstrate the superior performance of the MMVM VAE on three multimodal benchmark datasets, comparing it to unimodal VAEs and multimodal VAEs with joint posterior approximations. Our evaluation focuses on the generative coherence and the quality of the learned latent representations. While independent unimodal VAEs fail to leverage additional modalities during training, they avoid multimodal aggregation disturbances in data reconstruction. On the other hand, multimodal VAEs with a joint posterior approximation must combine both shared and modality-specific information. Previous work by \citet{daunhawer2022} has shown that this approach results in a trade-off between reconstruction quality and learned latent representation. In contrast, the MMVM VAE accurately reconstructs all modalities and learns meaningful latent representations.

In more practical settings, we address two challenging tasks from the neuroscience and medical domain. First, we analyze hippocampal neural activities from multiple subjects in a memory experiment \citep{allen16}. By treating each subject as a modality, our MMVM VAE enables the description of underlying neural patterns shared across subjects while quantifying individual differences in brain activity and behavior, thereby providing potential insights into the neural mechanisms underlying memory impairment.
Second, we tackle identifying cardiopulmonary diseases from chest radiographs using the MIMIC-CXR dataset \citep{johnson2019mimic}, which reflects real-world conditions with images of varying quality. By leveraging both frontal and lateral X-ray views as distinct modalities, our MMVM method learns representations that consistently improve disease classification compared to existing VAEs.

This paper advances multimodal machine learning by providing a robust framework for integrating diverse data types and improving the quality of learned representations and generative models.

\begin{figure*}[t!]
    \begin{subfigure}{0.2635\textwidth}
        \includegraphics[width=1.0\textwidth]{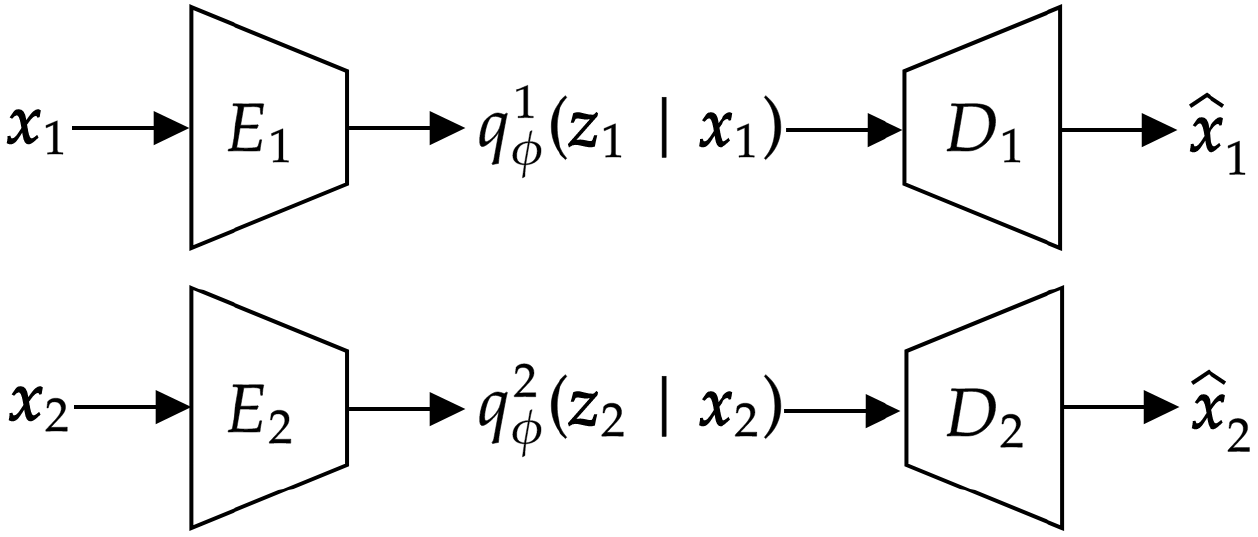}
        \caption{Independent VAEs}
        \label{fig:exp_arch_ind_vaes}
    \end{subfigure}
    \hfill
    \begin{subfigure}{0.4165\textwidth}
        \includegraphics[width=1.0\textwidth]{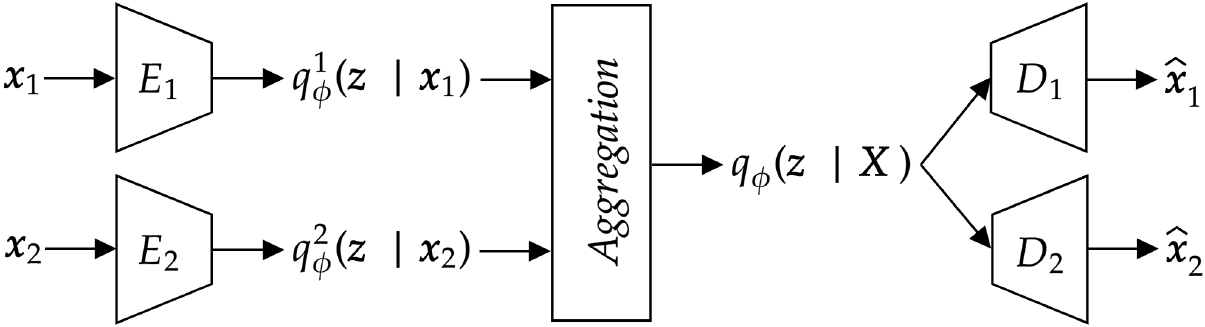}
        \caption{Aggregated VAE}
        \label{fig:exp_arch_agg_vaes}
    \end{subfigure}
    \hfill
    \begin{subfigure}{0.2635\textwidth}
        \includegraphics[width=1.0\textwidth]{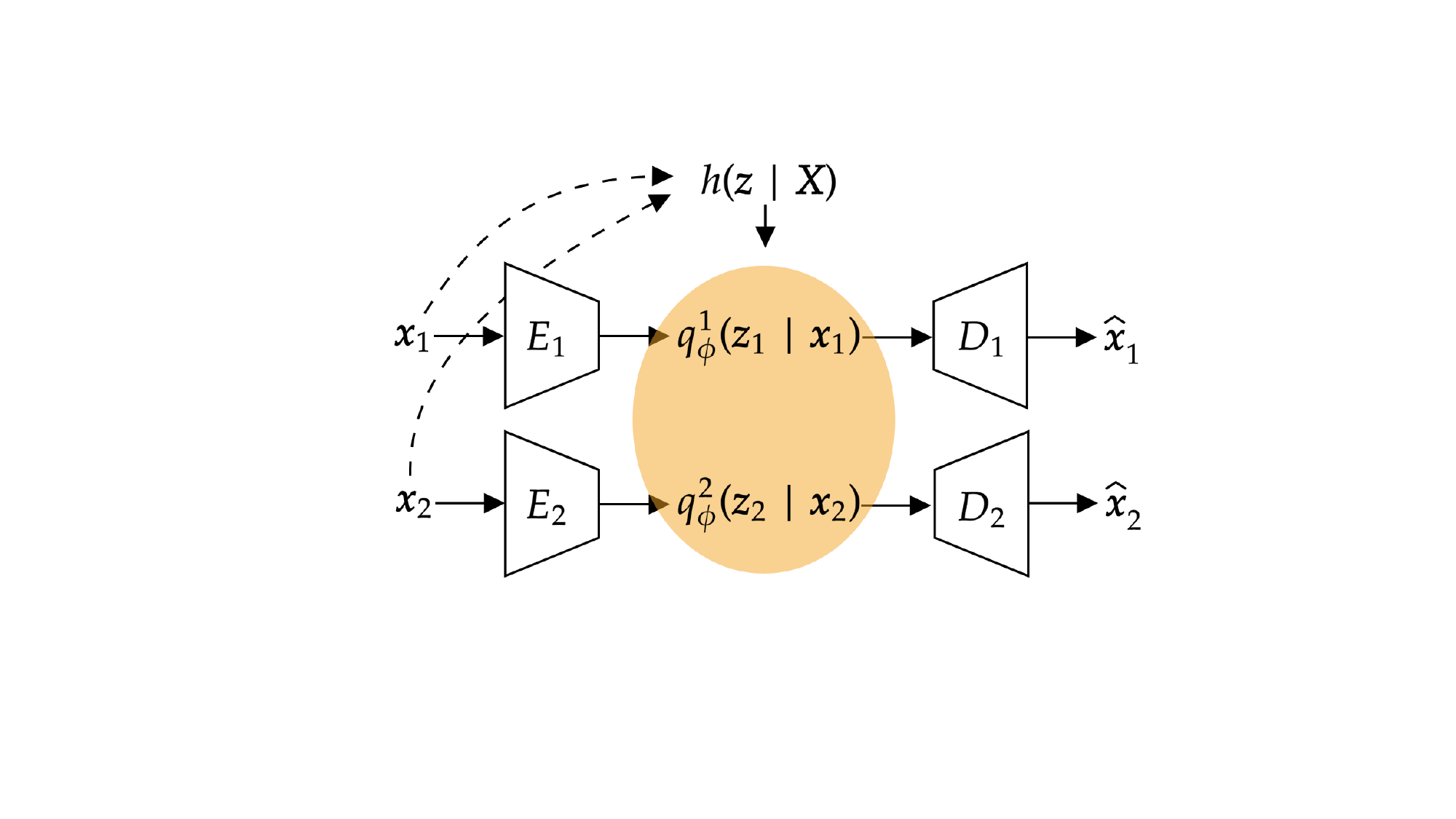}
        \caption{MMVM VAE}
        \label{fig:exp_arch_mmvamp_vaes}
    \end{subfigure}
    \caption{
    Independent VAEs (\cref{fig:exp_arch_ind_vaes}) provide reconstructions for individual modalities but lack information sharing across modalities. Multimodal VAEs with joint posterior approximation (\cref{fig:exp_arch_agg_vaes}) aggregate unimodal posteriors into a joint posterior but may incur poor reconstruction quality. Our proposed MMVM VAE (\cref{fig:exp_arch_mmvamp_vaes}) enhances independent VAEs with a data-dependent prior, $h (\bm{z} \mid \bm{X})$, allowing soft-sharing of information between modalities while preserving modality-specific reconstructions.
    }
    \label{fig:exp_architectures}
\end{figure*}

\section{Related Work}
\label{sec:related_work}
\textbf{Multimodal Learning.}
While there is a long line of research on multimodal machine learning (ML) \citep{baltruvsaitis2018multimodal,liang2022foundations}, multimodal generative ML has gained additional attraction in recent years~\citep{manduchi2024challenges}, driven by impressive results in text-to-image generation \citep{ramesh_zero-shot_2021,ramesh_hierarchical_2022,saharia_photorealistic_2022}.
Unlike these methods, we focus on scalable methods that are designed for a large number of modalities to generate any modality from any other modality without having to train a prohibitive number of different models\footnote{There are $2^M-1$ different subsets for a dataset of $M$ modalities and, hence, paths for any-to-any mappings.}.

\noindent
\textbf{Multimodal VAEs.}
Scalable multimodal VAEs using a joint posterior approximation are based on aggregation in the latent space\footnote{\citet{sutter2021} describe how different implementations of joint posterior multimodal VAEs relate to different abstract mean definitions.}.
Multimodal VAEs that learn a joint posterior approximation of all modalities \citep[e.g.,][]{wu_multimodal_2018,shi_variational_2019,sutter2021} require restrictive assumptions, which lead to inferior performance.
\citet{daunhawer2022} show that aggregation-based multimodal VAEs cannot achieve the same generative quality as unimodal VAEs and struggle with learning meaningful representations depending on the relation between modalities.
If we can predict one modality from another, mixture-of-experts-based posterior approximations perform best if only a single modality is given as input, while product-of-experts-based approximations excel if the full set of modalities is available.
Extensions \citep{sutter_multimodal_2020,daunhawer_self-supervised_2020,palumbo_mmvae_2022} have introduced modality-specific latent subspaces that lead to improved generative quality but cannot completely overcome these limitations.
In contrast, the proposed MMVM method uses neither an aggregated latent space nor modality-specific latent subspaces as in previous works. It only leverages a data-dependent prior distribution to regularize the learned posterior approximations.
A related line of work with different constraints is multiview VAEs \citep{bouchacourt2018,hosoya2018}.
In contrast to multimodal VAEs, multiview VAEs often use a single encoder and decoder for all views (thereby sharing the parameter weights between views).
While initial attempts also assume knowledge about the number of shared and independent generative factors, extensions \citep{locatello2020weakly,sutter2023mvhg,sutter_differentiable_2023} infer these properties during training.

\noindent
\textbf{Role of Prior in VAE Formulations.}
\citet{tomczak2017vaebb} first incorporated data-dependent priors into VAEs by introducing the VAMP-prior.
In contrast to \citet{tomczak2017vaebb}, who are primarily interested in better ELBO approximations, our focus is on learning better multimodal representations and overcoming the limitations faced in previous multimodal VAE works.
\citet{sutter_multimodal_2020} used a data-dependent prior combined with a joint posterior approximation defining a Jensen-Shannon divergence regularization based on the geometric mean.
However, their work lacks a rigorous derivation and relies on the suboptimal conditional generation during training \citep[][]{daunhawer2022}.
\citet{joy2022} also presented a multimodal VAE inspired by the VAMP-prior VAE. They leverage the VAMP prior to model missing modalities rather than using it as a regularization objective between multimodal samples, as we do in this work.
An additional line of work \citep[e.g., ][]{bhattacharyya2019,mahajan2020} leverages normalizing flows to increase the expressivity of the multimodal prior distribution, but this sacrifices the method's scalability.


\section{Background on Multimodal VAEs}
\label{sec:background}
\textbf{Problem Specification.}
We consider a dataset $\mathbb{X} = \{ \bm{X}^{(i)} \}_{i=1}^n$ where each $\bm{X}^{(i)} = \{ \bm{x}_1^{(i)}, \ldots, \bm{x}_M^{(i)} \}$ is a set of $M$ modalities $\bm{x}_m$ with latent variables $\bm{z} = \{ \bm{z}_1^{(i)}, \ldots, \bm{z}_M^{(i)} \}$. The modalities $\bm{x}_m^{(i)}$ could represent images of the same object taken from different camera angles, text-image pairs, or---as in this paper---neuroscience data from different animal subjects with shared experimental conditions and multiple medical measurements of a patient. When contextually clear, we remove the superscript $(i)$ to improve readability.

Inspired by variational autoencoders~\citep[VAEs, ][]{kingma2013}, we aim to learn an objective for representation learning while sharing information from different data modalities. 
For example, we would like to embed neuroscience data into a shared latent space to make brain activations comparable across subjects. At the same time, we want to avoid imposing assumptions on information sharing that are too strong to be able to take individual traits of the data modalities into account. As is typical in VAEs, this procedure involves a decoder (or likelihood) $p_\theta (\bm{X} \mid \bm{z})$, an encoder (or variational distribution) ${q_\phi(\bm{z} \mid \bm{X})}$, and a prior ${h(\bm{z} | \bm{X} )}$ that we allow to depend on the input. 

\noindent
\textbf{Data-Dependent Prior and Objective.} 
The VAE framework allows us to derive an ELBO-like learning objective $\mathcal{E}$ as follows
\begin{align}
    \mathcal{E} (\bm{X}) = &~ \mathbb{E}_{q_\phi(\bm{z} \mid \bm{X})} \left[ \log p_\theta (\bm{X} \mid \bm{z}) - \log \frac{q_\phi(\bm{z} \mid \bm{X})}{h(\bm{z} | \bm{X} )} \right]. \nonumber
\end{align}
Above, $\theta$ and $\phi$ denote the learnable model variational parameters. 
Importantly, our approach allows for an input-dependent prior $h(\bm{z} \mid \bm{X})$. Data-dependent priors can be justified from an empirical Bayes standpoint~\citep{efron2012large} and enable information sharing across data points with an intrinsic multimodal structure, as in our framework. They effectively amortize computation over many interrelated inference tasks. We stress that by making the prior data dependent, our model no longer allows for unconditional generation; however, this property can be restored by incorporating pseudo inputs \citep{tomczak2017vaebb}, hyper-priors \citep{sonderby2016ladder}, or ex-post density estimation techniques \citep{ghosh2019variational}.
We discuss the objective in more detail in \cref{app:mm_vamp_vae}, where we prove that the resulting objective is upper bounded by the mean squared reconstruction error, ensuring the existence of (local) optima and thus tractable optimization.

\noindent
\textbf{Encoder and Decoder.} We now specify our encoder and decoder assumptions. A simple encoder choice relies on a single neural network encoder that expects multi-modal inputs, but this approach fails if one or more modalities are missing~\citep{suzuki2022survey}. This shortcoming has motivated multiple approaches \citep{wu_multimodal_2018,shi_variational_2019,sutter2021} with separate encoders $q_\phi^m(\bm{z}_m| \bm{x}_m)$---one for each modality $m$---that are then \emph{aggregated} in the latent space, e.g., by using a product or mixture distribution. Samples drawn from the joint distribution, e.g., $q_\phi(\bm{z} \mid \bm{X}) = \frac{1}{M} \sum_{m=1}^M q_\phi^m (\bm{z} \mid \bm{x}_m)$, reconstruct all modalities:
\begin{align}
    \mathcal{E} (\bm{X}) = &~ \mathbb{E}_{q_\phi(\bm{z} \mid \bm{X})} \left[ \log h(\bm{z} \mid \bm{X}) \right] + \mathbb{E}_{q_\phi(\bm{z}  \mid \bm{X})} \left[\log \frac{p_\theta^m (\bm{x}_m \mid \bm{z})}{q_\phi(\bm{z} \mid \bm{X})}\right].
\end{align}

As argued and discussed in this paper, such aggregation can be overly restrictive. 
        Instead, this paper explores a different aggregation mechanism that preserves the individual encoders \emph{and} decoders for each modality. Hence, we assume independent decoders $p_\theta^m(\bm{x}_m| \bm{z}_m)$ for every modality $m$, assuming conditional independence of each modality given their latent representation \citep[see also][]{wu_multimodal_2018,shi_variational_2019,sutter2021}.

Following this assumption, we rewrite the objective $\mathcal{E}$ as
\begin{align}
\label{eq:ELBO-1}
    \mathcal{E} (\bm{X}) = &~ \mathbb{E}_{q_\phi(\bm{z} \mid \bm{X})} \left[ \log h(\bm{z} \mid \bm{X}) \right] + \sum_{m=1}^M \mathbb{E}_{q_\phi(\bm{z}_m  \mid \bm{x}_m)} \left[\log \frac{p_\theta^m (\bm{x}_m \mid \bm{z}_m)}{q^m_\phi(\bm{z}_m \mid \bm{x}_m)}\right].
\end{align}
Our assumptions imply that the likelihood and posterior entropy terms (the second term in \cref{eq:ELBO-1}) decouple across modalities, i.e. 
    $q_{\phi} (\bm{z} \mid \bm{X}) = \prod_{m=1}^M q_\phi^{m} (\bm{z}_m \mid \bm{x}_m)$ and
    $p_\theta (\bm{X} \mid \bm{z}) = \prod_{m=1}^M p_\theta (\bm{x}_m \mid \bm{z}_m)$. In contrast, the cross-entropy between the encoder and prior (the first term in \cref{eq:ELBO-1}) does not decouple and may result in information sharing across modalities. We specify further design choices in the next section.

\section{Multimodal Variational Mixture VAE}
\label{sec:method}
We propose the multimodal variational mixture-of-experts prior (MMVM) VAE, a novel multimodal VAE. 
The main idea is to design a mixture-of-experts prior across modalities that induces a soft-sharing of information between modality-specific latent representations rather than hard-coding this through an aggregation approach.

VAEs are an appealing model class that allows us to infer meaningful representations and preserve modality-specific information due to the reconstruction loss.
Contrastive learning approaches, on the other hand, have shown impressive results on representation learning tasks related to extracting shared information between modalities by maximizing the similarity of their representations \citep{radford_learning_2021}.
Contrastive approaches focus on the shared information between modalities, neglecting potentially useful modality-specific information.
We are interested in preserving modality-specific information, which is necessary to generate missing modalities conditionally.

Therefore, we leverage the idea of maximizing the similarity of representations for \emph{generative models}.
We propose a prior distribution that models the dependency between the different views and a new multimodal objective that encourages similarity between the unimodal posterior approximations $q_\phi^m (\bm{z}_m \mid \bm{x}_m)$ using the regularization term in the objective as a "soft-alignment" without the need for an aggregation-based joint posterior approximation.
We discuss objectives based on data-dependent priors in more detail in \cref{app:mm_vamp_vae}.

To this end, we define a data-dependent MMVM prior distribution in the form of a mixture-of-experts distribution of all unimodal posterior approximations
\begin{align}
\label{eq:mm_vamp_prior}
    h(\bm{z} \mid \bm{X}) & = \prod_{m=1}^M h (\bm{z}_m  \mid \bm{X}) \hspace{0.5cm} \text{where} \hspace{0.5cm} h (\bm{z}_m \mid \bm{X}) = \frac{1}{M} \sum_{\tilde{m}=1}^M q_\phi^{\tilde{m}} (\bm{z}_m \mid \bm{x}_{\tilde{m}}), \hspace{0.25cm} \forall ~ m \leq M.
\end{align}
This notation implies that we use the variational distributions of all modalities $\tilde{m}$ to construct a mixture distribution and then use the same mixture distribution as a prior for any modality $m$. Finally, we build the product distribution over the $M$ components.

Our construction of a variational mixture of posteriors is similar to the VAMP-prior of \citet{tomczak2017vaebb} that 
proposes the aggregate posterior $q(\bm{z}) \equiv \frac{1}{N}\sum_{i=1}^n q_\phi(\bm{z}\mid\bm{x}^{(i)})$ of a unimodal VAE as a prior. Note, however, that our approach considers mixtures in \emph{modality} space and not data space. In contrast to \citet{tomczak2017vaebb}, our variational mixture is conditioned on a specific instance $\bm{X}$ and, therefore, does not share information across different instances $\bm{X}^{(i)} \in \mathbb{X}$. Rather, we share information across the different modalities $\bm{x}_m^{(i)} \in  \bm{X}^{(i)}$ \emph{within} a given instance.
Intuitively, we build the \emph{aggregate posterior} in modality space and replicate this aggregate posterior over all modalities. We stress that this aggregate posterior differs from the standard definition as an average of variational posteriors over the empirical data distribution. Even though the prior appears factorized over the modality space, each factor still shares information across all data modalities by conditioning on the multimodal feature vector $\bm{X}$ (\cref{eq:mm_vamp_prior}).

\Cref{fig:exp_architectures} graphically illustrates the behavior of the proposed MMVM VAE compared to a set of independent VAEs and an aggregation-based multimodal VAE.
A set of independent VAEs (\cref{fig:exp_arch_ind_vaes}) cannot share information among modalities.
Aggregation-based VAEs (\cref{fig:exp_arch_agg_vaes}), in contrast, enforce a shared representation between the modalities.
The MMVM VAE (\Cref{fig:exp_arch_mmvamp_vaes}) enables the soft-sharing of information between modalities through its input data-dependent prior $h(\bm{z} \mid \bm{X})$.

\noindent
\textbf{Minimizing Jenson-Shannon Divergence.}
The "rate" term $R$ in the objective, i.e., the combination of variational entropy and cross-entropy, reveals a better understanding of the effect of the mixture prior.
Defining $R = KL(q_\phi(\bm{z} \mid \bm{X})|| h(\bm{z}|\bm{X}))$ where $KL$ denotes the Kullback-Leibler divergence, the factorization in \cref{eq:mm_vamp_prior} implies that 
\begin{align}
    R = \sum_{m=1}^M KL \left(q_\phi^m (\bm{z}_m | \bm{x}_m) || \frac{1}{M} \sum_{\tilde{m}}^M q_\phi^{\tilde{m}} (\bm{z}_m | \bm{x}_{\tilde{m}}) \right) = M \cdot JS (q_\phi^1 (\bm{z}_1 | \bm{x}_1), \ldots, q_\phi^M (\bm{z}_M | \bm{x}_M)), \nonumber
\end{align}
where $JS(\cdot)$ is the Jensen-Shannon divergence between $M$ distributions \citep{lin_divergence_1991}.
Hence, maximizing the objective $\mathcal{E}(\bm{X})$ of the proposed MMVM VAE is equal to minimizing $M$ times the JS divergence between all the unimodal posterior approximations $q_\phi^m (\bm{z}_m \mid \bm{x}_m )$.
Minimizing the Jensen-Shannon divergence between the posterior approximations is directly related to pairwise similarities between posterior approximation distributions of positive pairs, similar to contrastive learning approaches but in a generative approach.

\subsection{Optimality of the MMVM Prior}
\cref{thm:optimal_prior_distribution} shows that \cref{eq:mm_vamp_prior} is \emph{optimal} in the sense that it is the unique minimizer of the cross entropy between our chosen variational distribution and an arbitrary prior.

\begin{lemma}[]
\label{thm:optimal_prior_distribution}
   The expectation on the right-hand side of \Cref{eq:ELBO-1} is maximized when for each $m\in \{1, \cdots, M\}$,  the prior  $h(\bm{z}_m | \bm{X})$ is equal to the aggregated posterior of a multimodal sample given on the first line of \cref{eq:mm_vamp_prior}.
\end{lemma}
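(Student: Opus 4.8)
The plan is to maximize the only $h$-dependent part of \cref{eq:ELBO-1}, namely the cross-entropy term $\mathbb{E}_{q_\phi(\bm{z} \mid \bm{X})}[\log h(\bm{z} \mid \bm{X})]$, over the prior while holding the variational distribution $q_\phi$ fixed (the second term of \cref{eq:ELBO-1} does not involve $h$, so it is irrelevant to the optimization). I work within the ansatz of \cref{eq:mm_vamp_prior}, where the prior factorizes over the blocks as $h(\bm{z} \mid \bm{X}) = \prod_{m=1}^M h(\bm{z}_m \mid \bm{X})$ using a single density $h(\cdot \mid \bm{X})$ shared across all modalities. First I would substitute this factorization together with $q_\phi(\bm{z} \mid \bm{X}) = \prod_{m=1}^M q_\phi^m(\bm{z}_m \mid \bm{x}_m)$, so that the logarithm becomes a sum of per-block terms. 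Integrating out every block except $\bm{z}_m$ under the product posterior leaves the marginal $q_\phi^m(\bm{z}_m \mid \bm{x}_m)$, reducing the objective to $\sum_{m=1}^M \mathbb{E}_{q_\phi^m(\bm{z}_m \mid \bm{x}_m)}[\log h(\bm{z}_m \mid \bm{X})]$, a sum of $M$ cross-entropies against one and the same prior density.

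The central step is to collapse these $M$ terms into a single cross-entropy against the aggregate posterior. Because each $\bm{z}_m$ is a dummy integration variable over a common latent space and the prior density $h(\cdot \mid \bm{X})$ is shared, I can rename $\bm{z}_m \mapsto \bm{z}$ in every summand and exchange sum and integral to obtain $\sum_{m=1}^M \int q_\phi^m(\bm{z} \mid \bm{x}_m)\,\log h(\bm{z} \mid \bm{X})\, d\bm{z} = M \int \bar{q}(\bm{z} \mid \bm{X})\,\log h(\bm{z} \mid \bm{X})\, d\bm{z}$, where $\bar{q}(\bm{z} \mid \bm{X}) = \tfrac{1}{M}\sum_{m=1}^M q_\phi^m(\bm{z} \mid \bm{x}_m)$ is precisely the aggregate posterior on the first line of \cref{eq:mm_vamp_prior}. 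The quantity to maximize is thus $M\,\mathbb{E}_{\bar{q}}[\log h]$, equivalently $-M$ times the cross-entropy between the fixed $\bar{q}$ and the free $h$.

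I would then conclude with Gibbs' inequality. Writing $\mathbb{E}_{\bar{q}}[\log \bar{q}] - \mathbb{E}_{\bar{q}}[\log h] = KL(\bar{q}(\cdot \mid \bm{X}) \,\|\, h(\cdot \mid \bm{X}))$, the term $\mathbb{E}_{\bar{q}}[\log h]$ is bounded above by $\mathbb{E}_{\bar{q}}[\log \bar{q}]$, with the gap equal to a Kullback--Leibler divergence. Non-negativity of this divergence (Jensen applied to the concave logarithm) yields the bound, and its strict positivity whenever the two densities differ on a set of positive $\bar{q}$-measure yields that the maximizer $h(\cdot \mid \bm{X}) = \bar{q}(\cdot \mid \bm{X})$ is unique ($\bar{q}$-almost everywhere). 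Evaluated block by block, this reads $h(\bm{z}_m \mid \bm{X}) = \tfrac{1}{M}\sum_{\tilde m=1}^M q_\phi^{\tilde m}(\bm{z}_m \mid \bm{x}_{\tilde m})$ for each $m$, which is the statement.

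The step that needs the most care is the collapse in the second paragraph: it is valid only because one shared prior density is used for every modality, as built into \cref{eq:mm_vamp_prior}. Were each block instead given its own unconstrained prior, the $M$ cross-entropies would decouple and each would be maximized independently at $h(\bm{z}_m \mid \bm{X}) = q_\phi^m(\bm{z}_m \mid \bm{x}_m)$, so no mixture would appear; it is exactly the modality-shared structure of the prior that singles out the aggregate posterior, and I would state this assumption explicitly before combining the summands. The remaining ingredients --- marginalization under the product posterior and the Gibbs/KL argument --- are standard.
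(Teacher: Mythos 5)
Your proof is correct, and its first half coincides with the paper's own argument: both isolate the cross-entropy term of \cref{eq:ELBO-1} as the only $h$-dependent piece, use the factorizations of $q_\phi$ and of the prior over blocks with a single shared density $h(\cdot\mid\bm{X})$, and collapse the sum of $M$ per-modality cross-entropies into $M\,\mathbb{E}_{\bar q}[\log h]$, with $\bar q$ the mixture on the first line of \cref{eq:mm_vamp_prior}. Where you diverge is the concluding step: the paper forms a Lagrangian enforcing $\int h\, d\bm{z}_m = 1$, takes functional derivatives, and argues that stationarity forces the ratio of the mixture to the prior to be constant, hence (after normalization) equal; you instead invoke Gibbs' inequality, writing $\mathbb{E}_{\bar q}[\log \bar q]-\mathbb{E}_{\bar q}[\log h]=KL(\bar q\,\|\,h)\ge 0$. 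Your route buys something real: it certifies a \emph{global} maximum directly, whereas the Lagrange condition is only first-order stationarity (the paper checks neither concavity of the functional nor the sign of the multiplier), and uniqueness $\bar q$-almost everywhere falls out of strict positivity of the KL divergence rather than being asserted. You also make explicit an assumption the paper leaves implicit in its notation: the collapse of the $M$ summands is legitimate only because the same prior density is reused for every block $\bm{z}_m$; with independent per-block priors the objective would decouple and the optimum would be $h(\bm{z}_m\mid\bm{X})=q_\phi^m(\bm{z}_m\mid\bm{x}_m)$, so no mixture would arise. Flagging that constraint before combining the summands is a genuine improvement in rigor over the paper's presentation.
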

\begin{proof}
Since the cross-entropy term in \cref{eq:ELBO-1} involves an expectation over the data $\bm{X}$ and both $q_\phi(\bm{z} \mid \bm{X})$ and $h(\bm{z} \mid \bm{X})$ depend on $\bm{X}$, we can prove the identity for a given value of $\bm{X}$.

We exploit the factorization of both the variational posterior and the prior over the modalities. Interpreting the cross-entropy between the variational distribution and prior as a functional $F$ of the prior $h$, we have
\begin{align*}
F[h(\bm{z}|\bm{X}))] & \equiv \mathbb{E}_{q_\phi(\bm{z}|\bm{X})}\left[\log h(\bm{z}|\bm{X}))\right] = \mathbb{E}_{\prod_{m=1}^M q_\phi^m(\bm{z}_m|\bm{x}_m)}\left[\log \prod_{m=1}^M h(\bm{z}_m|\bm{X}))\right] \\
& = \sum_{m=1}^M \mathbb{E}_{q_\phi^m(\bm{z}_m|\bm{x}_m)}\left[\log h(\bm{z}_m|\bm{X}))\right] = M \cdot \mathbb{E}_{\frac{1}{M} \sum_{\tilde{m}=1}^M q_\phi^{\tilde{m}}(\bm{z}_m|\bm{x}_{\tilde{m}})}\left[\log h(\bm{z}_m|\bm{X}))\right].
\end{align*}
As a result, we see that $F[h(\cdot)]$ is an expectation over a mixture distribution. We can solve for the optimal distribution $h (\cdot)$ by adding a Lagrange multiplier that enforces $h(\cdot)$ normalizes to one:
\begin{align*}
\max_{h(\bm{z}_m \mid \bm{X})} F[h(\bm{z}_m \mid \bm{X})] + \gamma \left( \int h (\bm{z}_m  \mid  \bm{X}) d\bm{z}_m -1 \right) = \max_{h(\bm{z}_m \mid \bm{X})} \mathcal{L}[h, \gamma]\nonumber
\end{align*}

To maximize the Lagrange functional $\mathcal{L}[h, \gamma]$, we compute its (functional) derivatives with respect to $h (\bm{z}_m | \bm{X})$ and $\gamma$.
\begin{align}
    \frac{\partial \mathcal{L} [h (\bm{z}_m|\bm{X}), \gamma]}{\partial h (\bm{z}_m|\bm{X})} =
    &~ \frac{\frac{1}{M}\sum_{\tilde{m}=1}^M q_\phi^{\tilde{m}} (\bm{z}_m | \bm{X})}{h (\bm{z}_m| \bm{X})} + \gamma \overset{!}{=} 0 \nonumber \\
    \frac{\partial \mathcal{L} [h (\bm{z}_m | \bm{X}), \gamma]}{\partial \gamma} =&~ \int_{\bm{z}_m} h (\bm{z}_m | \bm{X}) d\bm{z}_m - 1 \overset{!}{=} 0 \nonumber
\end{align}
The first condition implies that for \emph{any} value of $\bm{z}_m$, the ratio between the mixture distribution and the prior is constant, while the second condition demands that the prior be normalized. These conditions can only be met if the prior \emph{equals} the mixture distribution, which proves the claim.   
\end{proof}

\begin{figure*}
    \centering
    \begin{subfigure}[t]{1.0\textwidth}
        \hspace{0.55cm}
        \includegraphics[width=0.945\textwidth]{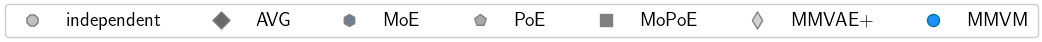}
    \end{subfigure}
    \centering
    \begin{subfigure}[t]{0.3\textwidth}
        \includegraphics[width=1.0\textwidth]{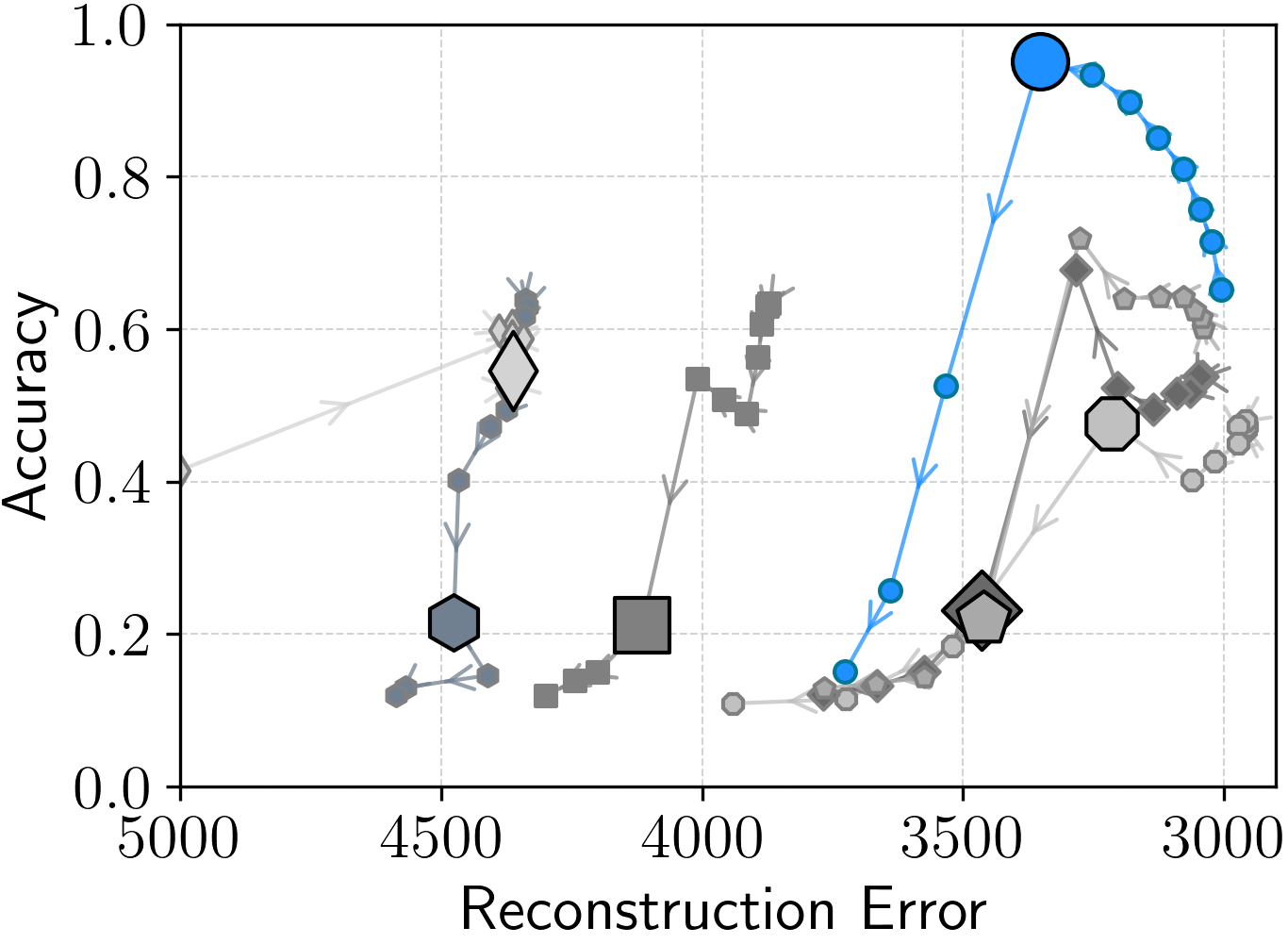}
        \caption{LR: translated PoyMNIST}
        \label{fig:exp_benchmarks_downstream_polymnist}
    \end{subfigure}
    \hspace{0.5cm}
    \centering
    \begin{subfigure}[t]{0.3\textwidth}
        \includegraphics[width=1.0\textwidth]{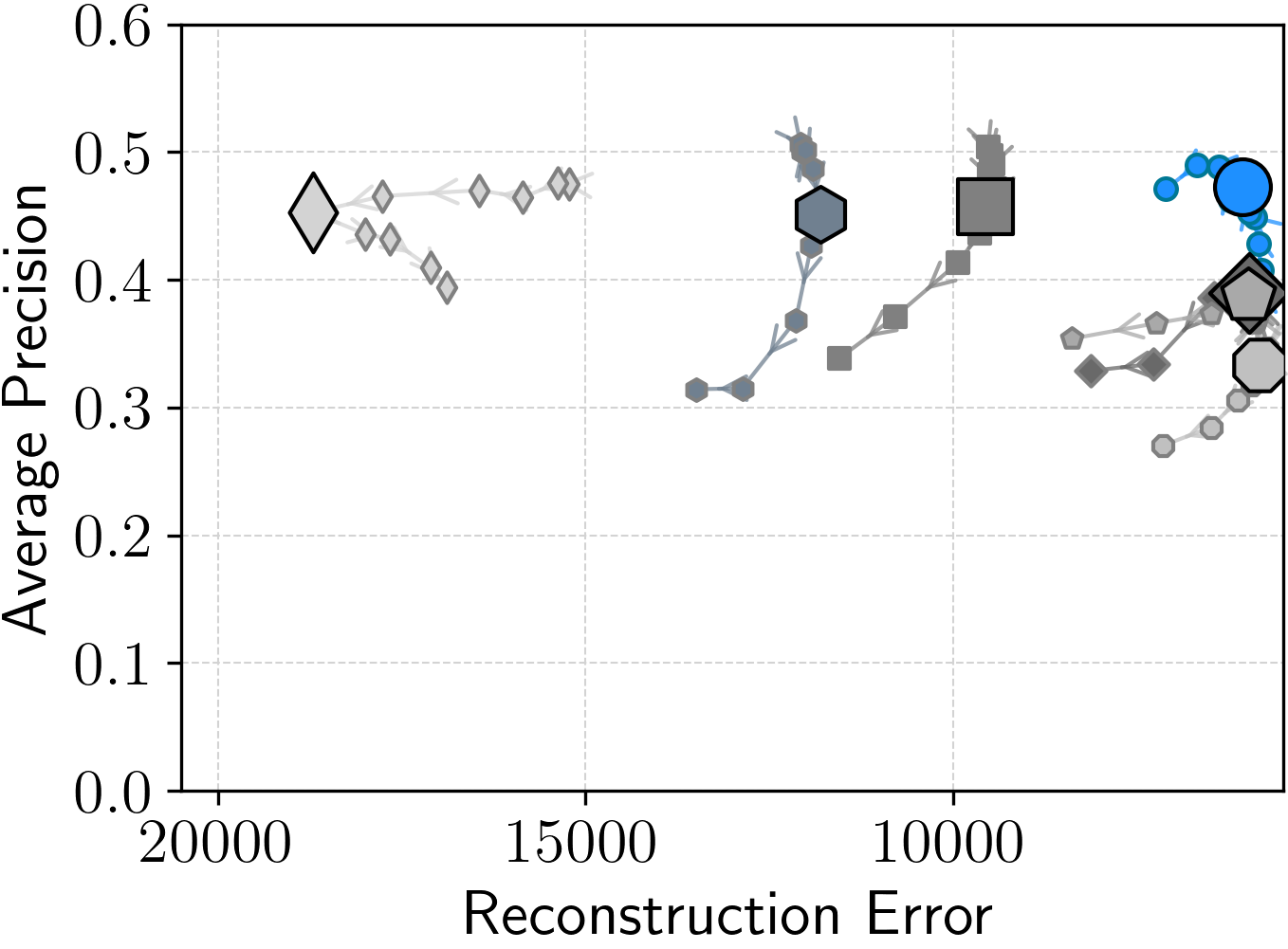}
        \caption{LR: Bimodal CelebA}
        \label{fig:exp_benchmarks_downstream_celeba}
    \end{subfigure}
    \hspace{0.25cm}
    \begin{subfigure}[t]{0.3\textwidth}
        \includegraphics[width=1.0\textwidth]{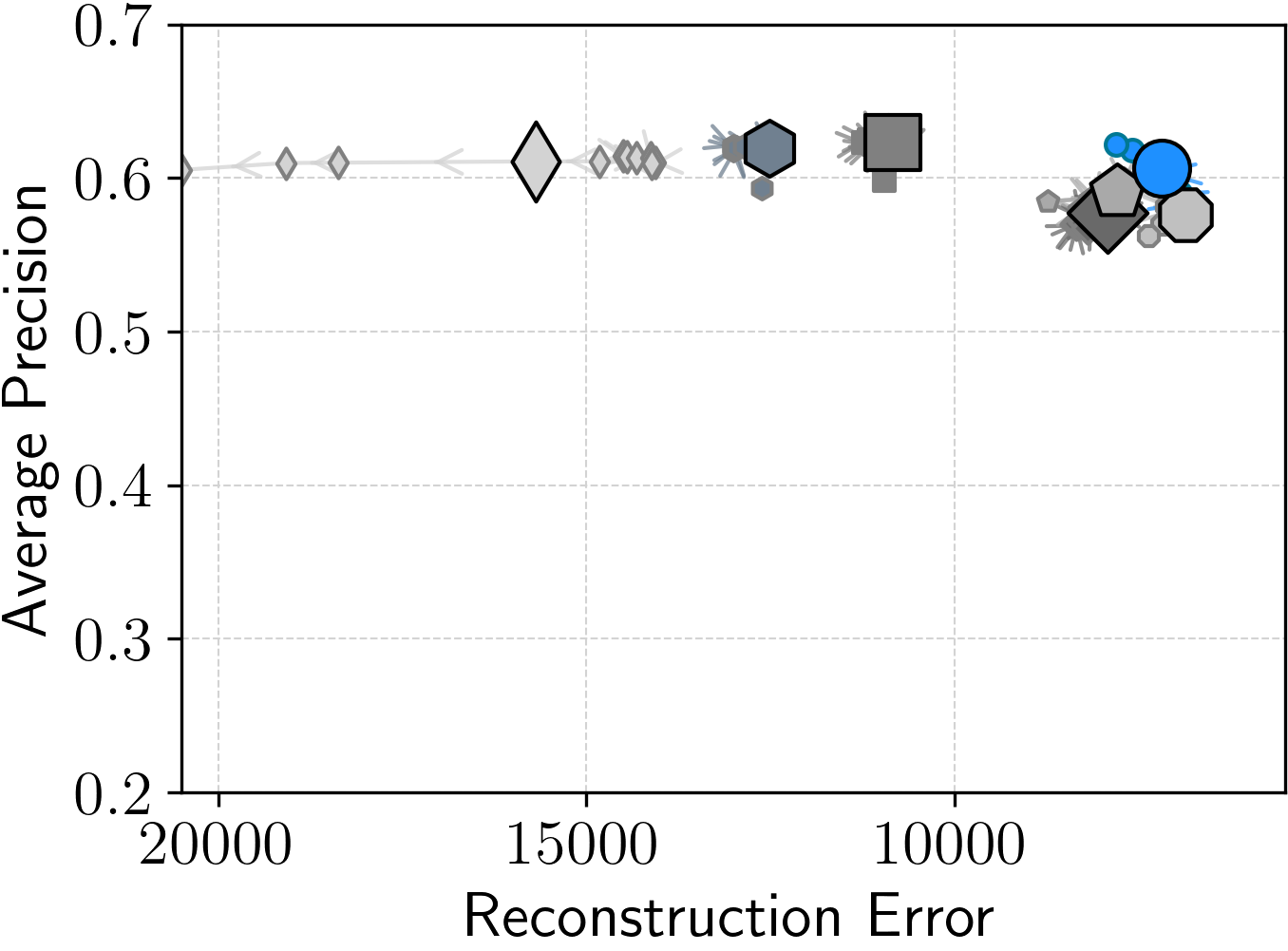}
        \caption{LR: CUB    }
        \label{fig:exp_benchmarks_downstream_cub}
    \end{subfigure}
    \hspace{0.25cm}
    \centering
    \begin{subfigure}[t]{0.3\textwidth}
        \includegraphics[width=1.0\textwidth]{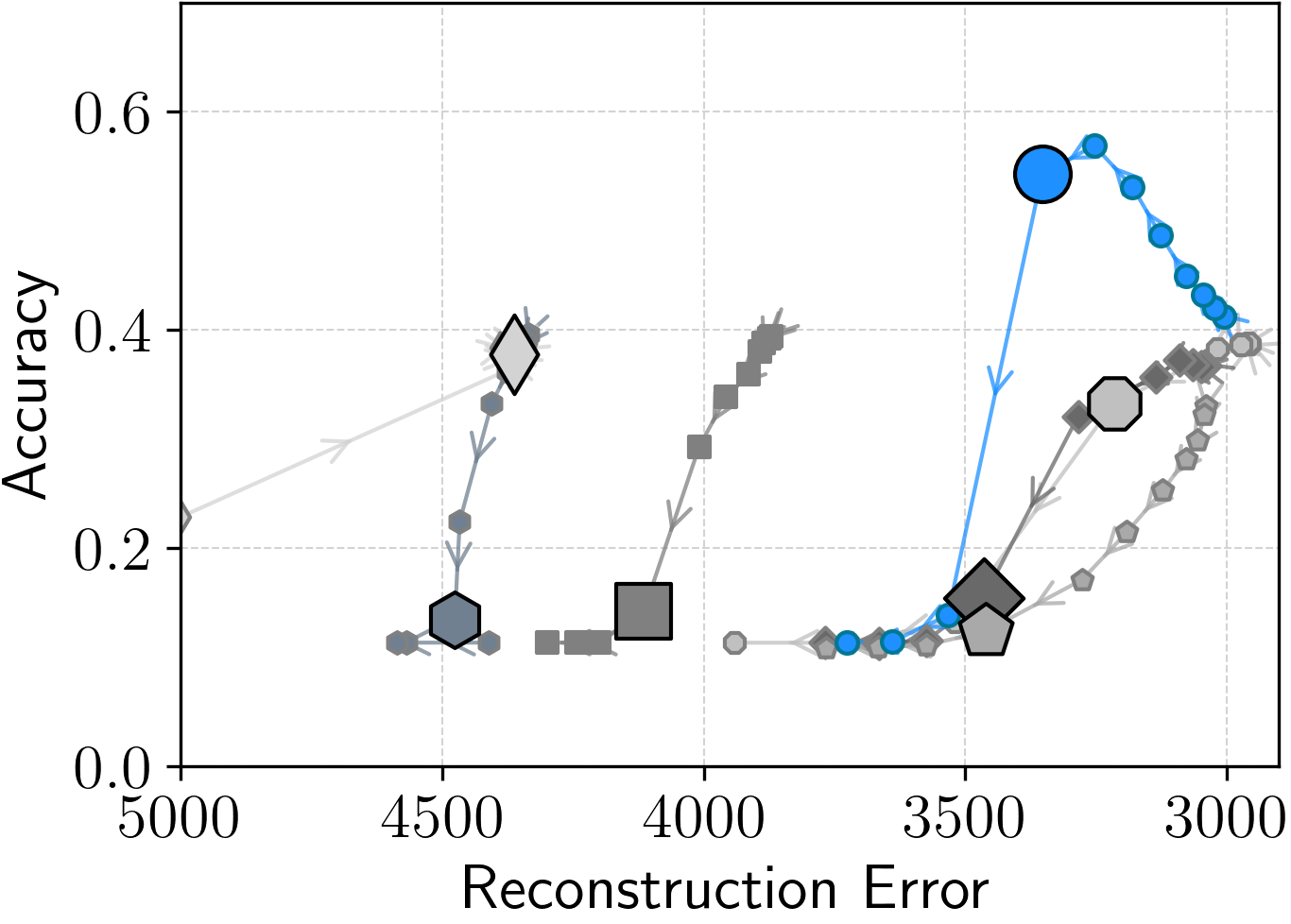}
        \caption{Coh: translated PoyMNIST}
        \label{fig:exp_benchmarks_coherence_polymnist}
    \end{subfigure}
    \hspace{0.5cm}
    \centering
    \begin{subfigure}[t]{0.3\textwidth}
        \includegraphics[width=1.0\textwidth]{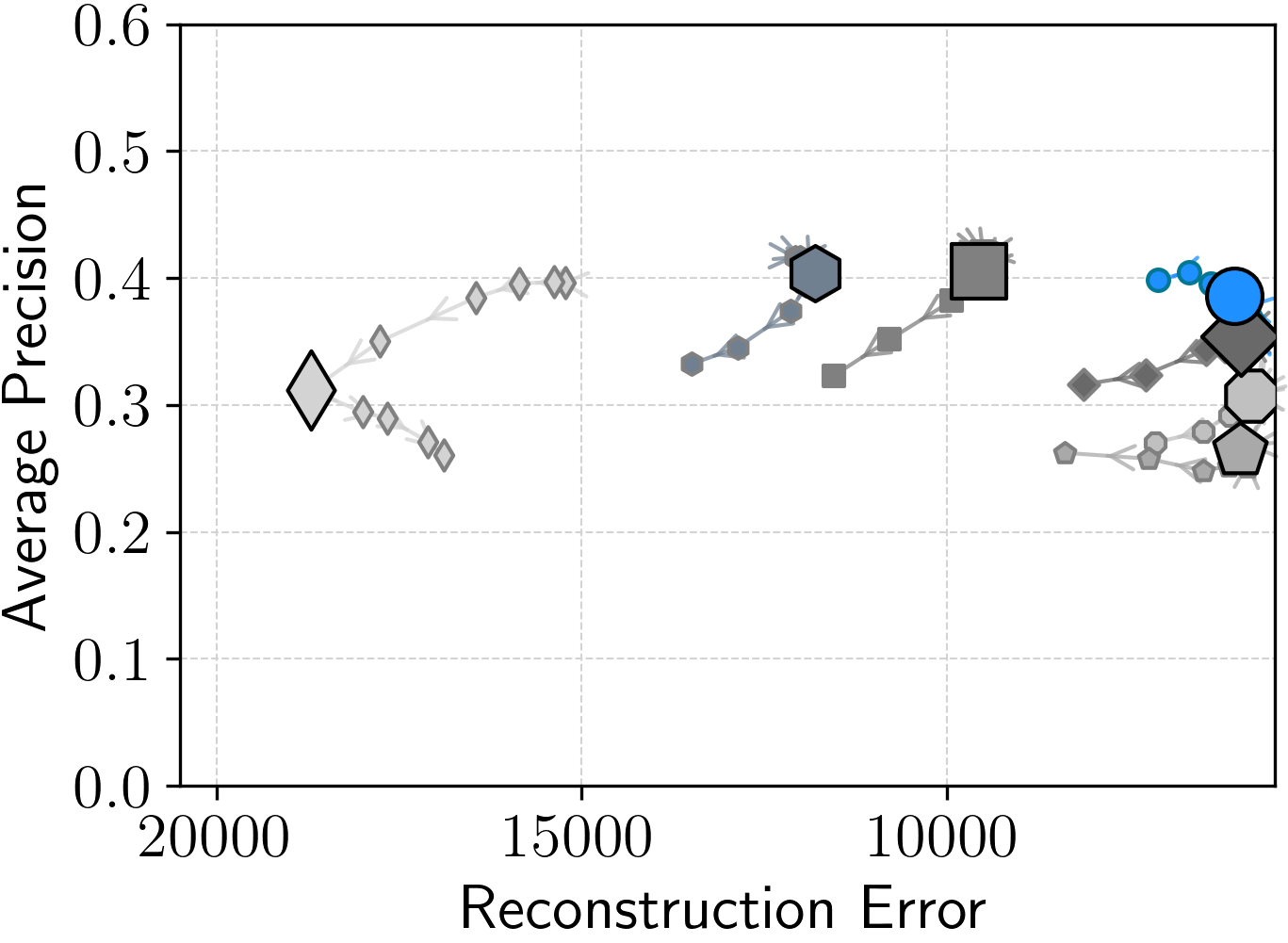}
        \caption{Coh: Bimodal CelebA}
        \label{fig:exp_benchmarks_coherence_celeba}
    \end{subfigure}
    \hspace{0.25cm}
    \begin{subfigure}[t]{0.3\textwidth}
        \includegraphics[width=1.0\textwidth]{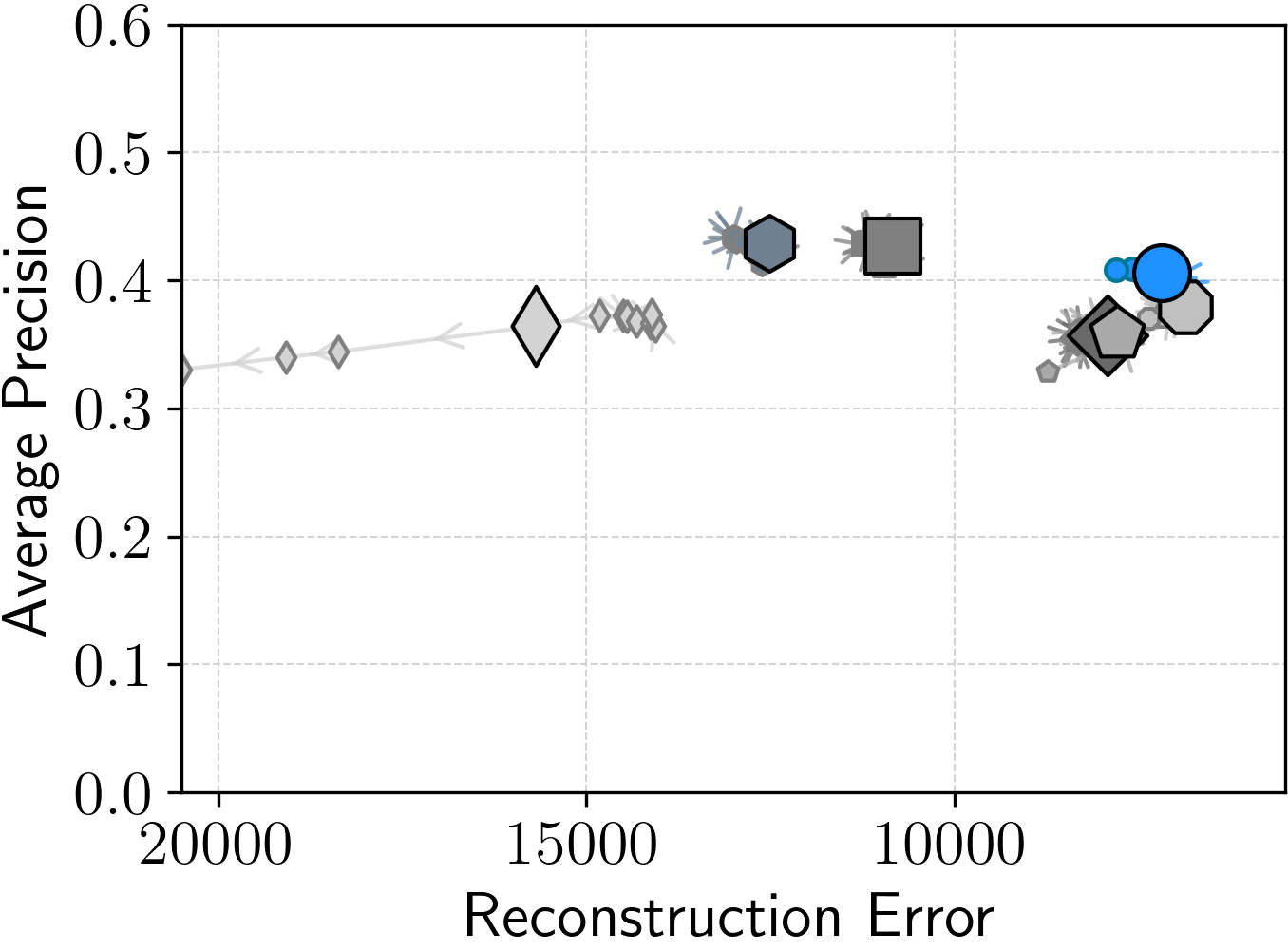}
        \caption{Coh: CUB}
        \label{fig:exp_benchmarks_coherence_cub}
    \end{subfigure}
    \caption{
    Results on the benchmark datasets translated PolyMNIST, bimodal CelebA, and CUB. An optimal model would be in the top right corner with low reconstruction error and high classification performance.
    The proposed MMVM method either achieves a higher classification performance, latent representation (LR, \cref{fig:exp_benchmarks_downstream_polymnist,fig:exp_benchmarks_downstream_celeba,fig:exp_benchmarks_downstream_cub}) or coherence of generated samples (Coh, \cref{fig:exp_benchmarks_coherence_polymnist,fig:exp_benchmarks_coherence_celeba,fig:exp_benchmarks_coherence_cub}), with the same reconstruction loss or the same classification performance with lower reconstruction loss.
    Every point averages runs over multiple seeds and a specific $\beta$ value (see \cref{sec:exp_benchmarks}).
    }
    \label{fig:exp_benchmarks}
    \vspace{-0.25cm}
\end{figure*}

\section{Experiments}
\label{sec:experiments}
We evaluate the proposed MMVM VAE on three benchmark datasets and two challenging real-world applications.
\vspace{-0.25cm}

\subsection{Benchmark Datasets}
\label{sec:exp_benchmarks}
We first compare the proposed method against five strong VAE-based learning approaches on three frequently used multimodal benchmark datasets\footnote{The code for the experiments on the benchmark datasets can be found here: \url{https://github.com/thomassutter/mmvmvae}.}.

\noindent
\textbf{Datasets.} 
We perform benchmark experiments on the translated PolyMNIST \citep{daunhawer2022,sutter2021}, the bimodal CelebA \citep{sutter_multimodal_2020}, and the CUB image-captions \citep{shi_variational_2019} dataset.
The translated PolyMNIST dataset uses multiple instances of the MNIST dataset \citep{lecun1998} with different backgrounds but shared digits. The digits of the different modalities are randomly translated, so we cannot predict their location across modalities.
Bimodal CelebA extends the CelebA dataset \citep{liu_deep_2015} with an additional text modality based on the attributes describing the faces.
Similarly, the CUB image-captions dataset extends the Caltech bird dataset \citep{WahCUB_200_2011} with human-generated captions describing the images.
Please see \cref{app:experiments} for more details regarding the datasets.

\noindent
\textbf{Baselines.} We evaluate our proposed method against a set of jointly-trained independent VAEs \citep[\textit{independent}, ][]{kingma2013}, different aggregation-based multimodal VAEs, and an aggregation-based multimodal VAE with additional modality-specific latent spaces.
For the set of independent VAEs, there is no interaction or regularization between the different modalities during training.
For the aggregation-based multimodal VAEs, we use a multimodal VAE with a joint posterior approximation function.
We evaluate four different aggregation functions: a simple averaging \citep[\textit{AVG}, ][]{hosoya2018}, a product-of-experts \citep[\textit{PoE}, ][]{wu_multimodal_2018}, a mixture-of-experts \citep[\textit{MoE}, ][]{shi_variational_2019}, and a mixture-of-products-of-experts \citep[\textit{MoPoE}, ][]{sutter2021}.
For the multimodal VAE with modality-specific subspaces, we use MMVAE+ method \citep[\textit{MMVAE+}, ][]{palumbo_mmvae_2022}.
We train all VAE methods as $\beta$-VAEs \citep{higgins_beta-vae_2016}, where $\beta$ is an additional hyperparameter weighting the rate term $R$ of the VAE (see \cref{sec:method}).
\Cref{app:experiments_polymnist} provides the implementation details of the proposed method and the baseline alternatives.

\begin{figure*}
    \centering
    \begin{subfigure}[t]{\textwidth}
        \hspace{1.125cm}
        \includegraphics[width=0.835\textwidth]{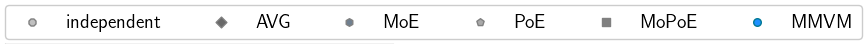}
    \end{subfigure}
    \centering
    \begin{subfigure}[t]{0.425\textwidth}
        \includegraphics[width=1.0\textwidth]{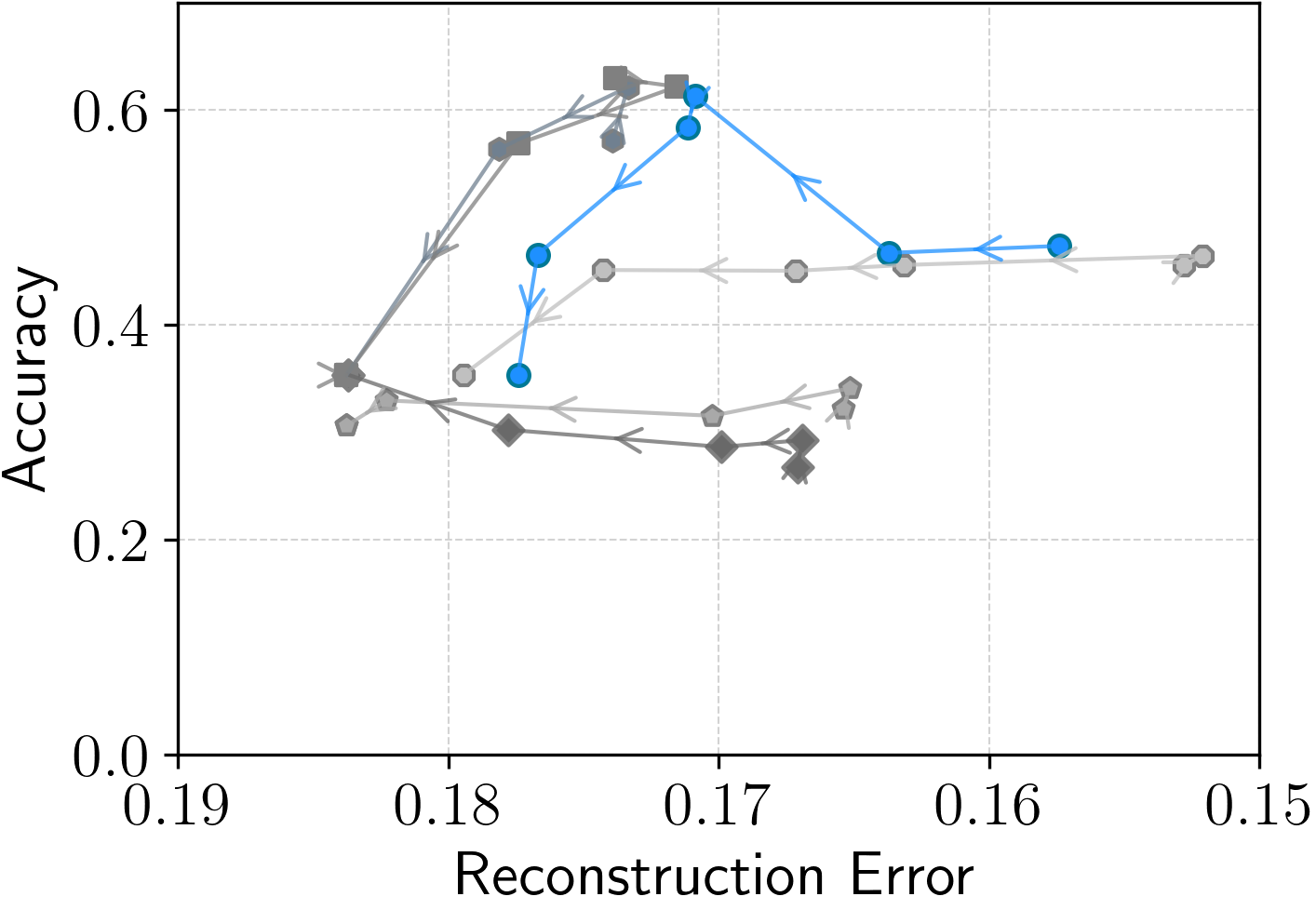}
        \caption{Latent Representation Classification}
        \label{fig:exp_rats_downstream_recloss}
    \end{subfigure}
    \hspace{0.5cm}
    \centering
    \begin{subfigure}[t]{0.425\textwidth}
        \includegraphics[width=1.0\textwidth]{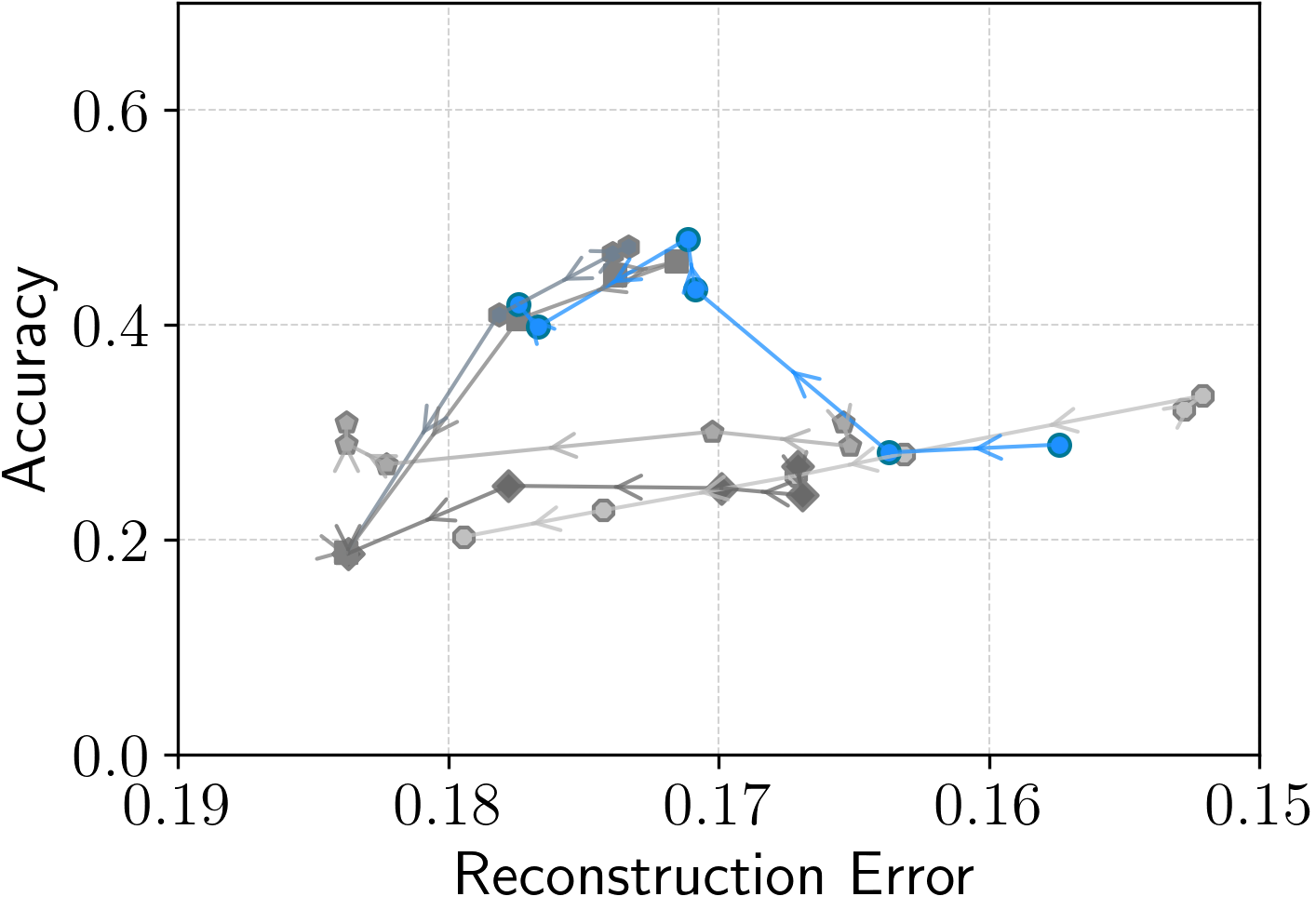}
        \caption{Conditional Generation Coherence}
        \label{fig:exp_rats_coherence_recloss}
    \end{subfigure}
    \hspace{0.25cm}
    \caption{
    Results based on a memory experiment conducted on five rats, each regarded as a separate modality.
    We report the performance of the latent representation classification and the conditional generation coherence against the reconstruction loss for different $\beta$ values for the different VAE methods.
    Every point in the figures represents a specific $\beta$ value, where $\beta = (10^{-5}, 10^{-4}, 10^{-3}, 2.5\times 10^{-3}, 5\times 10^{-3}, 10^{-2})$.
    An optimal model would be in the top right corner.
    }
    \label{fig:exp_rats}
    \vspace{-0.5cm}
\end{figure*}

\noindent
\textbf{Evaluation.} 
We test the methods' ability to infer meaningful representation when only a subset of modalities is available.
In addition, we evaluate all methods in terms of their data imputation performance, where we withhold a subset of modalities at test time and conditionally generate them from the shared latent representations. In this imputation task, we assess whether the generated modalities are both of high quality and display the expected shared information, which we refer to as \emph{coherence}. 
We assess the quality of the learned latent representations using linear classifiers trained on representations of the training set and the coherence using nonlinear classifiers trained on original samples of the training set\footnote{We use the same evaluation procedure as in previous work \citep[e.g.,][]{shi_variational_2019,sutter2021}}.
We use the reconstruction error as a proxy for how well each method learns the underlying data distribution.
We assess each method by relating their achieved reconstruction error to either the learned latent representation 
classification or the coherence (\cref{fig:exp_benchmarks}).
We evaluate all methods for multiple values of $\beta$, where the average performance over multiple seeds with a single $\beta$ value leads to a single point in \cref{fig:exp_benchmarks}.
Evaluating the methods for different values of $\beta$ considers that the optimal $\beta$ value is model- and data-dependent.
In addition, increasing $\beta$ emphasizes a more structured latent space \citep{higgins_beta-vae_2016}. Hence, highlighting the dynamics between reconstruction error and classification performance for different multimodal objectives provides additional insights.
We chose $\beta \in \{ 2^{-8}, \ldots, 2^3 \}$ on the PolyMNIST dataset, $\beta \in \{ 2^{-5}, \ldots, 2^4 \}$ on the CelebA dataset, and $\beta \in \{ 2^{-2}, \ldots, 2^2 \}$ on the CUB dataset.
In all figures, the arrows go from small to large values of $\beta$.
See \Cref{app:experiments_evaluation} for more details on the evaluation metrics.

\noindent
\textbf{Results.}
\Cref{fig:exp_benchmarks} shows that the proposed MMVM VAE consistently outperforms the other VAE-based methods (\textit{independent, AVG, MoE, PoE, MoPoE}) on all datasets and both tasks.
We can show that our method overcomes the limitations of aggregation-based multimodal VAEs on translated PolyMNIST described in \citet{daunhawer_self-supervised_2020}.
Also, MMVM VAE can learn meaningful representations and generate coherent samples across different modalities while achieving high reconstruction quality for both text-image datasets bimodal CelebA and CUB image-captions.
The coherent conditional generation is especially surprising as the proposed MMVM VAE decoder is never confronted with a sample from another modality.
For all benchmark datasets, the proposed MMVM either achieves better latent representation classification and coherence performance with a similar reconstruction loss or lower reconstruction loss with a similar classification performance than other multimodal VAE approaches.
In summary, we can show that the newly proposed MMVM VAE overcomes the limitations of previous aggregation-based approaches to multimodal learning (see \cref{sec:related_work}) and outperforms previous works on all three benchmark datasets.
We provide more results on the benchmark datasets in \cref{app:experiments}.

\subsection{Hippocampal Neural Activities}
\label{sec:exp_rateegs}

\textbf{Dataset.}
Temporal organization is crucial to memory, affecting various perceptual, cognitive, and motor processes. While we have made progress in understanding the brain's processing of the spatial context of memories, our knowledge of their temporal structure is still very limited. To this end, neuroscientists have recorded neural activity in the hippocampus of rats performing a complex sequence memory task \citep{allen16,shahbaba_hippocampal_2022}.
More specifically, this study investigates the temporal organization of memory and behavior by recording neural activity from the dorsal CA1 region of the hippocampus. 
Briefly, the task involves presenting rats with a repeated sequence of non-spatial events (four stimuli: odors A, B, C, D) at a single port \citep{shahbaba_hippocampal_2022}.
Since the same experimental setup was conducted across all rats, we consider the rats as different ''modalities'' and apply our proposed MMVM method to extract meaningful latent representations. While the existence (and importance) of subject-specific effects is well-known in neuroscience, such effects tend to be treated as unexplained variance because of the lack of the required analytical tools to extract and utilize this information properly.

\begin{figure*}[t!]
    \centering
    \begin{subfigure}[t]{0.95\textwidth}
        \centering
        \includegraphics[width=0.85\textwidth]{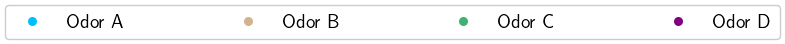}
    \end{subfigure}
    \centering
    \begin{subfigure}[t]{0.22\textwidth}
        \includegraphics[width=1.0\textwidth]{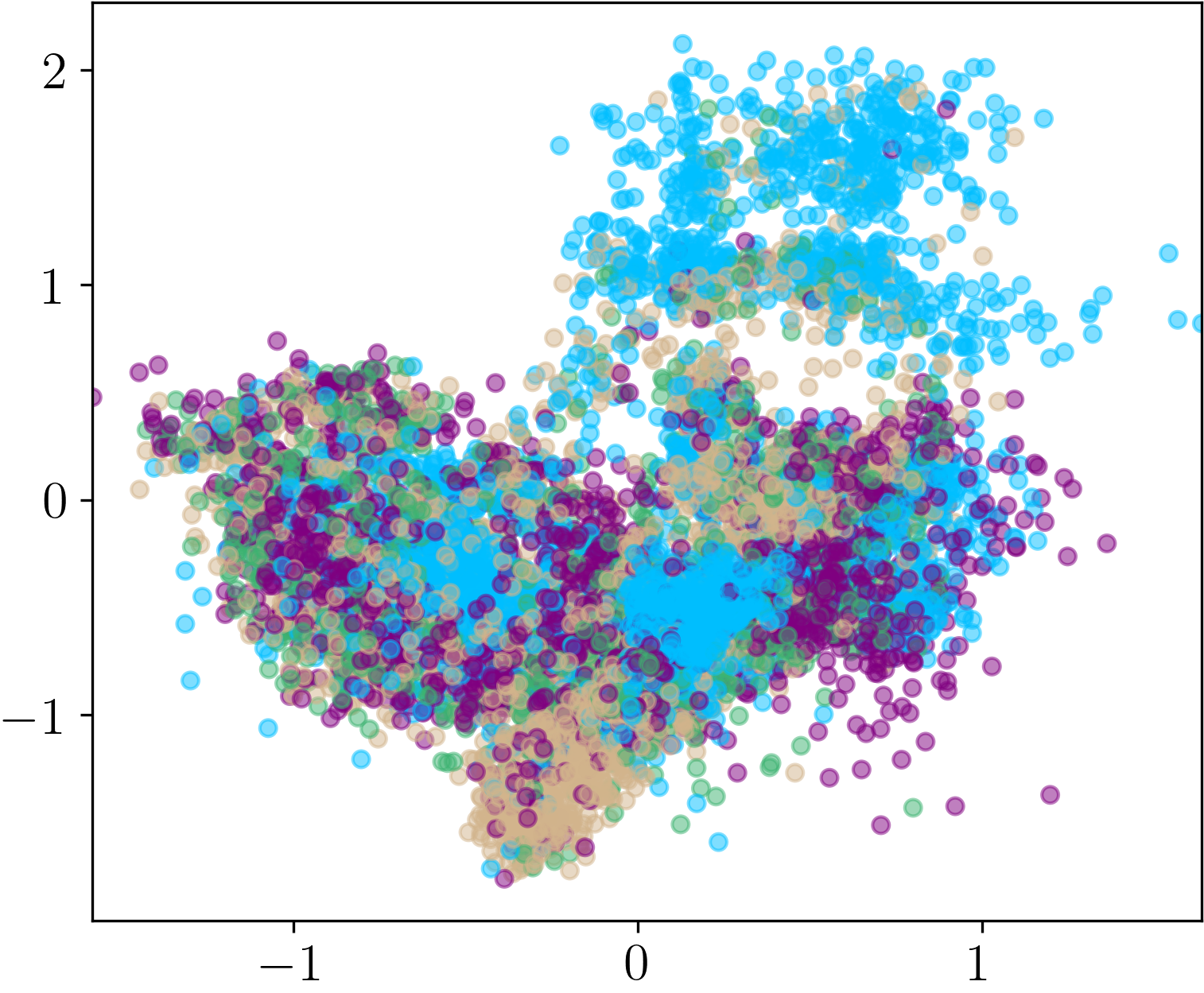}
        \caption{independent}
        \label{fig:rats_latent_odor_independent}
    \end{subfigure}
    \hspace{0.25cm}
    \centering
    \begin{subfigure}[t]{0.22\textwidth}
        \includegraphics[width=1.0\textwidth]{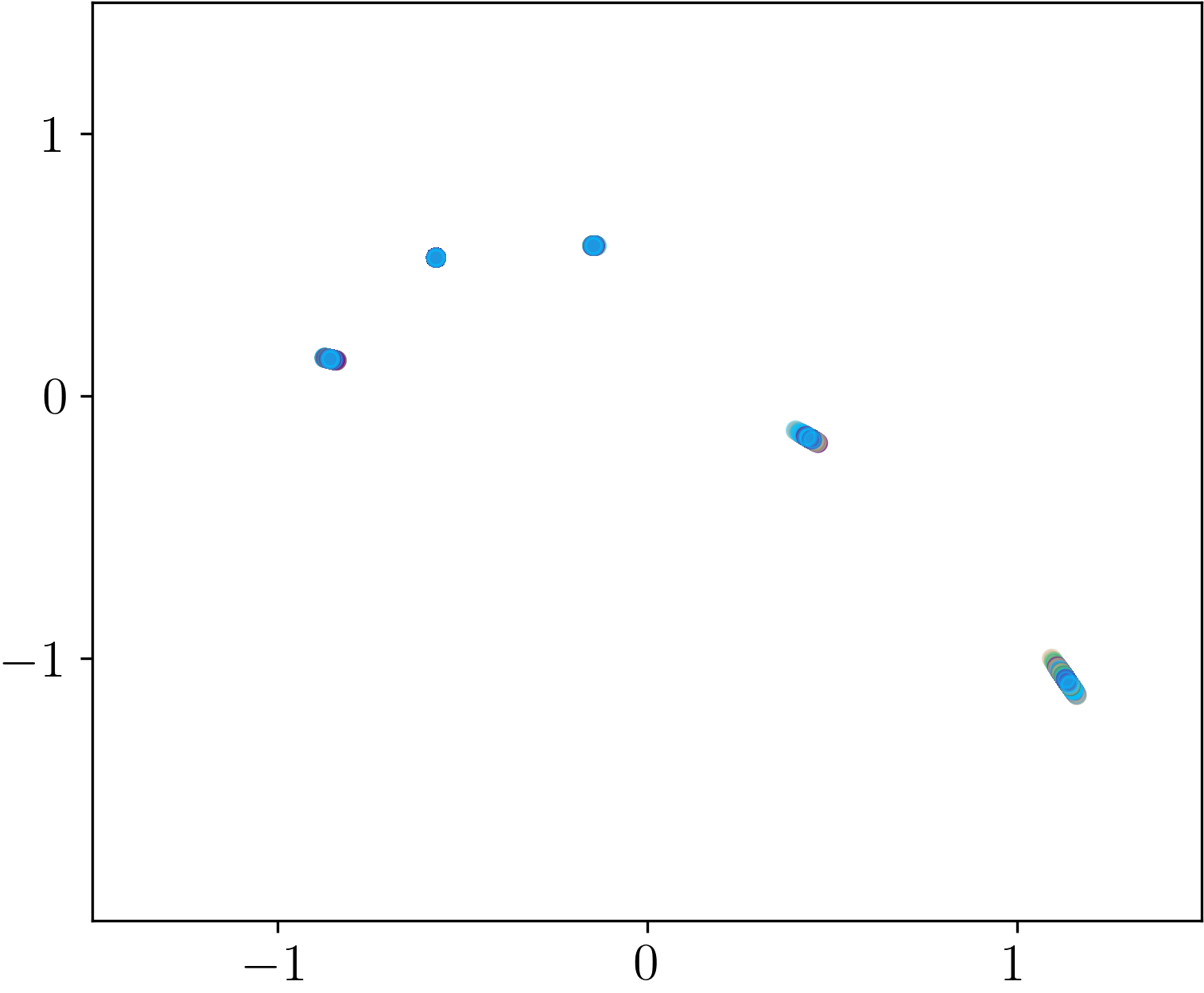}
        \caption{AVG}
        \label{fig:rats_latent_odor_joint}
    \end{subfigure}
    \hspace{0.25cm}
    \centering
    \begin{subfigure}[t]{0.22\textwidth}
        \includegraphics[width=1.0\textwidth]{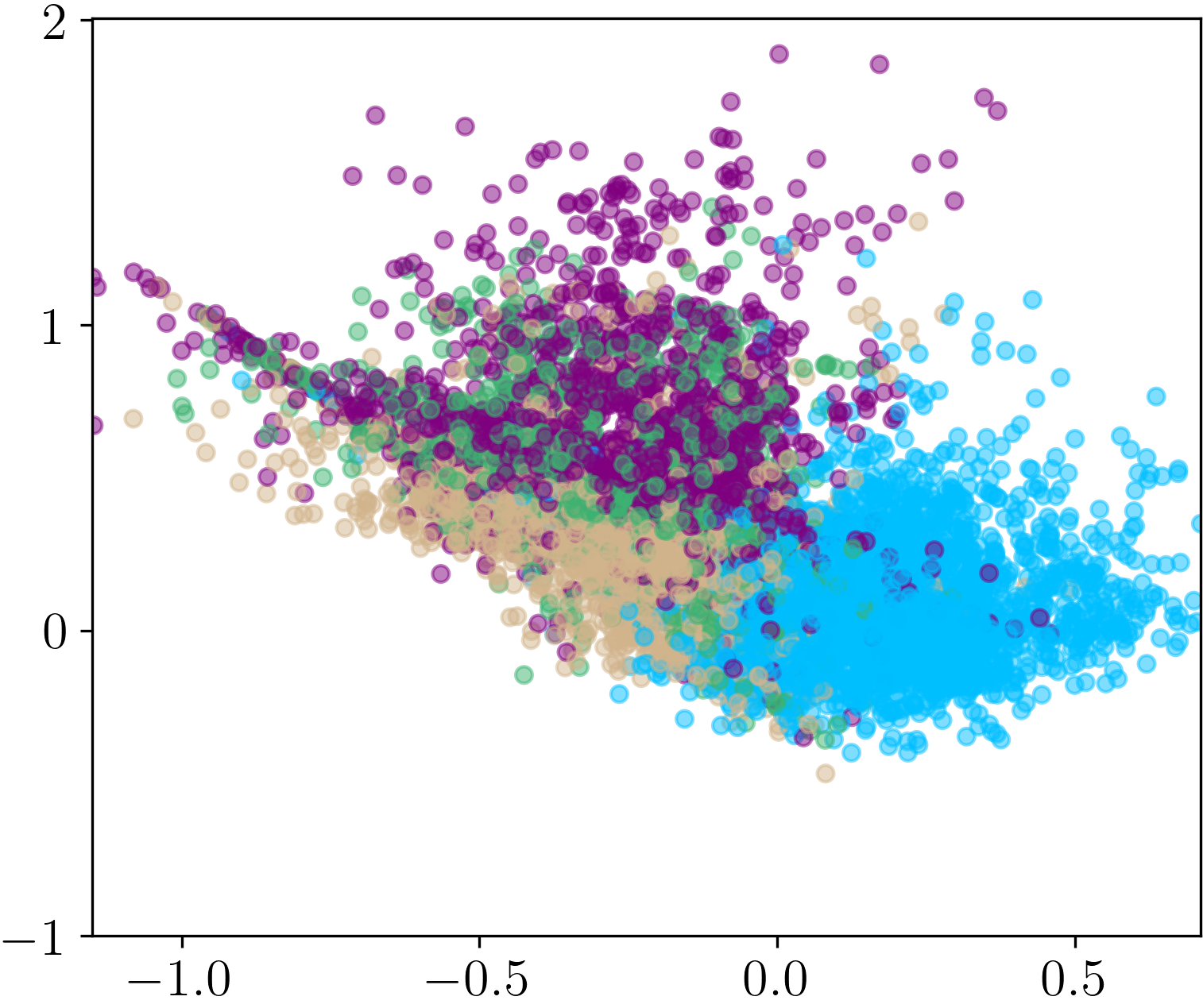}
        \caption{MoPoE}
        \label{fig:rats_latent_odor_mopoe}
    \end{subfigure}
    \hspace{0.25cm}
    \centering
    \begin{subfigure}[t]{0.22\textwidth}
        \includegraphics[width=1.0\textwidth]{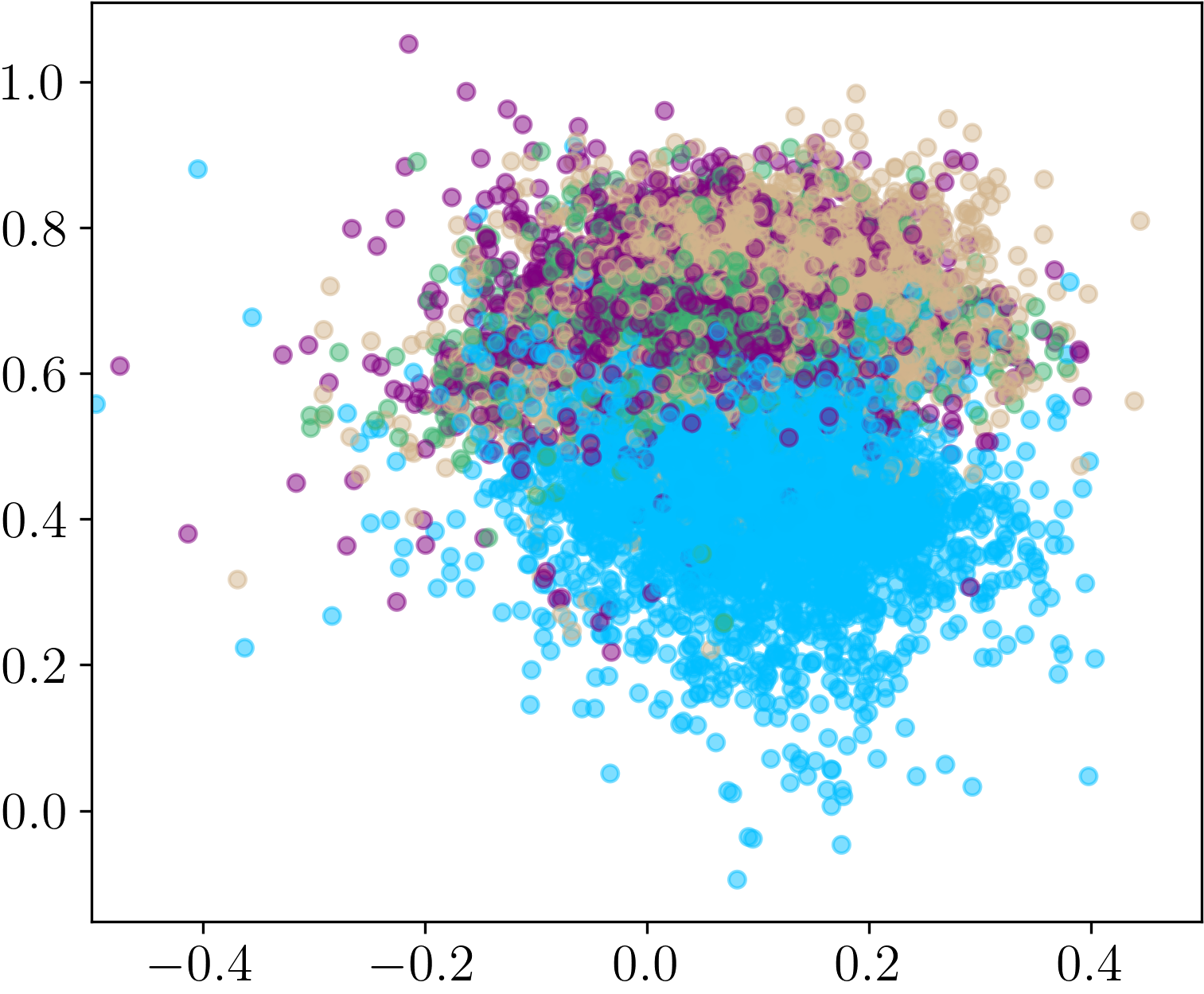}
        \caption{MMVM}
        \label{fig:rats_latent_odor_mixedprior}
    \end{subfigure}
    \hspace{0.25cm}
    \caption{Latent neural representation during a memory experiment. Each model's performance is evaluated based on its own optimal $\beta$ value (0.00001, 0.01, 0.00001, 0.001 for independent, AVG, MoPoE, and MMVM respectively) in terms of the unimodal latent representation classification accuracy according to \Cref{fig:exp_rats_downstream_recloss}. Our method can distinguish the odor stimuli in the latent space with a clear separation of odors similar to MoPoE VAE (4 different colors). Conversely, unimodal and AVG models failed to combine multi-views as the odor separation only occurred within single views.}
    \label{fig:rats_latent_odor}
    \vspace{-0.5cm}
\end{figure*}
\noindent
\textbf{Results.}
Our proposed MMVM method outperforms\footnote{The code for the hippocampal neural activity experiments can be found here: \url{https://github.com/yangmeng96/mmvmvae-hippocampal}} most previous works regarding learned latent representations and conditional generation coherence.
Only the \textit{MoPoE} VAE achieves a classification performance comparable to the MMVM method but with a higher reconstruction loss.
\Cref{fig:exp_rats_downstream_recloss} shows the separation of the latent representation (measured by the accuracy of a multinomial logistic regression classifier) against the reconstruction loss.
Similar to the results on the benchmark datasets, the proposed MMVM VAE outperforms previous works by providing a clear separation of odors in the latent space while maintaining a low reconstruction loss. 
\Cref{fig:exp_rats_coherence_recloss} compares the coherence of conditional generation accuracy against the reconstruction loss. As before, our proposed approach outperforms the alternatives.
The proposed MMVM method allows learning an aligned latent representation across different modalities.
We show the 2-dimensional latent representations for every rat and four VAE encoders in \Cref{fig:rats_latent_odor}.
Each dot is the two-dimensional latent representation of a 100 ms sub-window of one odor trial for one rat and is colored according to its ground truth odor value.
\Cref{fig:rats_latent_odor} shows the odor stimuli separation on the latent space and how good MMVM VAE is in separating the odors. 
At the same time, two baseline models fail to extract the shared information between rats. Although it shows separation in some views, the independent model does not provide a connection between views. The five tiny clusters in \Cref{fig:rats_latent_odor_joint} show that, instead of showing a clear odor separation on the latent space, the \textit{AVG} VAE separated the data by rats.
In other words, the five rats' latent representations were far from each other, so the \textit{AVG} VAE failed to connect the five views.
See also \cref{app:experiments_rats} for more results.

\subsection{MIMIC-CXR}
\label{sec:exp_mimic}

\textbf{Dataset.} To assess the performance of our approach in a real-world setting, we evaluate the proposed MMVM method on the automated analysis of chest X-rays, a common and critical medical task. For this purpose, we use the MIMIC-CXR dataset \citep{johnson2019mimic}, a well-established and extensive collection of chest X-rays.
The dataset reflects real clinical challenges with varying image quality due to technical issues, patient positioning, and obstructions. The dataset includes different views, which provide complementary information valuable for improving diagnostic~\citep{raoof2012interpretation}. In this work, we consider frontal and lateral images as two modalities (see \cref{app:mimiccxr} for further details). 
Each set of X-rays is labeled with different cardiopulmonary conditions, which have been automatically extracted from the associated radiology reports~\citep{irvin2019chexpert}. This results in instances with incomplete label sets~\citep{Haque2021.07.30.21261225}, which presents a challenge for fully supervised approaches and motivates the need for self-supervised methods instead.

\begin{table}
\caption{VAE latent representation quality evaluation. Average AUROC [in \%] over three seeds of the two unimodal latent representations ($z_F$ and $z_L$) on a subset of MIMIC-CXR labels. The latent representations learned by the MMVM VAE lead to better classification performance compared to the other VAEs and are competitive with the fully-supervised method. Full results in Appendix~\ref{app:mimiccxr_addresults}.}
\vspace{2pt}
\begin{tabular}{l>{\centering}m{1.32cm}>{\centering}m{1.56cm}>{\centering}m{1.82cm}ccc>{\centering}m{0.5cm}}
\toprule
 & All~labels & No~Finding & Cardiomegaly & Edema & Lung Lesion & Pneumonia \\
\midrule
\textit{supervised} & \textit{\small70.5 \scriptsize $\pm$12.1} & \textit{\small73.0 \scriptsize $\pm$1.4} & \textit{\textbf{\small80.3} \scriptsize $\pm$1.4} & \textit{\textbf{\small87.1} \scriptsize $\pm$0.9} & \textit{\small54.8 \scriptsize $\pm$2.5} & \textit{\textbf{\small61.3} \scriptsize $\pm$0.4} \\
\midrule
independent & \small68.0 \scriptsize $\pm$8.3 & \small75.3 \scriptsize $\pm$1.4 & \small73.5 \scriptsize $\pm$2.8 & \small79.2 \scriptsize $\pm$3.9 & \small60.1 \scriptsize $\pm$1.2 & \small55.8 \scriptsize $\pm$0.8 \\
AVG & \small69.8 \scriptsize $\pm$8.5 & \small76.3 \scriptsize $\pm$1.5 & \small76.1 \scriptsize $\pm$2.4 & \small81.3 \scriptsize $\pm$3.3 & \small60.4 \scriptsize $\pm$1.4 & \small57.3 \scriptsize $\pm$0.6 \\
MoE & \small68.9 \scriptsize $\pm$8.7 & \small76.5 \scriptsize $\pm$0.7 & \small74.9 \scriptsize $\pm$1.6 & \small80.2 \scriptsize $\pm$2.3 & \small59.6 \scriptsize $\pm$1.3 & \small56.9 \scriptsize $\pm$1.0 \\
MoPoE & \small70.3 \scriptsize $\pm$8.9 & \small77.2 \scriptsize $\pm$0.2 & \small76.3 \scriptsize $\pm$0.8 & \small82.1 \scriptsize $\pm$1.2 & \small60.8 \scriptsize $\pm$0.6 & \small57.8 \scriptsize $\pm$0.7 \\
PoE & \small70.4 \scriptsize $\pm$8.3 & \small75.9 \scriptsize $\pm$1.3 & \small76.7 \scriptsize $\pm$1.9 & \small81.8 \scriptsize $\pm$2.7 & \small61.3 \scriptsize $\pm$2.1 & \small57.8 \scriptsize $\pm$0.4 \\
MMVM & \textbf{\small73.1} \scriptsize $\pm$8.8 & \textbf{\small78.7} \scriptsize $\pm$0.4 & \small79.6 \scriptsize $\pm$0.9 & \small85.3 \scriptsize $\pm$1.0 & \textbf{\small63.6} \scriptsize $\pm$0.7 & \small59.5 \scriptsize $\pm$0.7 \\
\bottomrule
\end{tabular}
\label{tab:exp_mimic_cxr}
\vspace{-0.25cm}
\end{table}

\noindent
\textbf{Results.} We evaluate\footnote{The code for the MIMIC-CXR experiments can be found here: \url{https://github.com/agostini335/mmvmvae-mimic}} the quality of the unimodal latent representations of the MMVM VAE by comparing them with those learned by a set of jointly-trained independent VAEs (\textit{independent}) as well as with representations from other multimodal VAEs that use aggregation-based approaches (\textit{AVG}, \textit{MoE}, \textit{MoPoE}, and \textit{PoE}) (see \cref{sec:exp_benchmarks} for more details). We do this by training binary random forest classifiers independently for each method and all labels on the inferred representations of a subset of the training set. \Cref{tab:exp_mimic_cxr} shows the AUROC for these classifiers, averaged over three seeds and both unimodal representations for a subset of labels. In addition, we also report the performance of a deep nonlinear network trained in a fully supervised manner (\textit{supervised}) on the same train/test split for reference purposes. Detailed experiment information can be found in \Cref{app:mimiccxr_addresults}, with extensive results for each modality and label available in \Cref{tab:mimic_cxr_frontal} and~\Cref{tab:mimic_cxr_lateral}.
Overall, our approach shows performance improvements across all labels compared to the other VAEs and is highly competitive with the fully-supervised method, surpassing it in average performance over all labels.
Examining each unimodal representation separately provides further insights into the VAEs' ability to leverage information from other modalities during training.   
For example, in the \textit{Cardiomegaly} prediction task,
the MMVM VAE's lateral representations~$z_L$ slightly outperform the PoE VAE's frontal representations~$z_F$ (MMVM $z_L$: 78.7\%, PoE $z_F$: 78.5\%), even though the lateral modality seems generally less informative (supervised $x_L$: 79.0\%, $x_F$: 81.7\%) for this task. The same observation can be made for other labels (see detailed results and discussion in \cref{app:mimiccxr_addresults}). This illustrates the MMVM VAE's ability to soft-share information between modality-specific latent representations during training, thereby enhancing the representation quality of the weaker modality.

\section{Broader Impact \& Limitations}
\label{sec:limitations}
This paper aims to advance the field of Machine Learning by providing a natural and fundamental solution for integrating data across modalities. The proposed approach can be applied to various scientific and engineering problems with a potentially significant societal impact.
In the field of neuroscience specifically, our method could allow neuroscientists to leverage individual differences in brain activity and behavior to understand basic information processing in the brain, as well as to capture distinct longitudinal changes to understand how it is affected in disease. In translational research, it could help identify subjects more susceptible to disease or potentially more responsive to treatment.
While we can show that the proposed method learns better representations and generates more coherent samples, we cannot directly generate random samples anymore (see also \cref{sec:method}).
Although we show results on two real-world datasets, \cref{sec:exp_mimic,sec:exp_rateegs}, additional experiments on even larger scale multimodal datasets would help further evaluate the proposed method, e.g. \citep{Damen2018EPICKITCHENS,wu2023nextgpt}. However, training and evaluating our methods on such datasets requires immense computing resources.

\section{Discussion \& Conclusion}
\label{sec:conclusion}
In this work, we presented a new multimodal VAE, called MMVM VAE, based on a data-dependent multimodal variational mixture-of-experts prior.
By focusing on a multimodal prior, the proposed MMVM VAE overcomes the limitations of previous methods with over-restrictive definitions of joint posterior approximations.
The proposed objective leveraging the MMVM prior takes inspiration from contrastive learning objectives, where the goal is to minimize the similarity between positive pairs while maximizing the similarity between negative pairs \citep[see, e.g., ][]{chen_simple_2020,tian2020}.
In the MMVM objective, we minimize the similarity between different modalities of the same sample (positive pairs) via the regularizing term in the objective, whereas the second part of the objective, the reconstruction loss, prevents degenerate solutions.

In extensive experiments on three different benchmark datasets, we show that MMVM VAE outperforms previous works in terms of learned latent representations as well as generative quality and coherence of missing modalities.
We also demonstrate its efficacy on two challenging real-world applications and show improved performance compared to previous VAEs and even a fully-supervised approach.
Future research could involve studying the representation-distortion tradeoff from an information-theoretical perspective~\citep{yang2022towards, yang2023estimating} and applying similar ideas to more powerful multimodal generative models and representation learning methods.
We see a lot of potential in applying the MMVM regularization to other multimodal and multiview objectives, e.g., as an additional guidance signal for diffusion models.
While masked modeling has shown impressive results as an objective for representation learning, current multimodal masked modeling objectives concatenate the embedding tokens coming from different modalities \citep[see, e.g., ][]{bachmann2022}.
Adding the MMVM regularization objective would offer an interesting alternative to sharing information from different modalities.

\section*{Acknowledgements}
TS, AA, and DC are supported by the grant \#2021-911 of the Strategic Focal Area “Personalized Health and Related Technologies (PHRT)” of the ETH Domain (Swiss Federal Institutes of Technology).
BS and NF received funding from the NIH award R01-MH115697 and NSF award NCS-FR-2319618.
SM acknowledges support from the National Science Foundation (NSF) under an NSF CAREER Award IIS-2047418 and IIS-2007719, the NSF LEAP Center, by the Department of Energy under grant DE-SC0022331, the IARPA WRIVA program, the Hasso Plattner Research Center at UCI, and by gifts from Qualcomm and Disney.

\bibliographystyle{abbrvnat}
\bibliography{references}


\newpage
\appendix
\onecolumn
\section{MMVM VAE}
\label{app:mm_vamp_vae}

\subsection{Bound on the proposed objective}
The objective $\mathcal{E}(\bm{x}_m)$ for a single modality $m$ is given by
\begin{align}
    \mathcal{E}(\bm{x}_m) =&~ \mathbb{E}_{q_\phi^m (\bm{z}_m \mid \bm{x}_m)} \left[ \log p_\theta (\bm{x}_m \mid \bm{z}_m) \right] - KL \left[ q_\phi(\bm{z}_m \mid \bm{x}_m) \mid \mid p_\theta (\bm{z}_m) \right] \\
    \label{eq:elbo_no_reg}
    \leq &~ \mathbb{E}_{q_\phi^m (\bm{z}_m \mid \bm{x}_m)} \left[ \log p_\theta (\bm{x}_m \mid \bm{z}_m) \right] \\
    \label{eq:elbo_no_reg_zero_variance}
    \leq &~ \log p_\theta (\bm{x}_m \mid \bm{\mu}_m (\bm{x}_m)),
\end{align}
where $\bm{\mu}_m (\bm{x}_m) = f_m(\bm{x}_m)$ is the output of the optimized (mean) encoder $f_m(\cdot)$ of modality $m$.
\Cref{eq:elbo_no_reg} follows from the non-negativity of the KL divergence.
Without regularization term, the maximizing distribution is a delta distribution with zero variance (\cref{eq:elbo_no_reg_zero_variance}).
\Cref{eq:elbo_no_reg_zero_variance} equals the maximum-likelihood version of the proposed MMVM VAE (for a single modality).
Put differently, the MSE of a "vanilla" autoencoder is an upper bound on the objective $\mathcal{E}(\bm{x}_m)$ for any prior distribution $p_\theta (\bm{z}_m)$, hence, also for the newly introduced MMVM prior distribution $h(\bm{z}_m \mid \bm{X})$.

Let us have a look at when the KL term actually vanishes given a MMVM prior distribution $h(\bm{z}_m \mid \bm{X})$.
The KL term can only vanish if all posterior approximation $q_\phi^{\tilde{m}} (\bm{z}_m \mid \bm{x}_{\tilde{m}})$ map to a single mode $q_\phi^m(\bm{z}_m \mid \bm{x}_m)$.
In that case
\begin{align}
    h(\bm{z}_m \mid \bm{X}) = \frac{1}{M} \sum_{\tilde{m}=1}^M q_\phi^{\tilde{m}} (\bm{z}_m \mid \bm{x}_{\tilde{m}}) = q_\phi^m(\bm{z}_m \mid \bm{x}_m)
\end{align}
and $KL \left[ q_\phi^m(\bm{z}_m \mid \bm{x}_m) \mid \mid h(\bm{z}_m \mid \bm{X}) \right] = 0$.

Another scenario is when $q_\phi^m(\bm{z}_m \mid \bm{x}_m)$ and $h(\bm{z}_m \mid \bm{x}_m)$ have disjoint modes.
Hence, $q_\phi^m(\bm{z}_m \mid \bm{x}_m)$ is only a match to itself.
In this case, we have
\begin{align}
    \mathbb{E}_{q_\phi^m(\bm{z}_m \mid \bm{x}_m)} \left[\log \left( \frac{1}{M} \sum_{\tilde{m}=1}^M q_\phi^{\tilde{m}}(\bm{z}_m \mid \bm{x}_{\tilde{m}}) \right) \right] \approx &~ \mathbb{E}_{q_\phi^m(\bm{z}_m \mid \bm{x}_m)} \left[ \log \left( \frac{1}{M} q_\phi^m(\bm{z}_m \mid \bm{x}_m) \right) \right] \\
    =&~ - \log M + \mathbb{E}_{q_\phi^m(\bm{z}_m \mid \bm{x}_m)} \left[ \log q_\phi^m(\bm{z}_m \mid \bm{x}_m) \right]
\end{align}
So, our objective will still be (up to a constant) the maximum likelihood objective, leading the variances to shrink to zero.
In this case, the objective will reduce to the limiting independent VAE, where the modalities do not "talk" to each other and there is no multimodal alignment.

In the most interesting case, there will be a non-trivial overlap between $q_\phi^m(\bm{z}_m \mid \bm{x}_m)$ and $h(\bm{z}_m \mid \bm{x}_m)$, i. e. between the different unimodal posterior approximations, leading to a multimodal alignment through the soft sharing that we wish to see.

In addition, \cref{fig:objective_vs_msebound} empirically shows that the negative mean squared error (MSE) of the vanilla autoencoder (AE) upper bounds the proposed objective $\mathcal{E}$.
We show that lowering the $\beta$ value of the regularizer $R$ (see \cref{sec:method}) approximates the negative MSE of the vanilla AE.

\begin{figure}
    \centering
    \includegraphics[width=0.5\textwidth]{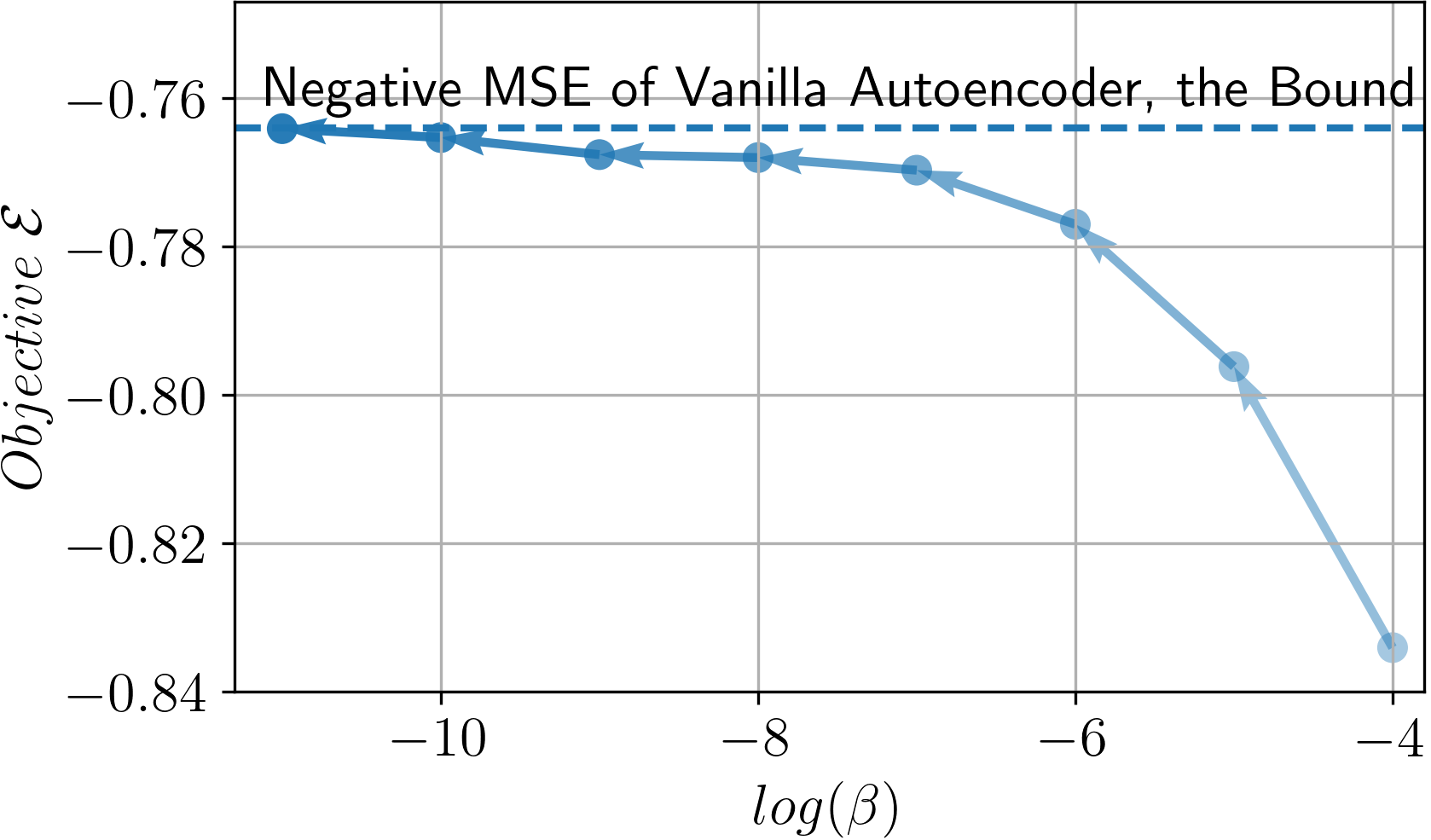}
    \caption{
    We compare the achieved values of the proposed objective $\mathcal{E}$ to the vanilla Autoencoder's negative mean squared error (MSE).
    Lowering the $\beta$ value of the regularizer $R$ in the objective (see \cref{sec:method}) approximates the negative MSE bound provided by the vanilla AE.
    This proves empirically that the negative MSE of the vanilla AE indeed upper bounds the proposed objective $\mathcal{E}$.
    }
    \label{fig:objective_vs_msebound}
\end{figure}

\section{Experiments}
\label{app:experiments}

\subsection{Dataset Licences}
\label{app:experiments_dataset_licenses}
\begin{itemize}
    \item PolyMNIST: originally published in \citep{sutter2021}, downloaded the data from \url{https://drive.google.com/drive/folders/1lr-laYwjDq3AzalaIe9jN4shpt1wBsYM?usp=sharing} and the code from \url{https://github.com/thomassutter/MoPoE}, published under the MIT license
    \item Bimodal CelebA: originally published in \citep{sutter_multimodal_2020}, downloaded from \url{https://drive.google.com/drive/folders/1lr-laYwjDq3AzalaIe9jN4shpt1wBsYM?usp=sharing}, licensed under the MIT license. The original CelebA dataset was published in \citep{liu_deep_2015}, license not found.
    \item CUB image-captions: originally published in \citep{shi_variational_2019}, downloaded from \url{http://www.robots.ox.ac.uk/~yshi/mmdgm/datasets/cub.zip}, licensed under GPL 3.0.
    \item MIMIC-CXR: originally published in \citep{johnson2019mimic}, downloaded from \url{https://physionet.org/content/mimic-cxr/2.0.0/}, licensed under PhysioNet Credentialed Health Data License 1.5.0 (see \url{https://physionet.org/content/mimic-cxr/view-license/2.0.0/}).
    \item MIMIC-CXR-JPG: originally published in \citep{johnson2024mimic}, downloaded from \url{https://physionet.org/content/mimic-cxr-jpg/2.1.0/}, licensed under PhysioNet Credentialed Health Data License 1.5.0 (see \url{https://physionet.org/content/mimic-cxr-jpg/2.1.0/}).
    \item Hippocampal Neural Activity data: originally published in \citep{shahbaba_hippocampal_2022}, downloaded from \url{https://datadryad.org/stash/dataset/doi:10.7280/D14X30}, licensed under CC0 1.0 Universal (CC0 1.0) Public Domain Dedication (see \url{https://creativecommons.org/publicdomain/zero/1.0/})
\end{itemize}

\subsection{Evaluation Details}
\label{app:experiments_evaluation}
We evaluate the different methods based on the coherence of their imputed samples, the quality of their latent representations, and their reconstruction error.
We assume access to the full set of modalities during training, but we do not make this assumption at test time.
Hence, there is a need for methods that can conditionally generate samples of these missing modalities, given the available modalities.
In other words, we want to be able to impute missing modalities.
Imputed modalities should not only be of high generative quality but also display the same shared information as the available modalities (i.e., be coherent).
For example, if we generate a sample of modality $\bm{x}_2$ based on modality $\bm{x}_1$ from the PolyMNIST dataset (see \Cref{fig:exp_data_polymnist_translated75}), the generated sample of modality $\bm{x}_2$ should contain the same digit information as modality $\bm{x}_1$ but show the background of modality $\bm{x}_2$.

\subsubsection{Latent Representation Evaluation}
We assess the learned representations based on subsets of modalities and not the full set.
The quality of the representations serves as a proxy of how useful the learned representations are for additional downstream tasks that are not part of the training objective.
Hence, high-quality representations of subsets of modalities are also the basis for conditionally generating coherent samples.
We assess their quality by using the classification performance of linear classifiers.
We train independent classifiers on the unimodal representations of the training set and evaluate them on unimodal test set representations.
To assess the learned latent representation, we train a logistic regression classifier on $10000$ latent representations of the training set.
The accuracy is computed on all representations of the test set.

\subsubsection{Coherence Evaluation}
The coherence of conditionally generated samples shows how well the content of the imputed modalities aligns with the content of the available modalities in terms of the shared information.
We evaluate the coherence using ResNet-based classifiers \citep{he_deep_2016} that are trained on samples from the original training set of every modality.
Using the described procedure, generated samples have to be visually similar to the original samples to have high coherence.
Otherwise, the nonlinear classifier will not be able to predict digits correctly.

To compute the coherence of conditionally generated samples, we train additional deep classifiers on original samples of the training set.
We use a ResNet-based non-linear classifier that is trained on the full original training set.
The prediction of this classifier is then used to determine the class (on PolyMNIST: digit) of the conditionally generated sample of a missing modality.
The nonlinear classifier reaches an accuracy of above 98\% on the original test set.
For other datasets, the nonlinear classifier is trained to predict the shared information of the multimodal dataset.
Hence, it serves as a good oracle for determining the digit of generated samples.

\paragraph{Conditional Generation with the MMVM VAE and the independent VAEs}
To generate modality $\bm{x}_{\tilde{m}}$ conditioned on modality $\bm{x}_m$, we proceed as follows:
\begin{enumerate}
    \item We encode modality $\bm{x}_m$ using the encoder $q_\phi^m (\bm{z}_m \mid \bm{x}_m)$
    \item We sample a latent vector $\bm{z} \sim q_\phi^m (\bm{z}_m \mid \bm{x}_m)$
    \item We input the latent representation $\bm{z}$ into the decoder $p_\theta^{\tilde{m}} (\bm{x}_{\tilde{m}} \mid \bm{z})$ of modality $\bm{x}_{\tilde{m}}$
\end{enumerate}
To compute the coherence numbers reported, we perform the above steps for every modality $\bm{x}_m$ where $m \in \{ 1, \ldots, M \}$ and average the achieved coherence accuracies.

\paragraph{Conditional Generation with the aggregated multimodal VAEs}
All aggregation-based multimodal VAEs conditionally generate samples in the same way.
Hence, the following procedure applies to the multimodal VAEs used in \cref{sec:experiments} (\citep[AVG, ][]{hosoya2018}, \citep[PoE, ][]{wu_multimodal_2018}, \citep[MoE, ][]{shi_variational_2019}, and \citep[MoPoE, ][]{sutter2021}).
To generate modality $\bm{x}_{\tilde{m}}$ conditioned on modality $\bm{x}_m$, we proceed as follows:
\begin{enumerate}
    \item We encode modality $\bm{x}_m$ using the encoder $q_\phi^m (\bm{z}_m \mid \bm{x}_m)$
    \item We sample a latent vector $\bm{z} \sim q_\phi^m (\bm{z}_m \mid \bm{x}_m)$
    \item We input the latent representation $\bm{z}$ into the decoder $p_\theta^{\tilde{m}} (\bm{x}_{\tilde{m}} \mid \bm{z})$ of modality $\bm{x}_{\tilde{m}}$
\end{enumerate}
If we would have access to a multimodal subset $\bm{X}_A$ consisting of more than one modality, i.e. $|A| > 1$, we would have the joint posterior approximation distribution $q_\phi(\bm{z} \mid \bm{X}_A)$ of the subset $\bm{X}_A$, sample a latent vector from that distribution, i.e. $\bm{z} \sim q_\phi (\bm{z} \mid \bm{X}_A)$, and generate modality $\bm{x}_{\tilde{m}}$ using the decoder $p_\theta^{\tilde{m}} (\bm{x}_{\tilde{m}} \mid \bm{z})$ of modality $\bm{x}_{\tilde{m}}$

\subsubsection{Generative Quality}
The reconstruction error is a proxy for how well every method learns the underlying data distribution.
We evaluate the different VAE models by their achieved reconstruction error against either the learned latent representation classification or coherence (e.g., \cref{fig:exp_benchmarks}).
We do this for multiple values of $\beta$, where the average performance over multiple seeds with a single $\beta$ value leads to a scatter point in the figures.
This way, we can assess the trade-off between accurately reconstructing data samples and inferring shared information.
In addition, we compute the FID \citep{heusel2017gans} values for modalities that can be summarized as natural images in the appendix.
A low FID score correlates with a high-quality generated image.

\subsection{Implementation (General)}
We use the scikit-learn \citep{scikit-learn} package for the linear classifiers to evaluate the learned latent representations.
All code is written using Python 3.11, PyTorch \citep{pytorch} and Pytorch-Lightning \citep{PyTorch_Lightning_2019}.
We base the implementations of the aggregation-based VAE methods on the official implementations.
Hence, we base our implementations on the following repositories:
\begin{itemize}
    \item AVG: \url{https://github.com/HaruoHosoya/gvae}
    \item PoE: \url{https://github.com/mhw32/multimodal-vae-public}
    \item MoE: \url{https://github.com/iffsid/mmvae}
    \item MoPoE: \url{https://github.com/thomassutter/MoPoE}
\end{itemize}

\begin{figure}
    \centering
    \includegraphics[width=0.6\textwidth]{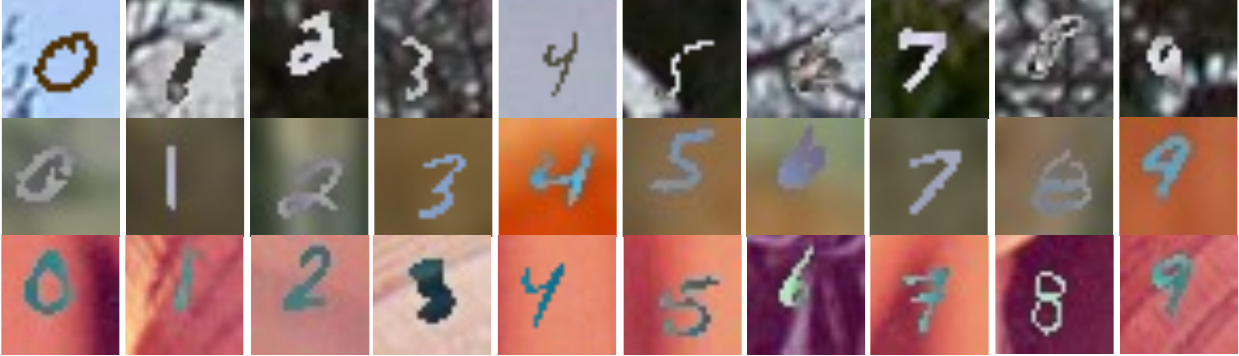}
    \caption{
    PolyMNIST (translated, scale=75\%): every column is a multimodal tuple $\bm{X}$, and every row shows samples of a single modality $\bm{x}_m$.
    We see the random translation between samples by looking at images from a single row or column.
    }
    \label{fig:exp_data_polymnist_translated75}
\end{figure}

\subsection{PolyMNIST}
\label{app:experiments_polymnist}
\subsubsection{Dataset}
The dataset is based on the original MNIST dataset \citep{lecun1998}.
Compared to the original dataset, the digits are scaled down by a factor of $0.75$ such that there is more space for the random translation.
In its original form, the PolyMNIST consists of 5 different modalities.
We only use the first three modalities in this work.
The background of every modality $\bm{x}_m$ consists of random patches of size $28 \times 28$ from a large image.
The digit is placed at a random position of the patch.
We refer to the original publication \citep{sutter2021} for details on the background images.
Using this setup, every modality has modality-specific information given by its background image and shared information given by the digit, which is shared between all modalities.
An additional difficulty compared to the original PolyMNIST is the random translation of the digits.
The dataset can be found at \url{https://github.com/thomassutter/MoPoE}.

\subsubsection{Implementation \& Training}
We use the same network architectures for all multimodal VAEs.
Every multimodal VAE consists of a ResNet-based encoder and a ResNet-based Decoder \citep{he_deep_2016}.
All modalities share the same architecture but are initialized differently.
We assume Gaussian distribution for all unimodal posterior approximations, i.e.
\begin{align}
    q_\phi^m(\bm{z}_m \mid \bm{x}_m) = \mathcal{N}(\bm{z}_m; \bm{\mu}_m, \bm{\sigma}_m \bm{I}),
\end{align}
where the parameters $\bm{\mu}_m$ and $\bm{\sigma}_m$ are inferred using neural networks such that we have
\begin{align}
    q_\phi^m(\bm{z}_m \mid \bm{x}_m) = q_\phi^m(\bm{z}_m; \bm{\mu}_m(\bm{x}_m), \bm{\sigma}_m(\bm{x}_m)) = \mathcal{N}(\bm{z}_m; \bm{\mu}_m(\bm{x}_m), \bm{\sigma}_m(\bm{x}_m))
\end{align}
The conditional data distributions $p_\theta (\bm{x}_m \mid \bm{z}_m)$ are modeled using the Laplace distribution, where the location parameter is modeled with a neural network (decoder) and the scale parameter is set to 0.75 \citep{shi_variational_2019}, i.e.
\begin{align}
    p_\theta (\bm{x}_m \mid \bm{z}_m) = \mathcal{L}(\bm{x}_m; \bm{\mu}_m, \bm{\sigma}_m),
\end{align}
where $\mathcal{L}(\cdot)$ defines a Laplace distribution. 
It follows that
\begin{align}
    p_\theta (\bm{x}_m \mid \bm{z}_m) = p_\theta(\bm{x}_m; \bm{\mu}_m(\bm{z}_m), \bm{\sigma}_m) = \mathcal{L}(\bm{x}_m; \bm{\mu}_m(\bm{z}_m), \bm{\sigma}_m)
\end{align}

We use the method of \citet{hosoya2018} for the implementation of the aggregated VAE.
In this approach, a simplistic version of the joint posterior distribution is chosen where for Gaussian distribution joint posterior approximation $\mathcal{N}(\bm{\mu}_s, \bm{\sigma}_s\bm{I})$ we have the following distribution parameters $\bm{\mu}_s$ and $\bm{\sigma}_s$
\begin{align}
    \bm{\mu}_s = &~ \frac{1}{M} \sum_{m=1}^M \bm{\mu}_m \\
    \bm{\sigma}_s = &~ \frac{1}{M} \sum_{m=1}^M \bm{\sigma}_s
\end{align}
$\bm{\mu}_m$ and $\bm{\sigma}_m$ are the distribution parameters of the unimodal posterior approximations $\mathcal{N}(\bm{\mu}_m, \bm{\sigma}_m \bm{I})$.

During training and evaluation, no weight-sharing takes place, i.e. every modality has its own encoder and decoder.
We use the same architectures as in \citep{daunhawer2022}.
For all experiments on this dataset, we use an Adam optimizer \citep{kingma_adam_2014} with an initial learning rate of $0.0005$, and a batch size of $256$.
We train all models for $500$ epochs.

We use NVIDIA GTX 2080 GPUs for all our runs. Each experiment can be run with $4$ CPU workers and $16$ GB of memory.
An average run takes around $7$ hours. To train all methods used in this paper, we had to train $11 \times 5 \times 6 = 330$ different models:
$11$ different $\beta$ values, $5$ different seeds, and $6$ different methods.
Hence, the total GPU compute time used to generate the results for the PolyMNIST dataset is equal to $330 \times 7 = 2310$ hours.
We---of course---also had to invest development GPU time, which we did not measure.

\subsubsection{Additional Results}
\label{app:experiments_polymnist_additional_results}
We generated \Cref{fig:exp_benchmarks_downstream_polymnist,fig:exp_benchmarks_coherence_polymnist} by plotting the classification accuracy of a linear classifier, which we trained on the learned latent representation, against the reconstruction error on the test set.
For the learned latent representation, we train a classifier on the unimodal latent representations.
For the aggregated VAE, this means that we train the classifier on samples of the unimodal posterior approximations and not the joint posterior approximations.
Using this procedure, we test the different methods according to their performance in case of missing data, e.g., we only have access to a single modality instead of the full set at test time.
For the reconstruction loss, however, we computed the error given the full set of modalities.
The idea for \cref{fig:exp_benchmarks_downstream_polymnist,fig:exp_benchmarks_coherence_polymnist} is to compare the reconstruction quality (i.e., how well can we learn the data distribution?) against metrics that are related to the "generative factors" of the data and relate the different modalities to each other, i.e. the shared information of a multimodal dataset.

In \cref{fig:exp_polymnist_condrec}, we evaluate the performance of individual modalities in case of missing modalities.
For that, we reconstruct every modality if it was the only modality available at test time.
Hence, the modalities in the aggregated VAE have to be reconstructed based on the unimodal posterior approximations and not the joint posterior approximation.
For the independent VAEs and the MMVM-VAE, the reconstruction of a modality is only based on its own unimodal posterior approximation.
Hence, for the latter two methods, nothing changes in this setting.
The performance of the learned latent representation and the coherence of generated samples are evaluated in the same way as in \cref{fig:exp_benchmarks_downstream_polymnist,fig:exp_benchmarks_coherence_polymnist}.

\Cref{fig:exp_polymnist_condrec} shows that the reconstruction error of the aggregated VAE increases a lot if every modality needs to reconstruct itself.
Interestingly, we can see that the "self-reconstruction error" (the x-axis in \cref{fig:exp_polymnist_downstream_condrecloss,fig:exp_polymnist_coherence_condrecloss}) decreases with an increasing $\beta$-value, which is different to the other two methods and also different to the aggregated VAE's behavior in \cref{fig:exp_benchmarks_downstream_polymnist,fig:exp_benchmarks_coherence_polymnist}.

In addition, we also show the conditional FID values the different multimodal VAEs achieve.
Conditional FIDs come from the conditional generation of modality $\bm{x}_m$ given $\bm{x}_m$.
From all the conditionally generated samples of modality $\bm{x}_m$, we then compute the conditional FID values.
\Cref{fig:exp_polymnist_fid} shows the aggregated values of all conditional FIDs.
We again show the FID values in relation to the downstream task performance and the conditional generation coherence.
We see that only the proposed MMVM VAE reaches FID values similar to the ones of a set of independent VAEs.
However, the set of independent VAEs cannot achieve the same latent representation classification nor coherence as the proposed MMVM VAE.

\begin{figure*}
    \centering
    \begin{subfigure}[t]{1.0\textwidth}
        \centering
        \includegraphics[width=1.0\textwidth]{figures/legend_coherence_recloss.png}
    \end{subfigure}
    \centering
    \begin{subfigure}[t]{0.45\textwidth}
        \includegraphics[width=1.0\textwidth]{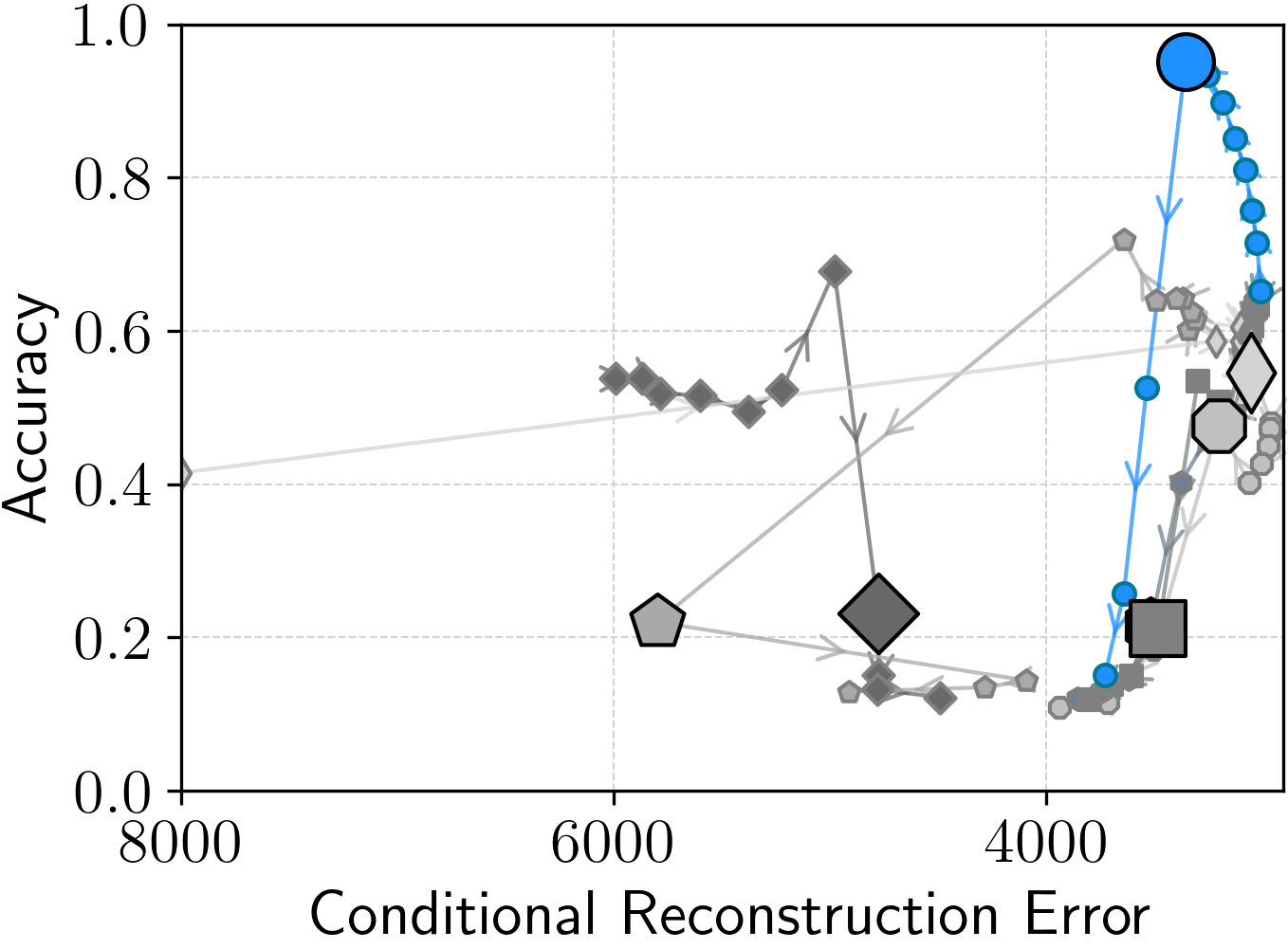}
        \caption{Latent Representation Classification}
        \label{fig:exp_polymnist_downstream_condrecloss}
    \end{subfigure}
    \hspace{0.5cm}
    \centering
    \begin{subfigure}[t]{0.45\textwidth}
        \includegraphics[width=1.0\textwidth]{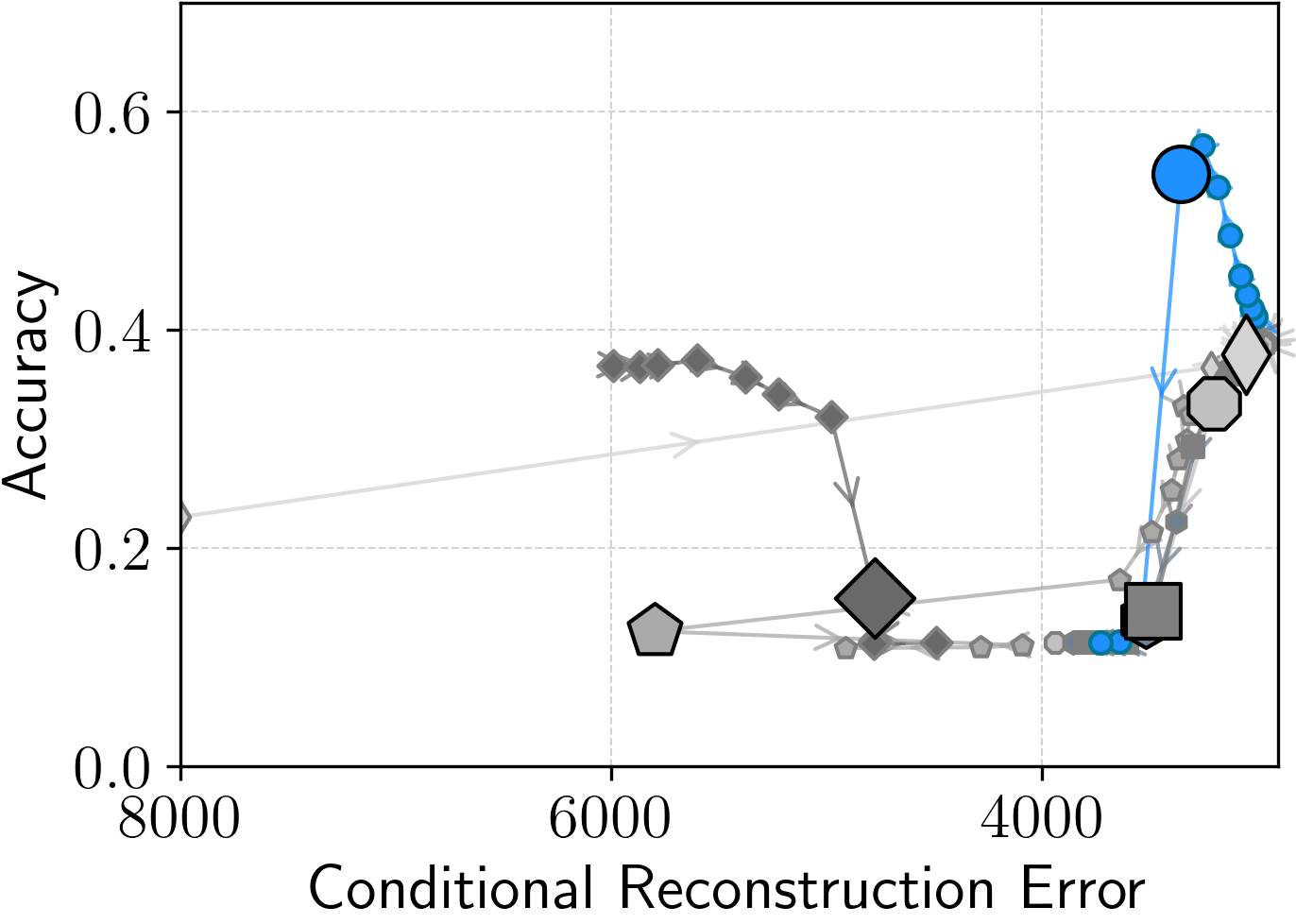}
        \caption{Conditional generation coherence}
        \label{fig:exp_polymnist_coherence_condrecloss}
    \end{subfigure}
    \hspace{0.25cm}
    \caption{
    Results on the PolyMNIST dataset for different VAE methods.
    We report the performance of the latent representation classification and the conditional generation coherence against the conditional reconstruction loss for different $\beta$ values.
    Every point in the figures above is the average of five runs over different seeds and a specific $\beta$ value where $\beta = 2^{k}$ for $k \in \{ -8, \ldots, 3 \}$.
    Different to \cref{fig:exp_benchmarks_downstream_polymnist,fig:exp_benchmarks_coherence_polymnist}, the x-axis is the sum of the self-reconstruction losses if only a single modality is given as input.
    Hence, for the aggregated VAE methods, every modality is decoded by its own unimodal posterior approximation instead of the joint posterior approximation.
    }
    \label{fig:exp_polymnist_condrec}
\end{figure*}

\begin{figure*}
    \centering
    \begin{subfigure}[t]{1.0\textwidth}
        \centering
        \includegraphics[width=1.0\textwidth]{figures/legend_coherence_recloss.png}
    \end{subfigure}
    \centering
    \begin{subfigure}[t]{0.45\textwidth}
        \includegraphics[width=1.0\textwidth]{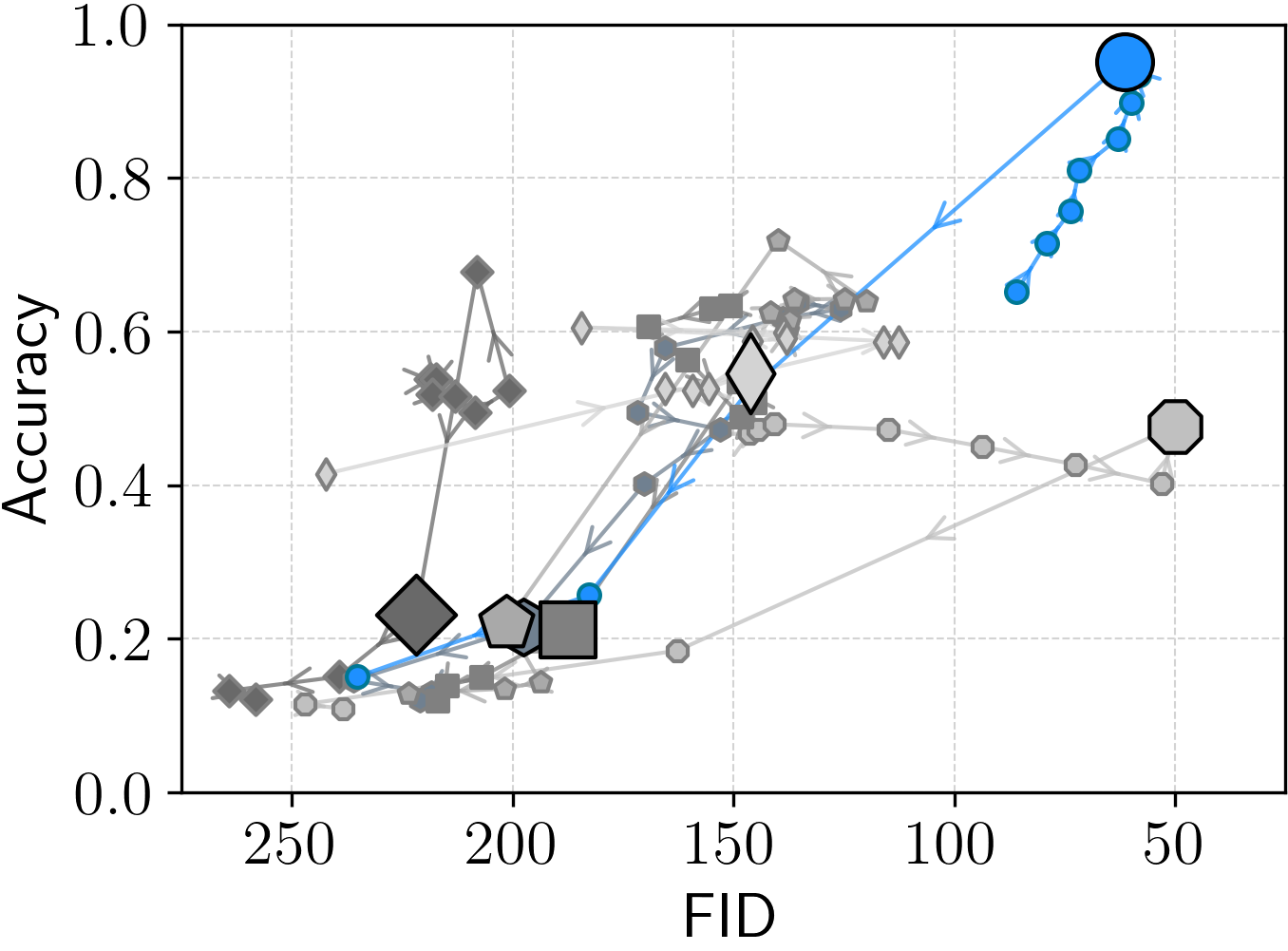}
        \caption{Latent Representation Classification}
        \label{fig:exp_polymnist_downstream_fid}
    \end{subfigure}
    \hspace{0.5cm}
    \centering
    \begin{subfigure}[t]{0.45\textwidth}
        \includegraphics[width=1.0\textwidth]{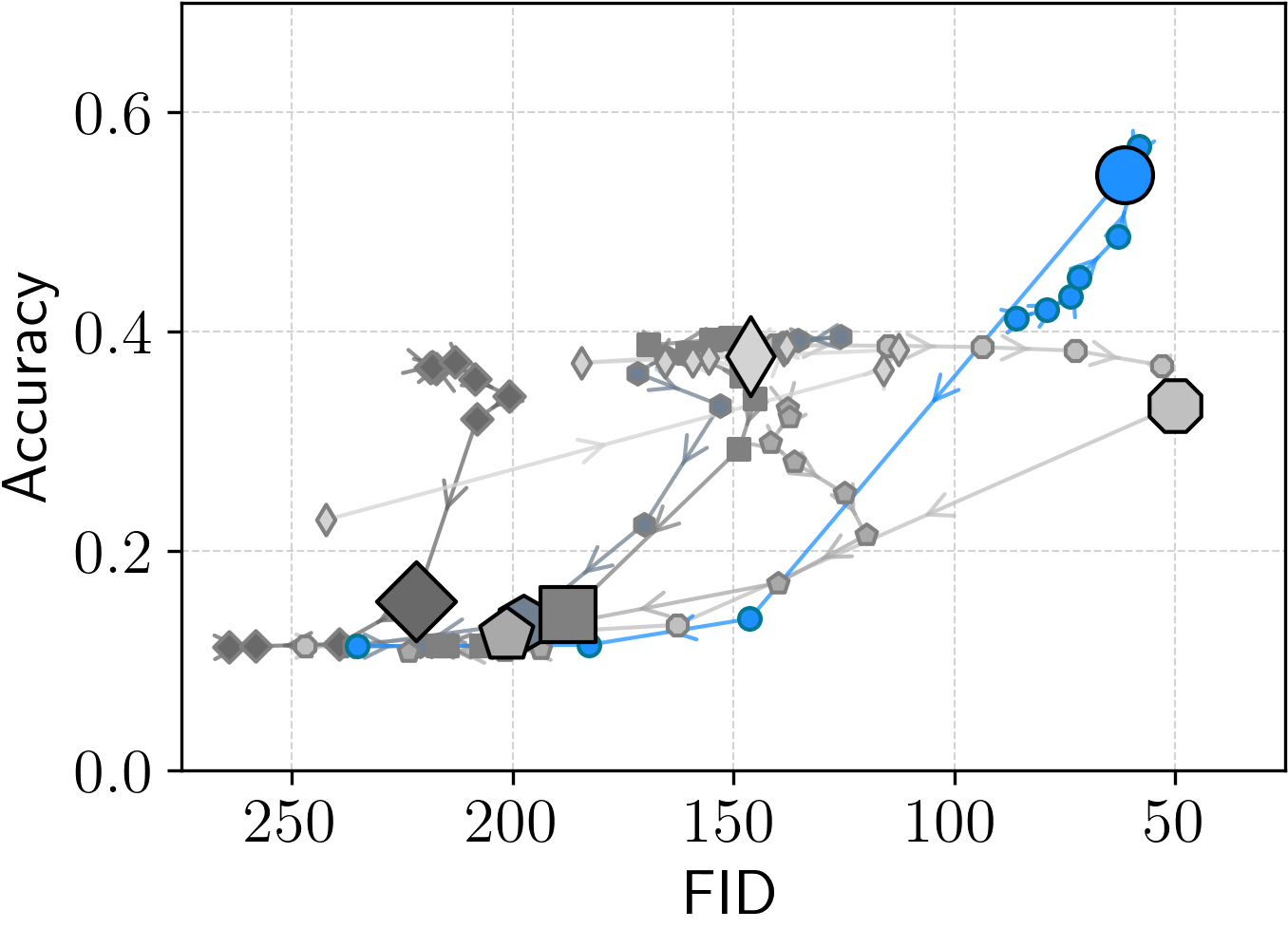}
        \caption{Conditional generation coherence}
        \label{fig:exp_polymnist_coherence_fid}
    \end{subfigure}
    \hspace{0.25cm}
    \caption{
    Results on the PolyMNIST dataset for different VAE methods.
    We report the performance of the latent representation classification and the conditional generation coherence against the conditional FID values.
    Every point in the figures above is the average of five runs over different seeds and a specific $\beta$ value where $\beta = 2^{k}$ for $k \in \{ -8, \ldots, 3 \}$.
    Different to \cref{fig:exp_benchmarks_downstream_polymnist,fig:exp_benchmarks_coherence_polymnist}, the x-axis is not the reconstruction error but the average FID value computed from conditionally generated samples.
    }
    \label{fig:exp_polymnist_fid}
\end{figure*}

\begin{figure}
    \centering
    \centering
    \begin{subfigure}[t]{0.3\textwidth}
        \includegraphics[width=1.0\textwidth]{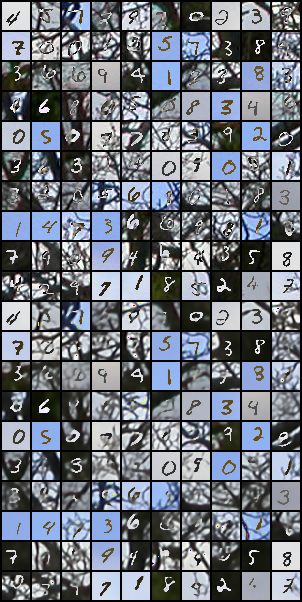}
        \caption{m0 $\rightarrow$ m0}
        \label{fig:exp_pm_cond_gen_unimodal_m0_to_m0}
    \end{subfigure}
    \hfill
    \centering
    \begin{subfigure}[t]{0.3\textwidth}
        \includegraphics[width=1.0\textwidth]{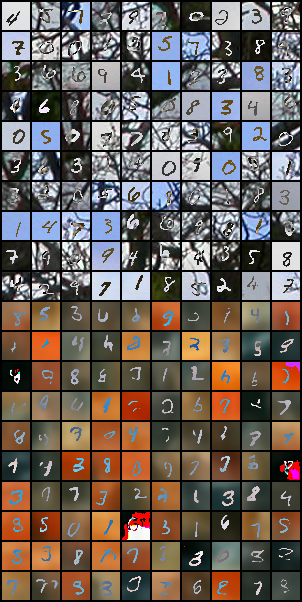}
        \caption{m0 $\rightarrow$ m1}
        \label{fig:exp_pm_cond_gen_unimodal_m0_to_m1}
    \end{subfigure}
    \hfill
    \centering
    \begin{subfigure}[t]{0.3\textwidth}
        \includegraphics[width=1.0\textwidth]{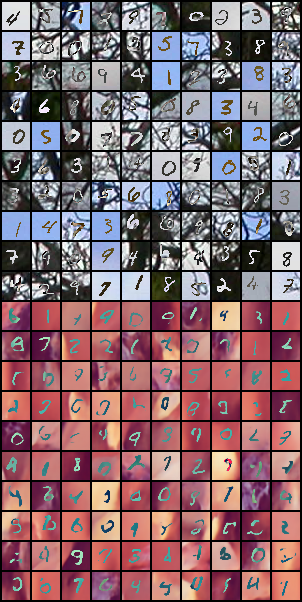}
        \caption{m0 $\rightarrow$ m2}
        \label{fig:exp_pm_cond_gen_unimodal_m0_to_m2}
    \end{subfigure}
    \caption{
    Qualitative results for the conditional generation task for the set of unimodal VAEs.
    }
    \label{fig:exp_pm_cond_gen_unimodal}
\end{figure}

\begin{figure}
    \centering
    \centering
    \begin{subfigure}[t]{0.3\textwidth}
        \includegraphics[width=1.0\textwidth]{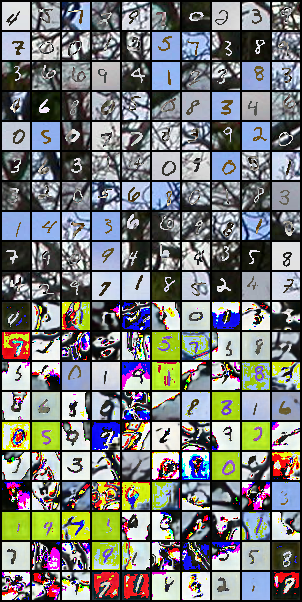}
        \caption{m0 $\rightarrow$ m0}
        \label{fig:exp_pm_cond_gen_joint_m0_to_m0}
    \end{subfigure}
    \hfill
    \centering
    \begin{subfigure}[t]{0.3\textwidth}
        \includegraphics[width=1.0\textwidth]{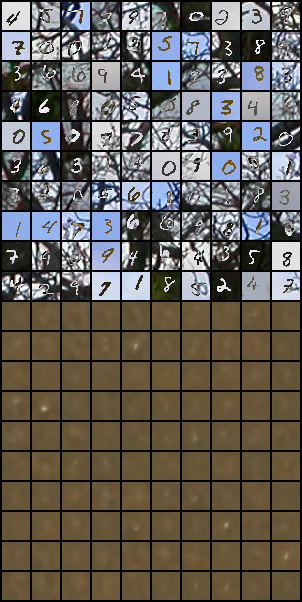}
        \caption{m0 $\rightarrow$ m1}
        \label{fig:exp_pm_cond_gen_joint_m0_to_m1}
    \end{subfigure}
    \hfill
    \centering
    \begin{subfigure}[t]{0.3\textwidth}
        \includegraphics[width=1.0\textwidth]{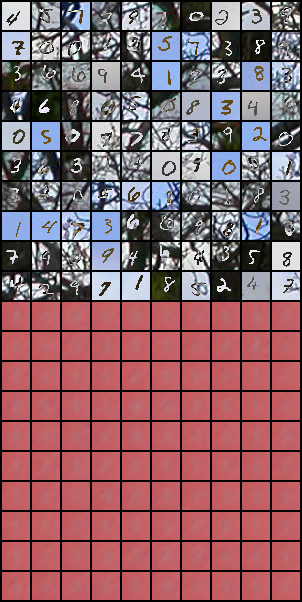}
        \caption{m0 $\rightarrow$ m2}
        \label{fig:exp_pm_cond_gen_joint_m0_to_m2}
    \end{subfigure}
    \caption{
    Qualitative results for the conditional generation task for the aggregation-based multimodal VAE (AVG).
    }
    \label{fig:exp_pm_cond_gen_joint}
\end{figure}

\begin{figure}
    \centering
    \centering
    \begin{subfigure}[t]{0.3\textwidth}
        \includegraphics[width=1.0\textwidth]{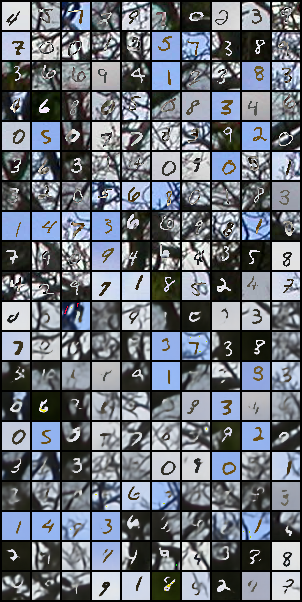}
        \caption{m0 $\rightarrow$ m0}
        \label{fig:exp_pm_cond_gen_mixed_m0_to_m0}
    \end{subfigure}
    \hfill
    \centering
    \begin{subfigure}[t]{0.3\textwidth}
        \includegraphics[width=1.0\textwidth]{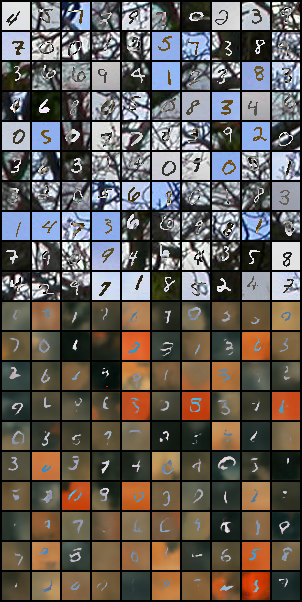}
        \caption{m0 $\rightarrow$ m1}
        \label{fig:exp_pm_cond_gen_mixed_m0_to_m1}
    \end{subfigure}
    \hfill
    \centering
    \begin{subfigure}[t]{0.3\textwidth}
        \includegraphics[width=1.0\textwidth]{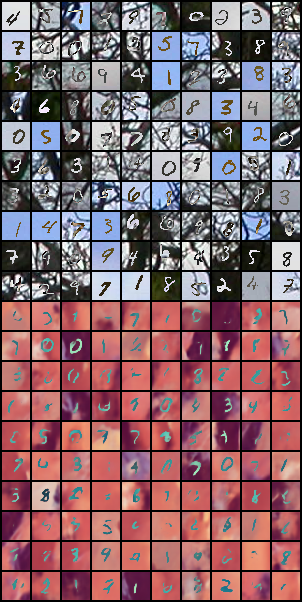}
        \caption{m0 $\rightarrow$ m2}
        \label{fig:exp_pm_cond_gen_mixed_m0_to_m2}
    \end{subfigure}
    \caption{
    Qualitative results for the conditional generation task for the proposed MMVM VAE.
    }
    \label{fig:exp_pm_cond_gen_mixed}
\end{figure}

\begin{figure*}
    \centering
    \begin{subfigure}[t]{1.0\textwidth}
        \centering
        \includegraphics[width=1.0\textwidth]{figures/legend_coherence_recloss.png}
    \end{subfigure}
    \centering
    \begin{subfigure}[t]{0.45\textwidth}
        \includegraphics[width=1.0\textwidth]{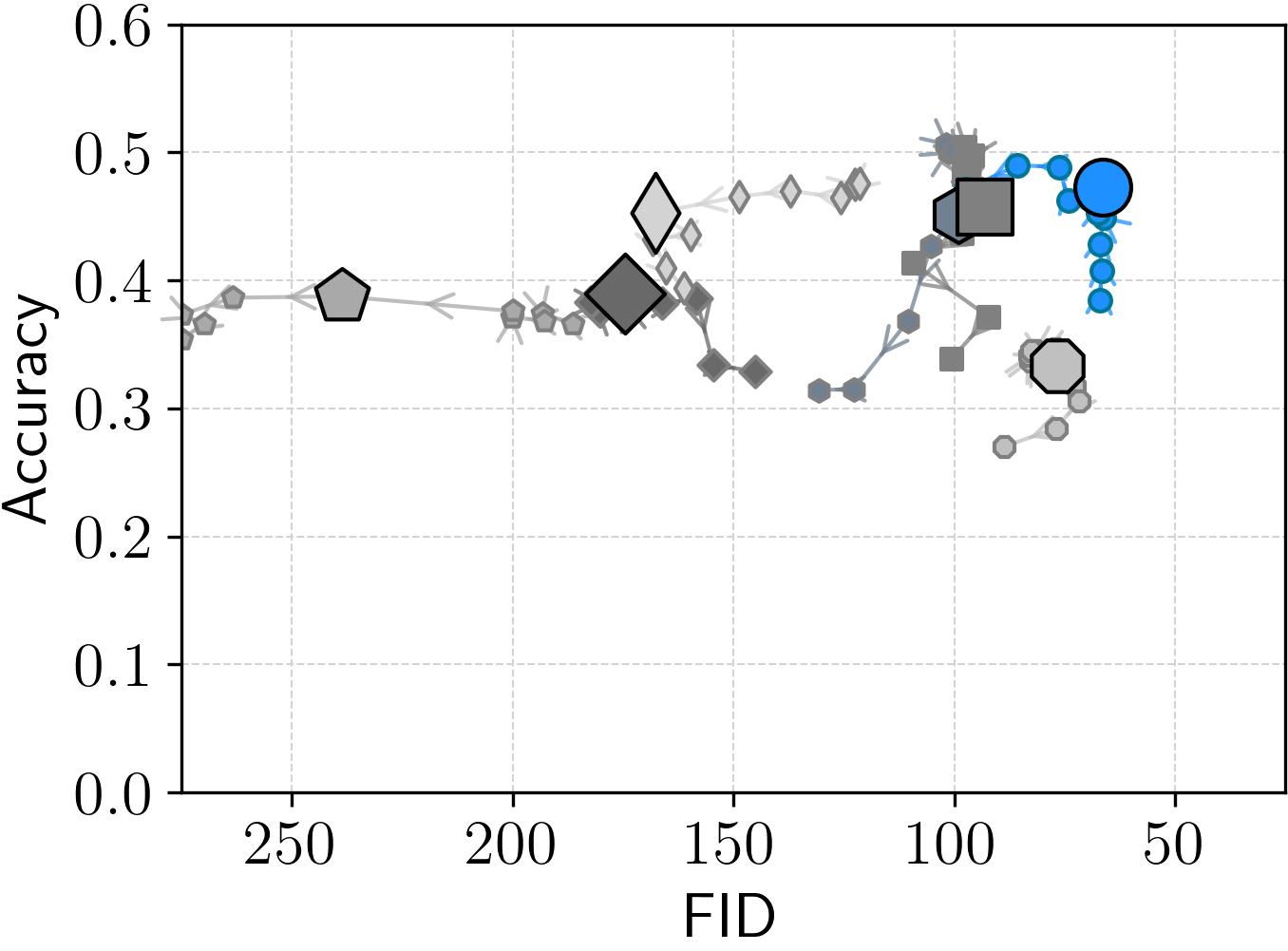}
        \caption{Latent Representation Classification}
        \label{fig:exp_celeba_downstream_fid}
    \end{subfigure}
    \hspace{0.5cm}
    \centering
    \begin{subfigure}[t]{0.45\textwidth}
        \includegraphics[width=1.0\textwidth]{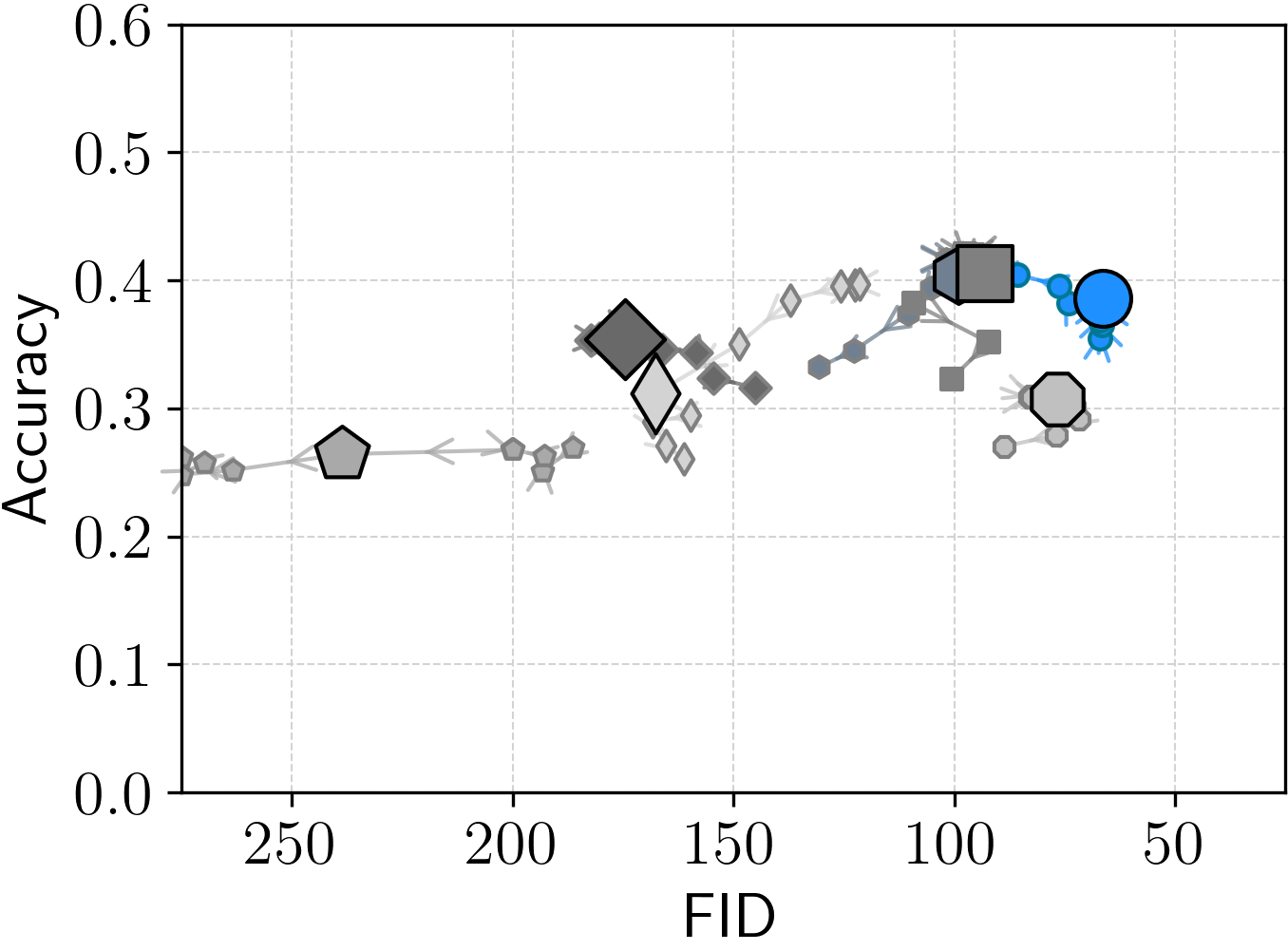}
        \caption{Conditional generation coherence}
        \label{fig:exp_celeba_coherence_fid}
    \end{subfigure}
    \hspace{0.25cm}
    \caption{
    Results on the bimodal CelebA dataset for the different VAE methods.
    We report the performance of the latent representation classification and the conditional generation coherence against the conditional FID values.
    Every point in the figures above is the average of five runs over different seeds and a specific $\beta$ value where $\beta = 2^{k}$ for $k \in \{ -5, \ldots, 4 \}$.
    Unlike in \cref{fig:exp_benchmarks_downstream_polymnist,fig:exp_benchmarks_coherence_polymnist}, the x-axis is not the reconstruction error but the average FID value computed from conditionally generated samples.
    }
    \label{fig:exp_celeba_fid}
\end{figure*}

\subsection{Bimodal CelebA}
\label{app:experiments_celeba}

\begin{figure}
    \centering
    \includegraphics[width=0.4\columnwidth]{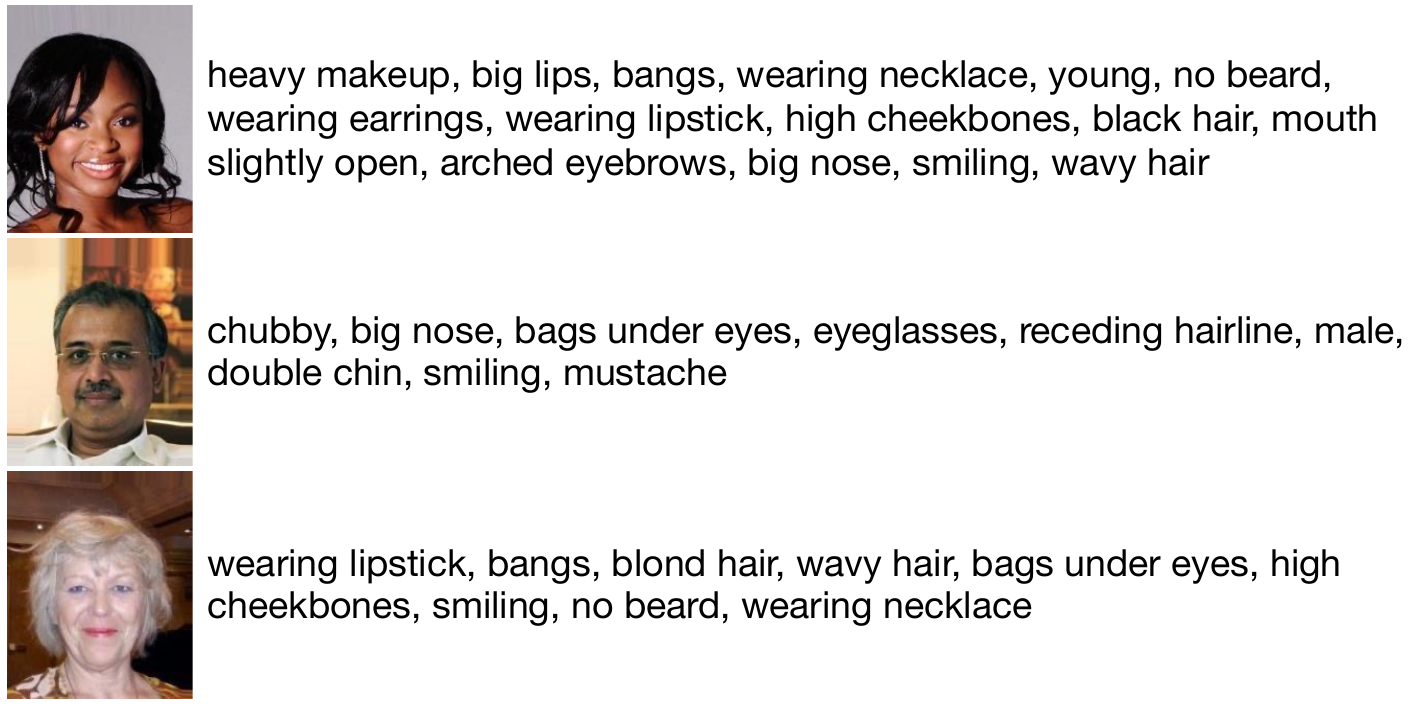}
    \caption{
    Bimodal CelebA. Three samples of image-text pairs.
    To introduce another level of difficulty to the text modality, we added a random translation to the starting point of the text attributes.
    }
    \label{fig:exp_data_celeba_bimodal}
\end{figure}
\subsubsection{Dataset}
We take the dataset from \citet{sutter_multimodal_2020}.
It extends the original CelebA dataset \citep{liu_deep_2015} to the multimodal setting by creating a second modality based on the attributes.
In the original dataset, every image comes with a set of $40$ labels, such as blond hair, smiling, etc.
The difficulty of the individual attributes is not only due to their visual appearance but also their frequency in the dataset.
In the multimodal extension, \citet{sutter2021} created a text string from the attribute words.
Unlike the label vector, absent attributes are dismissed in the text string instead of negated.
Additionally, the attributes are randomly ordered.

\subsubsection{Implementation \& Training}
We use ResNet-based encoders and decoders for this experiment as well \citep{he_deep_2016}, similar to the ones in the PolyMNIST experiment.
The image encoder and decoder consist of 2D convolutions, while the text encoder and decoder consist of 1D convolutions.
We use a character-level encoding and not a word or token-level encoding because of the synthetic nature of the text modality.
We also use an Adam optimizer \citep{kingma_adam_2014} with a starting learning rate of $0.0002$ and a batch size of $128$.
We train all models for 400 epochs and $3$ seeds.
The implementation follows the one described in \cref{app:experiments_polymnist}.

We use NVIDIA GTX 2080 GPUs for all our runs.  Each experiment can be run with $4$ CPU workers and $16$ GB of memory.
An average run takes around $24$ hours. To train all methods used in this paper, we had to train $10 \times 3 \times 6 = 180$ different models:
$10$ different $\beta$ values, $3$ different seeds, and $6$ different methods.
Hence, the total GPU compute time used to generate the results for the PolyMNIST dataset equals $180 \times 24 \approx 4320$ hours.
We---of course---also had to invest GPU time to develop the method, which we did not measure.

\subsubsection{Additional Results}
Given the multilabel nature of the CelebA dataset, we evaluate the learned latent representation with respect to the individual attributes and not only the average performance across all attributes.
\Cref{fig:exp_celeba_downstream_attributes} shows the detailed results according to the full set of $40$ attributes for the three methods: independent VAEs, aggregated VAE, and MMVM VAE.
We again train linear binary classifiers on inferred representations of the training set and evaluate them on representations of the test set.
However, we now report the individual performance of every classifier.
In the main text (see \cref{fig:exp_benchmarks_downstream_celeba,fig:exp_benchmarks_coherence_celeba}), we report the average performance of the $40$ binary classifiers.

\begin{figure*}
    \centering
    \begin{subfigure}[t]{0.95\textwidth}
        \centering
        \includegraphics[width=0.75\textwidth]{figures/legend_coherence_recloss.png}
    \end{subfigure}
    \centering
    \begin{subfigure}[t]{0.95\textwidth}
        \includegraphics[width=\textwidth]{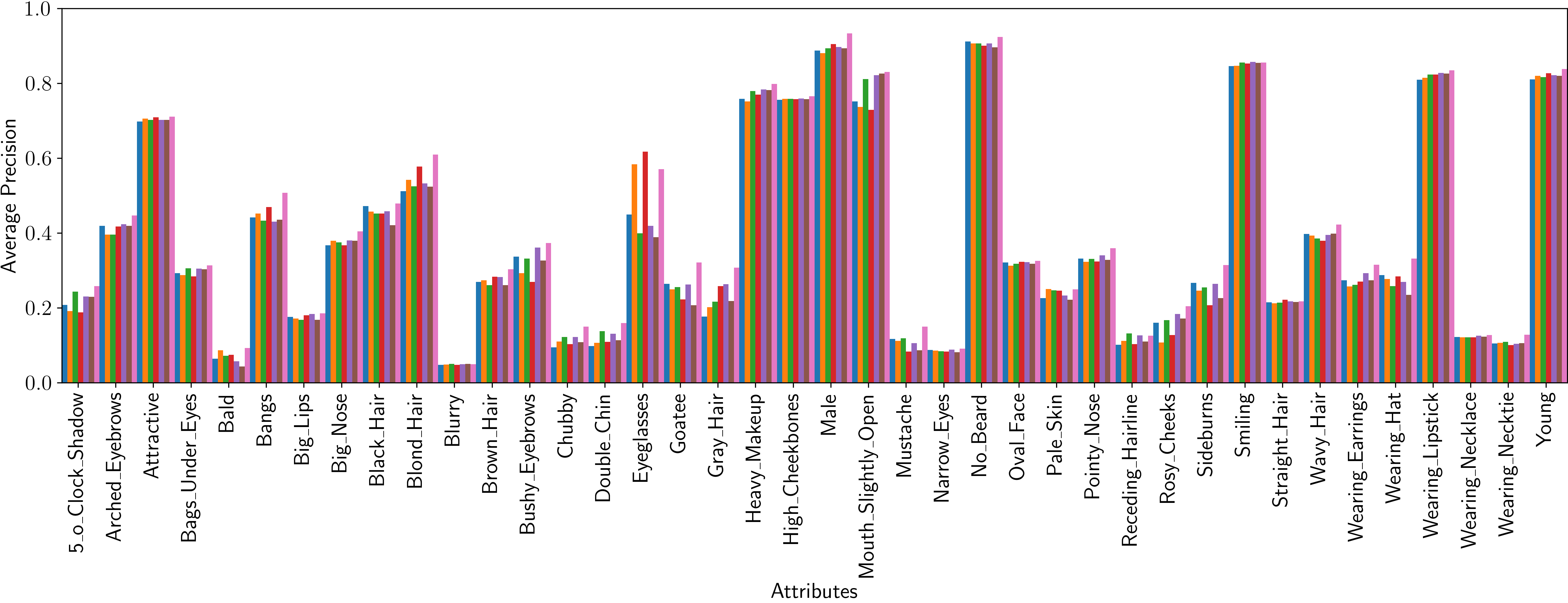}
        \caption{Image Representations}
        \label{fig:exp_celeba_downstream_attributes_img}
    \end{subfigure}
    \hfill
    \centering
    \begin{subfigure}[t]{0.95\textwidth}
        \includegraphics[width=\textwidth]{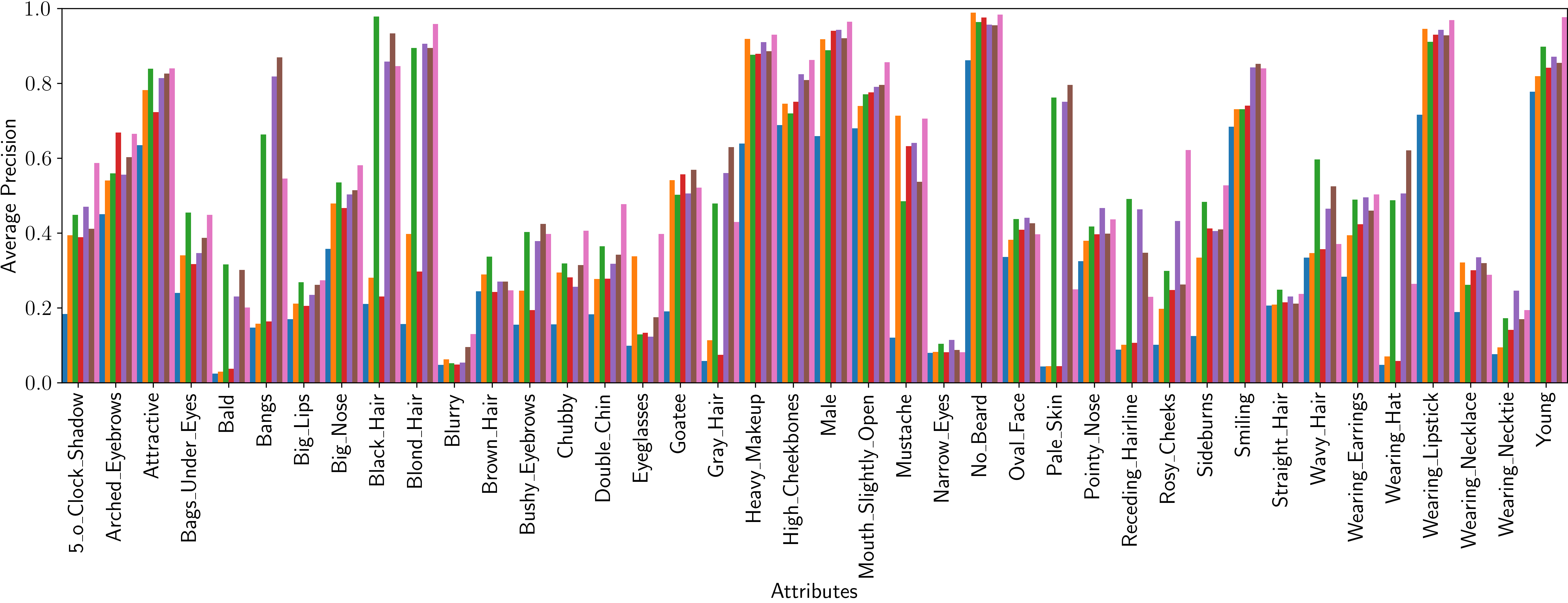}
        \caption{Text Representations}
        \label{fig:exp_celeba_downstream_attributes_text}
    \end{subfigure}
    \hspace{0.25cm}
    \caption{
    Attribute-level results on the bimodal CelebA datasets for the latent representation classification.
    The MMVM VAE outperforms the independent VAEs and the aggregated VAE on most attributes.    
    }
    \label{fig:exp_celeba_downstream_attributes}
\end{figure*}

\begin{figure*}
    \centering
    \begin{subfigure}[t]{0.3\textwidth}
        \centering
        \includegraphics[width=1.0\textwidth]{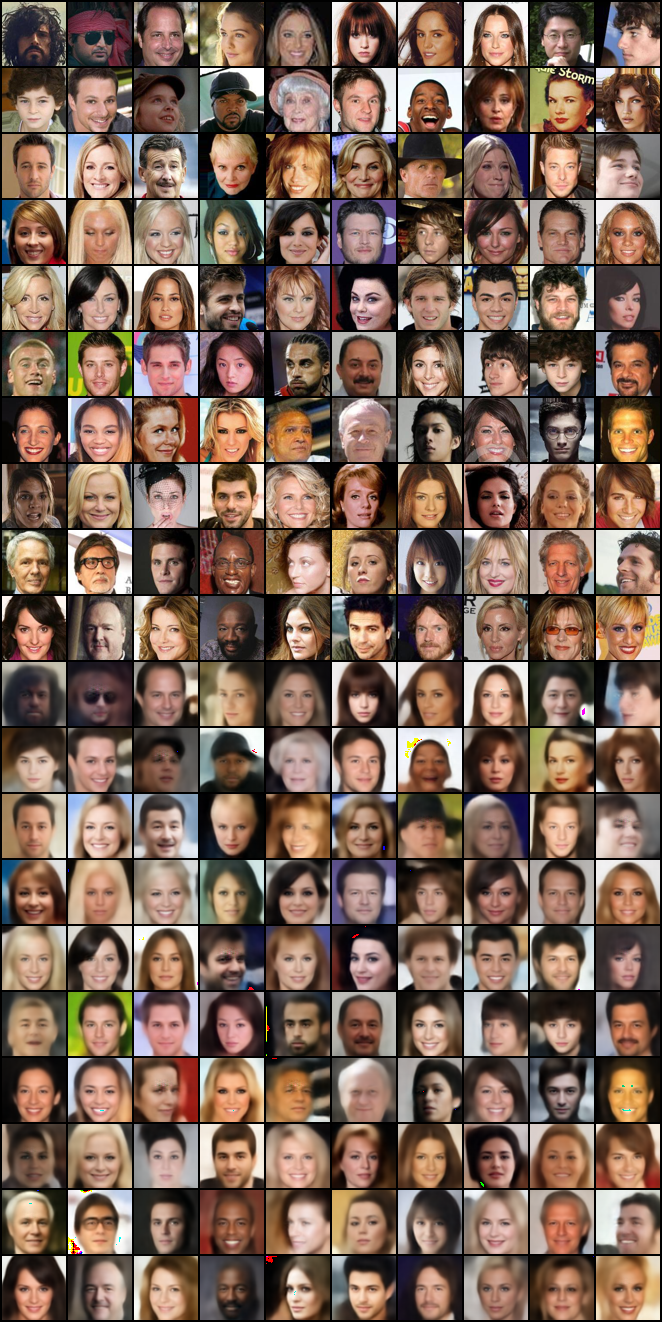}
        \caption{Independent VAEs}
        \label{fig:exp_celeba_qualitative_imgtoimg_independent}
    \end{subfigure}
    \hfill
    \begin{subfigure}[t]{0.3\textwidth}
        \includegraphics[width=1.0\textwidth]{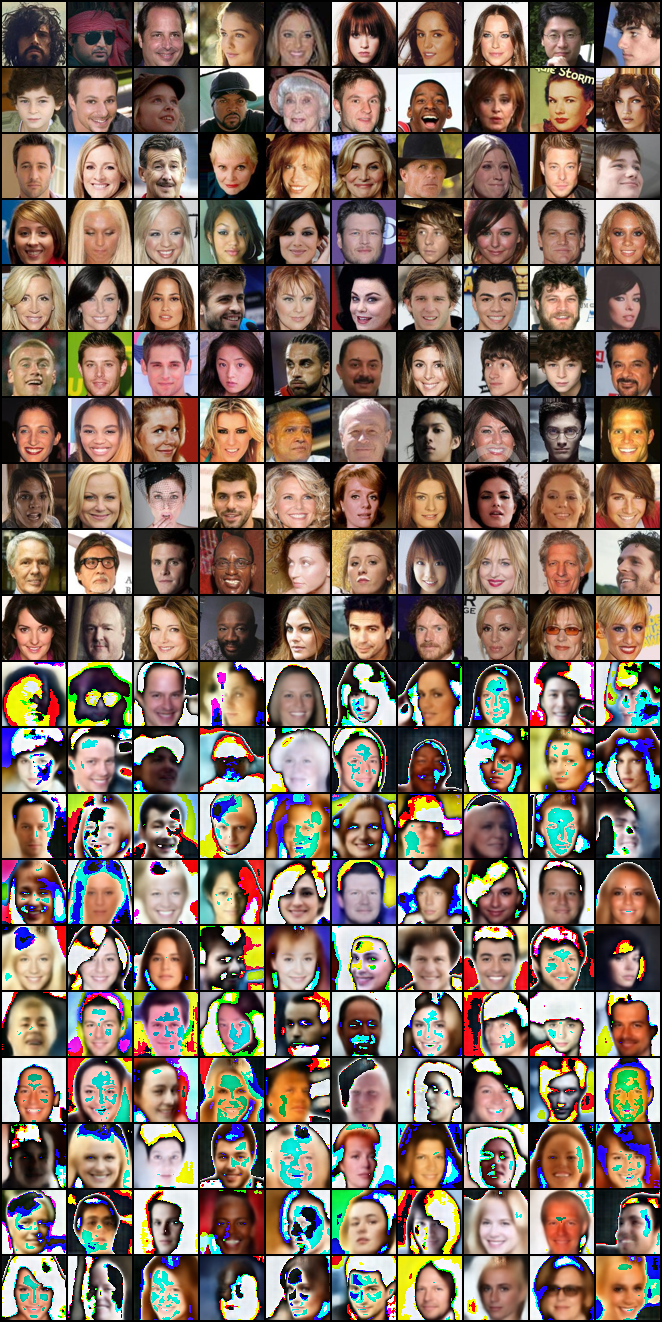}
        \caption{AVG VAE}
        \label{fig:exp_celeba_qualitative_imgtoimg_aggregated}
    \end{subfigure}
    \hfill
    \begin{subfigure}[t]{0.3\textwidth}
        \includegraphics[width=1.0\textwidth]{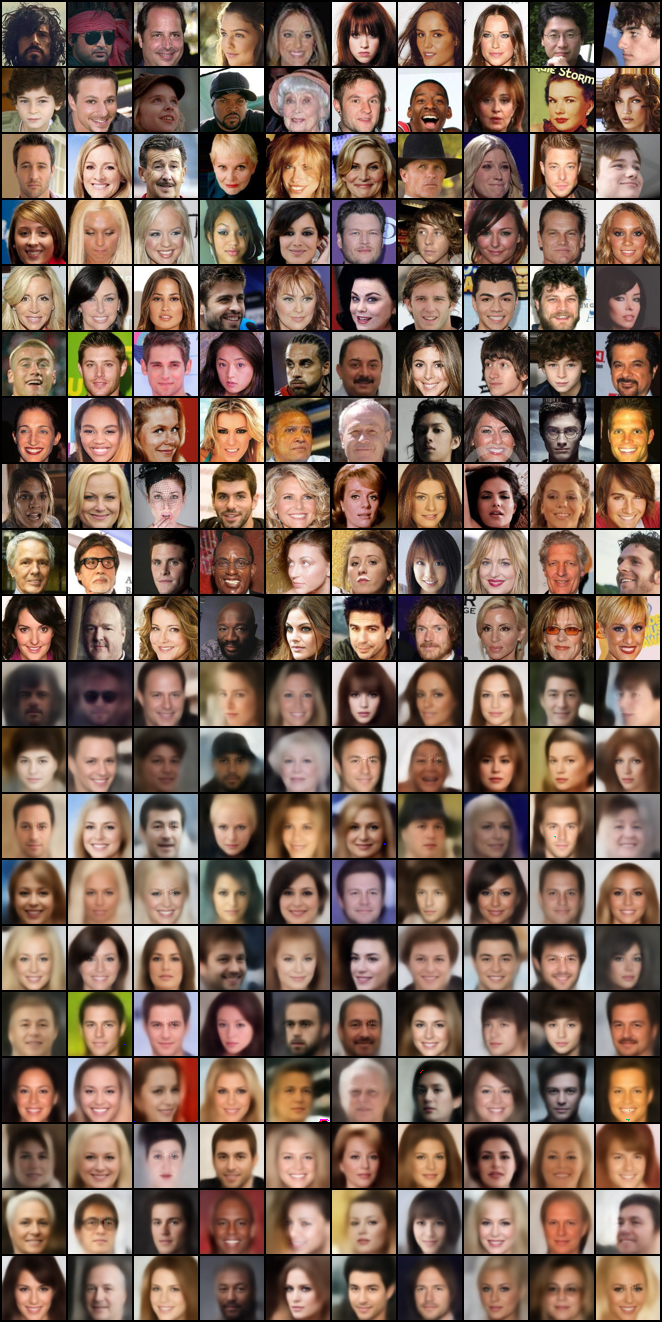}
        \caption{MMVM VAE}
        \label{fig:exp_celeba_qualitative_imgtoimg_mmvamp}
    \end{subfigure}
    \caption{
    Qualitative Results for the CelebA dataset on the image-to-image generation task.
    The first 10 rows of every subplot show the input image and the bottom 10 rows its conditional generation.
    Different to the training, we provide only the image to every model and based on the latent representation of that image, we generate a sample.
    We see that the aggregated VAE (\cref{fig:exp_celeba_qualitative_imgtoimg_aggregated}) is not able to conditionally generate visually pleasing samples compared to the independent VAEs (\cref{fig:exp_celeba_qualitative_imgtoimg_independent}) and the MMVM VAE (\cref{fig:exp_celeba_qualitative_imgtoimg_mmvamp}).
    }
    \label{fig:exp_celeba_condgen_imgtoimg_qualitative}
\end{figure*}
\begin{figure*}
    \centering
    \begin{subfigure}[t]{0.3\textwidth}
        \centering
        \includegraphics[width=1.0\textwidth]{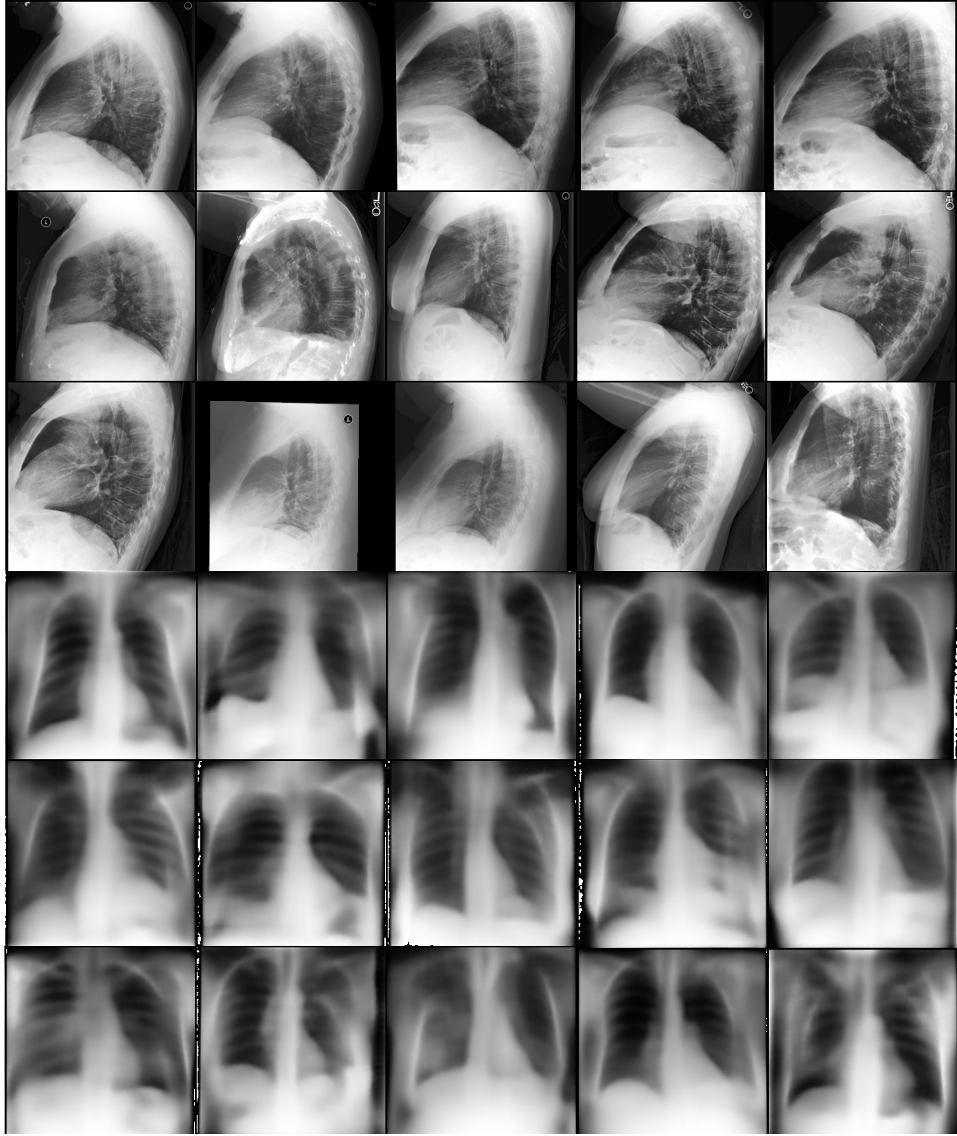}
        \caption{Independent VAEs}
        \label{fig:exp_mimic_qualitative_imgtoimg_independent}
    \end{subfigure}
    \hfill
    \begin{subfigure}[t]{0.3\textwidth}
        \includegraphics[width=1.0\textwidth]{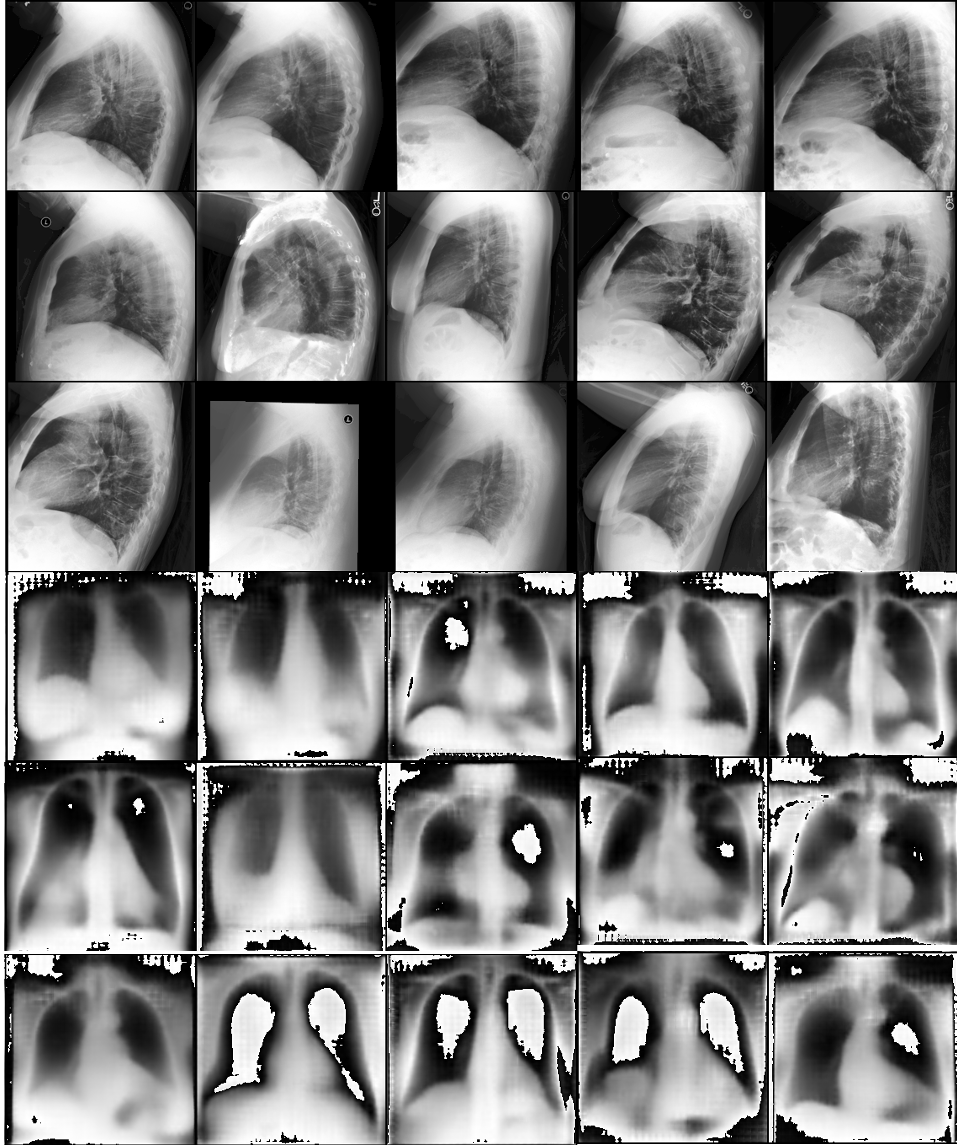}
        \caption{PoE VAE}
        \label{fig:exp_mimic_qualitative_imgtoimg_aggregated}
    \end{subfigure}
    \hfill
    \begin{subfigure}[t]{0.3\textwidth}
        \includegraphics[width=1.0\textwidth]{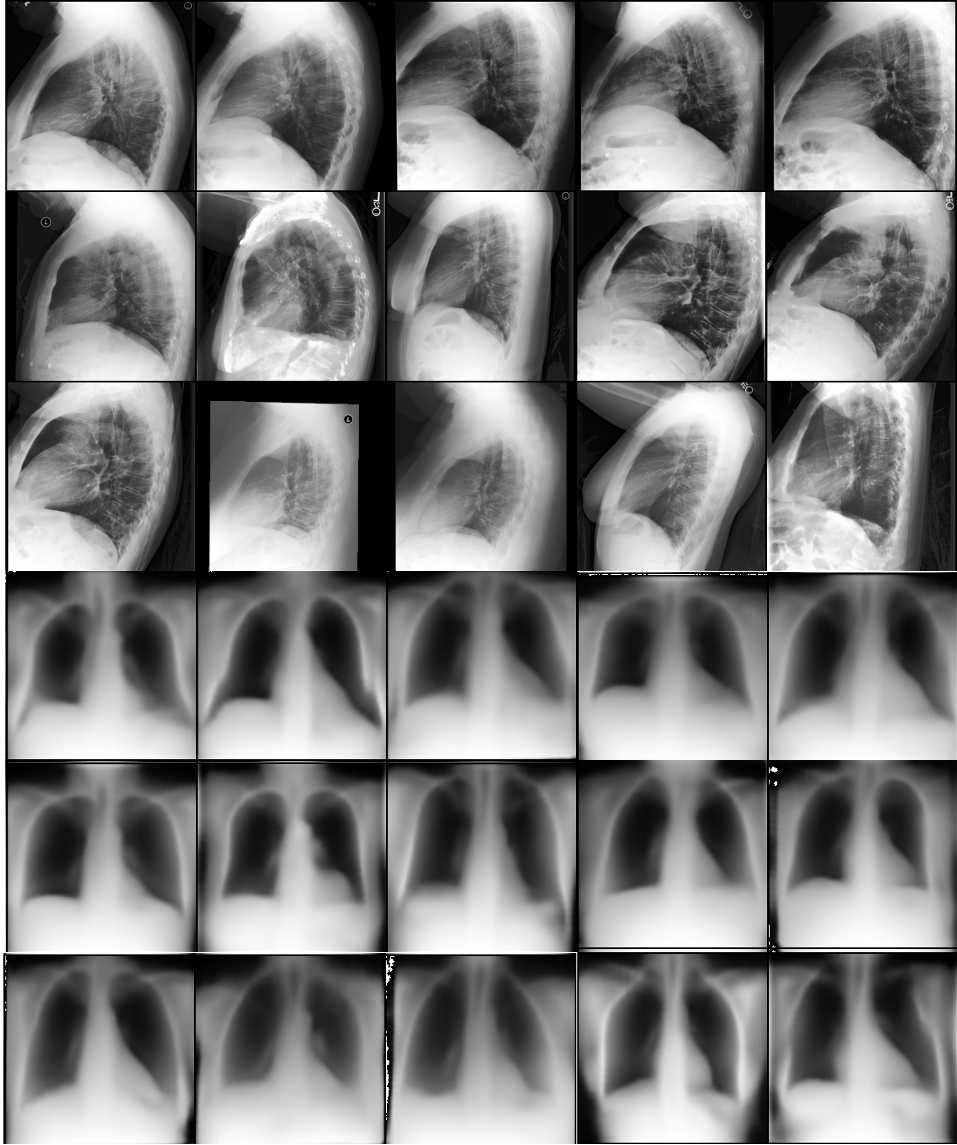}
        \caption{MMVM VAE}
        \label{fig:exp_mimic_qualitative_imgtoimg_mmvamp}
    \end{subfigure}
    \caption{
    Qualitative results for the MIMIC-CXR dataset on the conditional generation task lateral to frontal. Results are aligned with the other datasets.
    }
    \label{fig:exp_mimic_condgen_ftol_qualitative}
\end{figure*}
\begin{figure}
    \centering
    \includegraphics[width=1\textwidth]{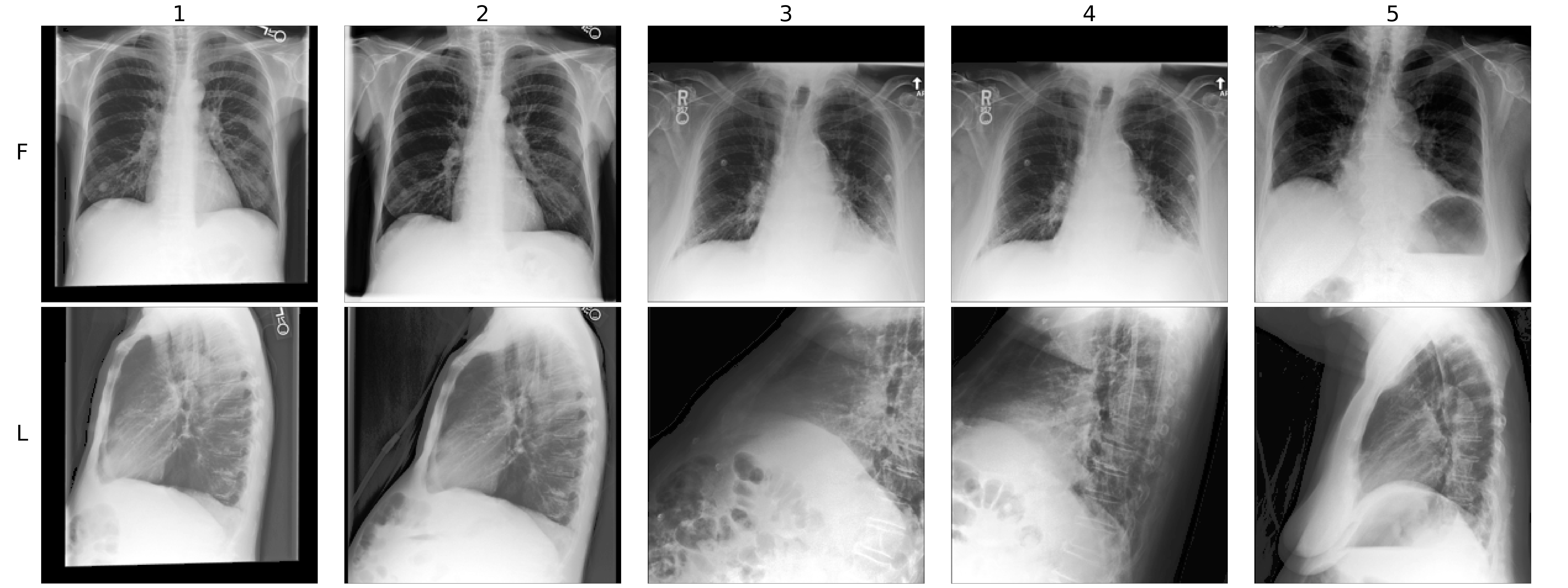}
    \caption{
    MIMIC-CXR experiment dataset: every column is a bimodal tuple $\bm{X}$, the top row shows samples of the frontal modality $\bm{x}_f$, and the bottom row shows samples of the lateral modality $\bm{x}_l$.
    The first two tuples are linked to \textit{No Findings}, indicative of healthy conditions. Tuples three and four are labeled with \textit{Consolidation} disease. The tuple five is labeled with \textit{Atelectasis} disease. We can observe that tuples three and four share the same frontal image, but they differ due to having distinct lateral images.
    }
    \label{fig:exp_data_bimodalmimic}
\end{figure}

\subsection{CUB Image-Captions}
\label{app:cub}

\subsubsection{Dataset}

The dataset contains 11,788 images of birds and 117,880 (10 times as many) captions, each image with 10 fine-grained captions describing the bird’s appearance characteristics collected through Amazon Mechanical Turk (AMT). We performed a 75-25 training-test split, with 8,855 and 2,933 images, and 88,550 and 29,330 corresponding captions in the training set and test set, respectively. We resized the images to 3 channels and 64 by 64 pixels and modeled the captions using embeddings.

The labels applied to downstream tasks are based on the bird's primary colors. Based on the primary color attribute, we remapped the original 15-class primary color label set into a 6-class label set to address the sparsity issue. The labels and mappings are blue-to-red (blue, iridescent, purple, green, pink, red), brown (brown), grey (grey), yellow (yellow, olive, orange, buff), black (black), and white (white).

\subsubsection{Implementation Details \& Training}

The network structures are similar to the ones in other experiments. For both image and caption data, we employed a CNN encoder and decoder. We used a 128-dimensional latent space with a Laplace likelihood on images and a Categorical likelihood for captions. The activation function was chosen to be ReLU between layers. For all experiments on this dataset, we used an Adam optimizer with an initial learning rate of 0.001 and a batch size of 128. We trained all models for 100 epochs.

\subsection{MIMIC-CXR}
\label{app:mimiccxr}

\subsubsection{Dataset}

The dataset we use in our experiment is a multimodal interpretation of the original MIMIC-CXR dataset. The original MIMIC-CXR dataset comprises high-resolution chest X-ray images related to imaging studies. A study may include multiple chest X-ray images captured from several view positions. We categorized these views into two primary modalities: \textit{frontal} (including “AP” and “PA” views) and \textit{lateral} (including “LL” and “Lateral” views). For each study, we pair every frontal image with every lateral image in all possible combinations. Studies lacking at least one frontal and one lateral image are excluded. This approach formalizes a new dataset composed of image pairs, thus offering a bimodal interpretation of the original MIMIC-CXR dataset. More rigorously, we define a dataset $\mathbb{X} = \{ \bm{X}^{(i)} \}_{i=1}^n$ where each $\bm{X}^{(i)} = \{ \bm{x}_f^{(i)}, \bm{x}_l^{(i)} \}$ is a bimodal tuple composed of one frontal image and one lateral image of the same study. An image may appear in multiple tuples, but we ensure that each tuple is unique by having at least one different image. Examples of these bimodal tuples are illustrated in Figure \ref{fig:exp_data_bimodalmimic}.

We use the JPG version of the MIMIC-CXR dataset, namely the MIMIC-CXR-JPG dataset \citep{johnson2024mimic}. In our preprocessing pipeline, we apply center cropping and downscale the images to a resolution of $224\times 224$. We utilize the labels from the MIMIC-CXR-JPG dataset which are obtained using the CheXpert tool \citep{irvin2019chexpert}. All non-positive labels (including “negative,” “non-mentioned,” or “uncertain”) were combined into an aggregate “negative” label following the approach adopted by \cite{Haque2021.07.30.21261225}. Each imaging study is connected to a subject. We split the dataset into distinct training (80\%), validation (10\%), and test (10\%) sets based on subjects, thus ensuring that the same image or study cannot be present in multiple sets.

\subsubsection{Implementation \& Training}

\paragraph{Multimodal VAEs Implementation \& Training}
We use ResNet-based encoders and decoders for this experiment \citep{he_deep_2016}, similar to those used in the PolyMNIST and the Bimodal CelebA experiments. The image encoder and decoder consist of 2D convolution layers. The architectural design of the image encoders and decoders is uniform for both frontal and lateral modalities. We use an Adam optimizer \citep{kingma_adam_2014} with a learning rate of $0.00005$ and a batch size of $32$. We train all methods for $240$ epochs and $3$ seeds. The implementation follows the one described in \cref{app:experiments_polymnist}.

We use NVIDIA A100-SXM4-40GB GPUs for all our runs. An average run, evaluating one method on one seed, takes approximately $45$ hours. To train all methods evaluated in this experiment, we had to train $3 \times 3 = 9$ different models: $3$ seeds and $3$ methods. Hence, the total GPU compute time used to generate the VAE results for the MIMIC-CXR experiment is around $45 \times 9 = 405$ hours. We also had to use GPU time in the development process, which we did not measure.

\paragraph{Supervised classifier Implementation \& Training}

We use supervised classifiers based on Resnet.
The classifier architecture is derived from the VAEs encoder we use in this experiment. To adapt the encoder for classification tasks, we added a linear layer equipped with $14$ neurons corresponding to the number of labels in the MIMIC-CXR dataset. We use an Adam optimizer with a learning rate of $0.0001$ and a batch size of $256$. We train distinct models for both frontal and lateral modalities, training each for $30$ epochs and $3$ seeds.

We train the classifiers using NVIDIA A100-SXM4-40GB GPUs. To train one classifier on one seed takes approximately $1$ hour. We train a total of $3\times2=6$ classifiers: $3$ seeds and $2$ modalities, resulting in about $6$ hours of GPU compute time. We also had to use GPU time in the development process, which we did not measure. 
\subsubsection{Additional Results}
\label{app:mimiccxr_addresults}

In our main experiment, we evaluate the quality of the learned latent representations using Random Forest (RF) classifiers. Specifically, we train independent RF binary classifiers for each model and each label on the inferred representations of the training set and evaluate them on the representations of the test set. The RF classifiers are configured with 5,000 estimators and a maximum depth of 30. In \cref{tab:exp_mimic_cxr} in \cref{sec:exp_mimic}, we report for each model the average performance over the two modalities and three seeds, totaling six RF classifiers per model: one for each modality and each of the three different seeds.

Here, we provide detailed insights into the capabilities of the models to leverage different modalities during training to improve the unimodal representations by reporting the performance of the latent representation classification for each modality separately in \cref{tab:mimic_cxr_frontal} and \cref{tab:mimic_cxr_lateral}. These results allow us to highlight the strengths of the MMVM model in improving unimodal representations with additional modalities. For instance, in the main text we mentioned that in the Cardiomegaly classification task, the MMVM lateral representations $z_L$ lead to substantial improvements over the frontal representations $z_F$ of the other VAEs, even though the former modality $x_L$ is less informative than the latter $x_F$ as indicated by the respective performance of the supervised model on each modality. This behavior is not specific to Cardiomegaly and can be observed for other labels such as Atelectasis, Lung Opacity, No Finding, Pleural Effusion, Pneumonia, and Support Devices. Even though the difference between modalities for these labels is sometimes small, the consistency of those results demonstrate the soft-sharing capabilities of the MMVM approach.
Conversely, in tasks where the frontal view is less informative, such as Consolidation, Enlarged Cardiomediastinum, and Pneumothorax, the MMVM's lateral representations outperform the frontal ones of the other VAEs. 

The efficacy of the soft sharing mechanism in the MMVM VAE is also reflected in the average performance for all labels. The performance difference between the two unimodal latent representations produced by the MMVM VAE is substantially smaller (0.9 percentage points on average) compared to the independent VAEs (2.5 percentage points on average) and the PoE VAE (2.2 percentage points on average).

\begin{table}
\caption{Evaluation of the VAEs' frontal latent representations $z_F$ classification performance on the test split. The performance of a fully-supervised non-linear deep network is included for reference. The average AUROC [\%] and standard deviation over three seeds are reported. Enl. Cardiom. stands for Enlarged Cardiomediastinum and Support Dev. for Support Device.}
\label{tab:mimic_cxr_frontal}
\vspace{2pt}
\centering
\resizebox{\textwidth}{!}{%
\begin{tabular}{lccccccc}
\toprule
 & \small\textit{supervised} & \small independent & AVG & MoE & MoPoE & PoE & MMVM \\
\midrule
Atelectasis & \textit{\textbf{\small79.5} \scriptsize $\pm$0.3} & \small73.1 \scriptsize $\pm$0.0 & \small75.2 \scriptsize $\pm$0.3 & \small73.0 \scriptsize $\pm$0.5 & \small74.2 \scriptsize $\pm$0.4 & \small75.7 \scriptsize $\pm$0.3 & \small77.6 \scriptsize $\pm$0.1 \\
Cardiomegaly & \textit{\textbf{\small81.7} \scriptsize $\pm$0.1} & \small76.3 \scriptsize $\pm$0.4 & \small78.5 \scriptsize $\pm$0.2 & \small76.5 \scriptsize $\pm$0.6 & \small77.1 \scriptsize $\pm$0.1 & \small78.5 \scriptsize $\pm$0.3 & \small80.5 \scriptsize $\pm$0.1 \\
Consolidation & \textit{\small65.3 \scriptsize $\pm$0.7} & \small62.4 \scriptsize $\pm$0.4 & \small66.0 \scriptsize $\pm$0.8 & \small62.9 \scriptsize $\pm$0.6 & \small63.9 \scriptsize $\pm$0.3 & \small66.7 \scriptsize $\pm$0.8 & \textbf{\small69.1} \scriptsize $\pm$0.6 \\
Edema & \textit{\textbf{\small88.0} \scriptsize $\pm$0.2} & \small83.0 \scriptsize $\pm$0.3 & \small84.6 \scriptsize $\pm$0.3 & \small82.4 \scriptsize $\pm$0.6 & \small83.1 \scriptsize $\pm$0.6 & \small84.5 \scriptsize $\pm$0.3 & \small86.3 \scriptsize $\pm$0.1 \\
Enl. Cardiom. & \textit{\small57.9 \scriptsize $\pm$1.3} & \small59.5 \scriptsize $\pm$1.2 & \small64.9 \scriptsize $\pm$0.8 & \small61.7 \scriptsize $\pm$0.5 & \small64.1 \scriptsize $\pm$0.5 & \small66.4 \scriptsize $\pm$0.5 & \textbf{\small68.6} \scriptsize $\pm$1.1 \\
Fracture & \textit{\small51.4 \scriptsize $\pm$0.5} & \small56.0 \scriptsize $\pm$0.1 & \small58.4 \scriptsize $\pm$0.5 & \small57.3 \scriptsize $\pm$0.2 & \small57.4 \scriptsize $\pm$0.7 & \textbf{\small58.8} \scriptsize $\pm$0.3 & \small58.6 \scriptsize $\pm$0.8 \\
Lung Lesion & \textit{\small52.4 \scriptsize $\pm$0.4} & \small61.3 \scriptsize $\pm$0.4 & \small61.8 \scriptsize $\pm$0.2 & \small60.6 \scriptsize $\pm$0.9 & \small60.7 \scriptsize $\pm$0.8 & \small63.4 \scriptsize $\pm$0.4 & \textbf{\small64.1} \scriptsize $\pm$0.2 \\
Lung Opacity & \textit{\textbf{\small69.5} \scriptsize $\pm$0.2} & \small63.8 \scriptsize $\pm$0.3 & \small65.7 \scriptsize $\pm$0.3 & \small63.5 \scriptsize $\pm$0.2 & \small64.7 \scriptsize $\pm$0.0 & \small66.3 \scriptsize $\pm$0.2 & \small68.1 \scriptsize $\pm$0.1 \\
No Finding & \textit{\small73.9 \scriptsize $\pm$1.3} & \small76.6 \scriptsize $\pm$0.3 & \small77.8 \scriptsize $\pm$0.0 & \small77.1 \scriptsize $\pm$0.2 & \small77.4 \scriptsize $\pm$0.1 & \small77.2 \scriptsize $\pm$0.2 & \textbf{\small79.1} \scriptsize $\pm$0.1 \\
Pleural Effusion & \textit{\textbf{\small88.0} \scriptsize $\pm$0.0} & \small81.2 \scriptsize $\pm$0.6 & \small82.8 \scriptsize $\pm$0.0 & \small81.6 \scriptsize $\pm$0.4 & \small82.5 \scriptsize $\pm$0.5 & \small82.8 \scriptsize $\pm$0.2 & \small85.7 \scriptsize $\pm$0.3 \\
Pleural Other & \textit{\small53.9 \scriptsize $\pm$1.0} & \small67.8 \scriptsize $\pm$1.1 & \small68.9 \scriptsize $\pm$0.5 & \small67.5 \scriptsize $\pm$1.0 & \small68.3 \scriptsize $\pm$1.2 & \small68.7 \scriptsize $\pm$1.2 & \textbf{\small70.0} \scriptsize $\pm$2.0 \\
Pneumonia & \textit{\textbf{\small61.3} \scriptsize $\pm$0.3} & \small55.3 \scriptsize $\pm$0.5 & \small57.8 \scriptsize $\pm$0.4 & \small56.4 \scriptsize $\pm$0.4 & \small57.3 \scriptsize $\pm$0.4 & \small57.5 \scriptsize $\pm$0.4 & \small60.0 \scriptsize $\pm$0.6 \\
Pneumothorax & \textit{\small76.9 \scriptsize $\pm$1.1} & \small75.3 \scriptsize $\pm$1.0 & \small78.3 \scriptsize $\pm$0.6 & \small77.7 \scriptsize $\pm$0.4 & \small78.8 \scriptsize $\pm$0.7 & \small78.5 \scriptsize $\pm$0.7 & \textbf{\small81.9} \scriptsize $\pm$0.4 \\
Support Dev. & \textit{\textbf{\small77.7} \scriptsize $\pm$0.1} & \small70.8 \scriptsize $\pm$0.3 & \small73.1 \scriptsize $\pm$0.4 & \small72.7 \scriptsize $\pm$0.7 & \small73.7 \scriptsize $\pm$0.6 & \small73.8 \scriptsize $\pm$0.1 & \small76.6 \scriptsize $\pm$0.2 \\
\midrule
All Labels & \textit{\small69.8 \scriptsize $\pm$12.6} & \small68.7 \scriptsize $\pm$9.0 & \small71.0 \scriptsize $\pm$8.6 & \small69.4 \scriptsize $\pm$8.8 & \small70.2 \scriptsize $\pm$8.8 & \small71.3 \scriptsize $\pm$8.4 & \textbf{\small73.3} \scriptsize $\pm$8.9 \\
\bottomrule
\end{tabular}
}
\end{table}

\begin{table}
\caption{Evaluation of the VAEs' lateral latent representation $z_L$ classification performance on the test split. The performance of a fully-supervised non-linear deep network is included for reference. The average AUROC [\%] and standard deviation over three seeds are reported. Enl. Cardiom. stands for Enlarged Cardiomediastinum and Support Dev. for Support Device.}
\vspace{2pt}
\centering
\resizebox{\textwidth}{!}{%
\begin{tabular}{lccccccc}
\toprule
 & \small\textit{supervised} & \small independent & AVG & MoE & MoPoE & PoE & MMVM \\
\midrule
Atelectasis & \textit{\textbf{\small78.0} \scriptsize $\pm$0.1} & \small70.7 \scriptsize $\pm$0.3 & \small73.5 \scriptsize $\pm$0.4 & \small72.8 \scriptsize $\pm$0.1 & \small74.7 \scriptsize $\pm$0.2 & \small73.7 \scriptsize $\pm$0.2 & \small77.0 \scriptsize $\pm$0.2 \\
Cardiomegaly & \textit{\textbf{\small79.0} \scriptsize $\pm$0.2} & \small70.8 \scriptsize $\pm$0.9 & \small73.7 \scriptsize $\pm$0.1 & \small73.3 \scriptsize $\pm$0.2 & \small75.5 \scriptsize $\pm$0.1 & \small74.8 \scriptsize $\pm$0.1 & \small78.7 \scriptsize $\pm$0.0 \\
Consolidation & \textit{\small68.6 \scriptsize $\pm$1.4} & \small64.4 \scriptsize $\pm$1.4 & \small65.4 \scriptsize $\pm$1.5 & \small64.9 \scriptsize $\pm$0.9 & \small65.8 \scriptsize $\pm$0.8 & \small66.7 \scriptsize $\pm$0.9 & \textbf{\small70.2} \scriptsize $\pm$0.8 \\
Edema & \textit{\textbf{\small86.2} \scriptsize $\pm$0.1} & \small75.4 \scriptsize $\pm$0.9 & \small78.0 \scriptsize $\pm$0.3 & \small78.0 \scriptsize $\pm$0.5 & \small81.1 \scriptsize $\pm$0.8 & \small79.1 \scriptsize $\pm$0.1 & \small84.3 \scriptsize $\pm$0.3 \\
Enl. Cardiom. & \textit{\small61.5 \scriptsize $\pm$1.0} & \small60.1 \scriptsize $\pm$0.7 & \small62.0 \scriptsize $\pm$1.0 & \small60.5 \scriptsize $\pm$0.5 & \small64.2 \scriptsize $\pm$0.9 & \small63.5 \scriptsize $\pm$0.8 & \small69.0 \scriptsize $\pm$0.7 \\
Fracture & \textit{\small52.3 \scriptsize $\pm$0.2} & \small57.9 \scriptsize $\pm$0.6 & \small58.3 \scriptsize $\pm$0.7 & \small56.8 \scriptsize $\pm$0.8 & \small58.6 \scriptsize $\pm$0.8 & \small59.0 \scriptsize $\pm$0.5 & \textbf{\small60.9} \scriptsize $\pm$0.3 \\
Lung Lesion & \textit{\small57.3 \scriptsize $\pm$0.4} & \small58.9 \scriptsize $\pm$0.2 & \small59.0 \scriptsize $\pm$0.2 & \small58.6 \scriptsize $\pm$0.8 & \small60.8 \scriptsize $\pm$0.3 & \small59.3 \scriptsize $\pm$0.3 & \textbf{\small63.0} \scriptsize $\pm$0.7 \\
Lung Opacity & \textit{\textbf{\small68.9} \scriptsize $\pm$0.2} & \small61.9 \scriptsize $\pm$0.5 & \small63.4 \scriptsize $\pm$0.4 & \small63.9 \scriptsize $\pm$0.1 & \small65.4 \scriptsize $\pm$0.4 & \small64.1 \scriptsize $\pm$0.4 & \small68.1 \scriptsize $\pm$0.2 \\
No Finding & \textit{\small72.0 \scriptsize $\pm$0.8} & \small73.9 \scriptsize $\pm$0.3 & \small74.8 \scriptsize $\pm$0.2 & \small75.9 \scriptsize $\pm$0.2 & \small77.1 \scriptsize $\pm$0.1 & \small74.6 \scriptsize $\pm$0.1 & \textbf{\small78.3} \scriptsize $\pm$0.1 \\
Pleural Effusion & \textit{\textbf{\small91.0} \scriptsize $\pm$0.3} & \small80.2 \scriptsize $\pm$0.2 & \small82.0 \scriptsize $\pm$0.1 & \small82.0 \scriptsize $\pm$0.3 & \small84.3 \scriptsize $\pm$0.2 & \small82.1 \scriptsize $\pm$0.1 & \small85.7 \scriptsize $\pm$0.1 \\
Pleural Other & \textit{\small61.5 \scriptsize $\pm$1.9} & \small62.8 \scriptsize $\pm$1.3 & \small64.3 \scriptsize $\pm$0.7 & \small62.7 \scriptsize $\pm$1.7 & \small63.6 \scriptsize $\pm$1.0 & \small63.9 \scriptsize $\pm$1.0 & \textbf{\small68.5} \scriptsize $\pm$1.9 \\
Pneumonia & \textit{\textbf{\small61.2} \scriptsize $\pm$0.4} & \small56.4 \scriptsize $\pm$0.5 & \small56.9 \scriptsize $\pm$0.3 & \small57.5 \scriptsize $\pm$1.2 & \small58.3 \scriptsize $\pm$0.5 & \small58.2 \scriptsize $\pm$0.1 & \small59.0 \scriptsize $\pm$0.2 \\
Pneumothorax & \textit{\textbf{\small82.4} \scriptsize $\pm$0.8} & \small75.6 \scriptsize $\pm$0.5 & \small77.8 \scriptsize $\pm$0.3 & \small76.9 \scriptsize $\pm$0.7 & \small79.2 \scriptsize $\pm$0.6 & \small78.6 \scriptsize $\pm$0.2 & \small81.7 \scriptsize $\pm$0.3 \\
Support Dev. & \textit{\textbf{\small77.3} \scriptsize $\pm$0.1} & \small71.9 \scriptsize $\pm$0.6 & \small72.9 \scriptsize $\pm$0.5 & \small73.6 \scriptsize $\pm$0.7 & \small75.9 \scriptsize $\pm$0.4 & \small74.7 \scriptsize $\pm$0.5 & \small77.1 \scriptsize $\pm$0.3 \\
\midrule
All Labels & \textit{\small71.2 \scriptsize $\pm$11.3} & \small67.2 \scriptsize $\pm$7.6 & \small68.7 \scriptsize $\pm$8.1 & \small68.4 \scriptsize $\pm$8.4 & \small70.3 \scriptsize $\pm$8.6 & \small69.4 \scriptsize $\pm$8.0 & \textbf{\small73.0} \scriptsize $\pm$8.5 \\
\bottomrule
\end{tabular}
}
\label{tab:mimic_cxr_lateral}
\end{table}

\cref{fig:exp_mimic_cxr_latentreprclassif} illustrates the performance of the average unimodal latent representation $z$ classification described above against the reconstruction loss for the three different VAE methods introduced in the main text (independent, aggregate PoE VAE and the proposed MMVM VAE). As a reminder, the AUROC performance [in \%] is averaged over three seeds and the two modalities. 

\begin{figure}
    \centering
    \includegraphics[width=0.5\textwidth]{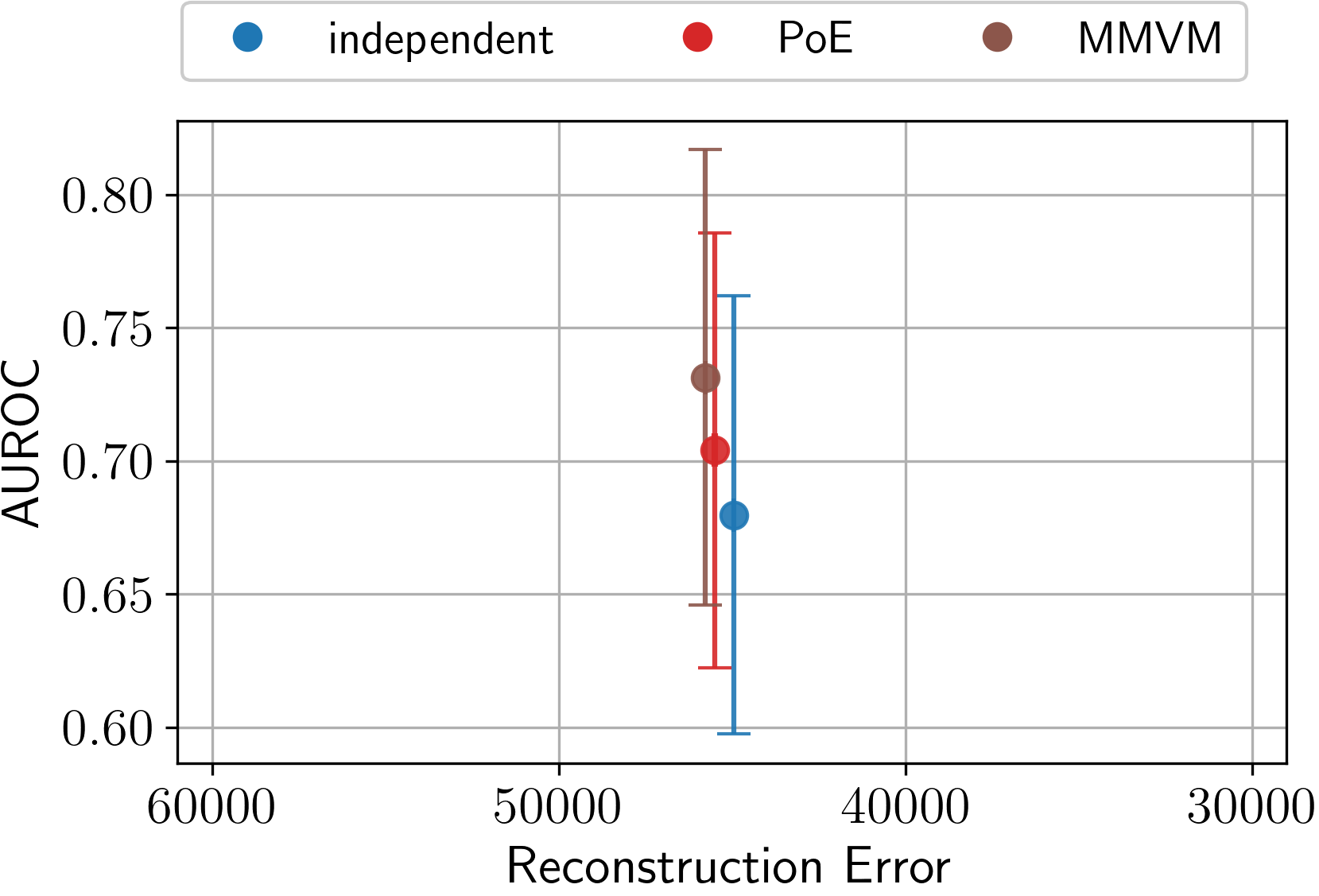}
    \caption{Latent representation classification for the MIMIC-CXR dataset. The mean AUROC over all labels and averaged over three seeds is reported.}
    \label{fig:exp_mimic_cxr_latentreprclassif}
\end{figure}

\subsection{Hippocampal Neural Activities}
\label{app:experiments_rats}
\subsubsection{Dataset}
The training data was collected from 250 ms length time frames after the port entry. Due to the behavior difference from each rat (some rats react faster to the odors while some others react slower), the training time frames of the five rats started from 250 ms, 250 ms, 500 ms, 500 ms, and 250 ms, respectively. During training, we treated each data point as independent and trained all the VAE models based on sliding windows (100 ms sub-window, 10 ms steps; 16 data points per window on each trial). The 100 ms sub-windows constituted the input data, with the dimension equal to the rat’s number of neurons multiplied by 10, as the data was further binned into 10-ms increments.

\subsubsection{Implementation \& Training}
We use the same network architectures for all multimodal VAEs. 
Each of the autoencoders includes its unique encoder and decoder, both containing two hidden layers, without weight-sharing during training and evaluation. 
All modalities share the same architecture but the layers' dimensions are different, with 920, 790, 1040, 490, 460 dimensional input and hidden layers, respectively. The activation function was chosen to be LeakyReLU with a 0.01 negative slope. 
For all experiments on this dataset, we use an Adam optimizer with an initial learning rate of $0.001$, a batch size of $128$. We train all models for $1000$ epochs.

\subsubsection{Additional Results}
We show the 2-dimensional latent representations for every rat and the six VAE encoders in \Cref{fig:rats_latent_odor} and \Cref{fig:rats_latent_rat}. In these two figures, each dot is the two-dimensional latent representation of a 100 ms sub-window of one odor trial for one rat. \Cref{fig:rats_latent_odor} is colored by 4 odors, and \Cref{fig:rats_latent_rat} is colored by 5 modalities (rats). \Cref{fig:rats_latent_odor} shows the odor stimuli separation on the latent space and how good MMVM VAE is in separating the odors. \Cref{fig:rats_latent_rat} shows that the proposed MMVM VAE can best utilize the shared information between the five rats by pulling the latent representations together. At the same time, the independent AVG and PoE baseline models fail to extract the information shared between rats. Although it shows separation in some views, the independent model does not provide a connection between views. The five tiny clusters in \Cref{fig:rats_latent_odor} and \Cref{fig:rats_latent_rat} show that, instead of showing a clear odor separation on the latent space, the AVG model separated the data by rats. The results went against the intention to share information across views. In other words, the five rats' latent representations were far away from each other, so the aggregated VAE completely failed to connect the five views. 

\begin{figure*}[t!]
    \centering
    \begin{subfigure}[t]{0.95\textwidth}
        \centering
        \includegraphics[width=0.75\textwidth]{figures/odor_legend.png}
    \end{subfigure}
    \centering
    \begin{subfigure}[t]{0.25\textwidth}
        \includegraphics[width=1.0\textwidth]{figures/latent_odor_uni.png}
        \caption{independent}
        \label{fig:app_rats_latent_odor_independent}
    \end{subfigure}
    \hspace{0.25cm}
    \centering
    \begin{subfigure}[t]{0.25\textwidth}
        \includegraphics[width=1.0\textwidth]{figures/latent_odor_joint.png}
        \caption{AVG}
        \label{fig:app_rats_latent_odor_joint}
    \end{subfigure}
    \hspace{0.25cm}
    \centering
    \begin{subfigure}[t]{0.25\textwidth}
        \includegraphics[width=1.0\textwidth]{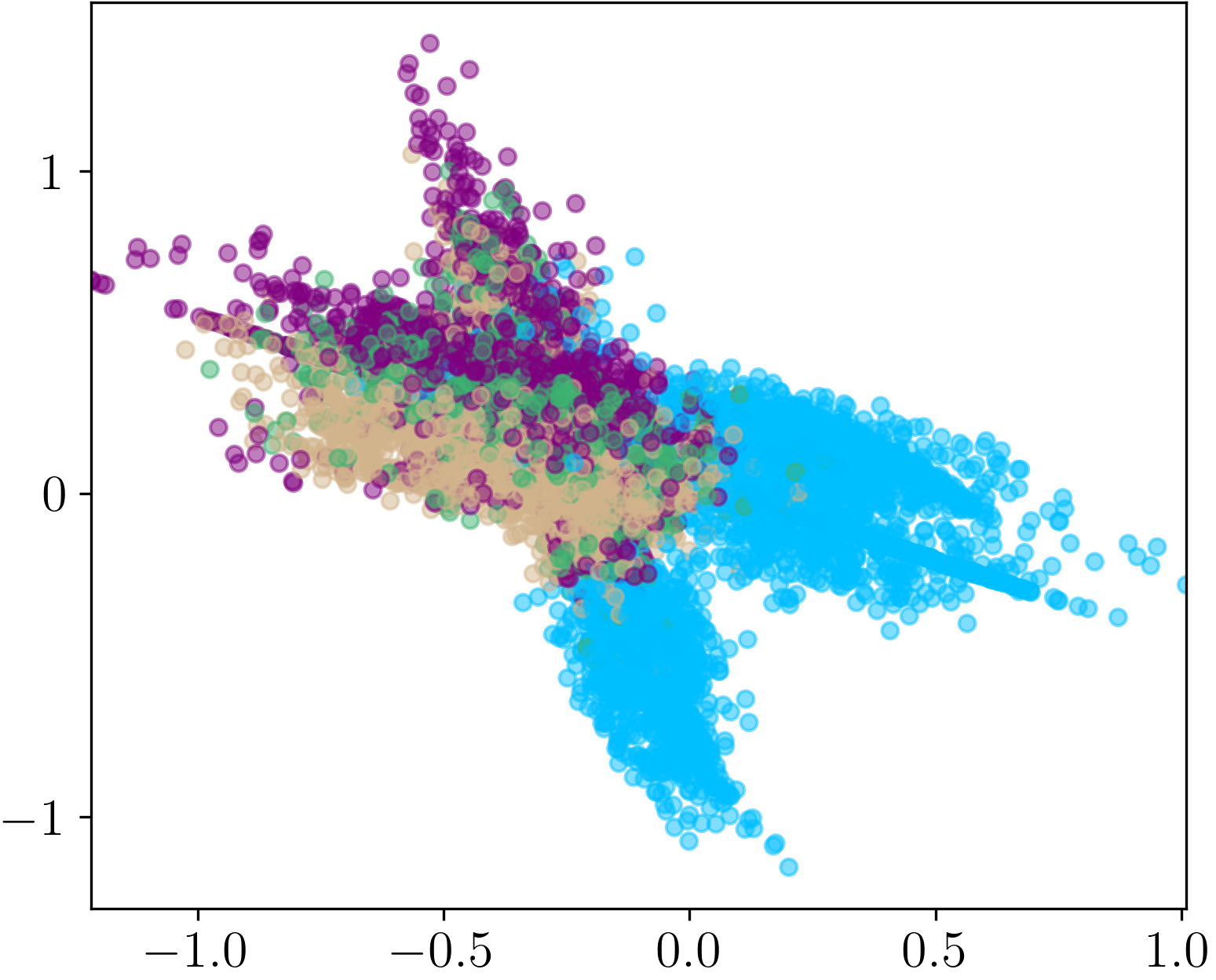}
        \caption{MoE}
        \label{fig:app_rats_latent_odor_moe}
    \end{subfigure}
    \hspace{0.25cm}
    \centering
    \begin{subfigure}[t]{0.25\textwidth}
        \includegraphics[width=1.0\textwidth]{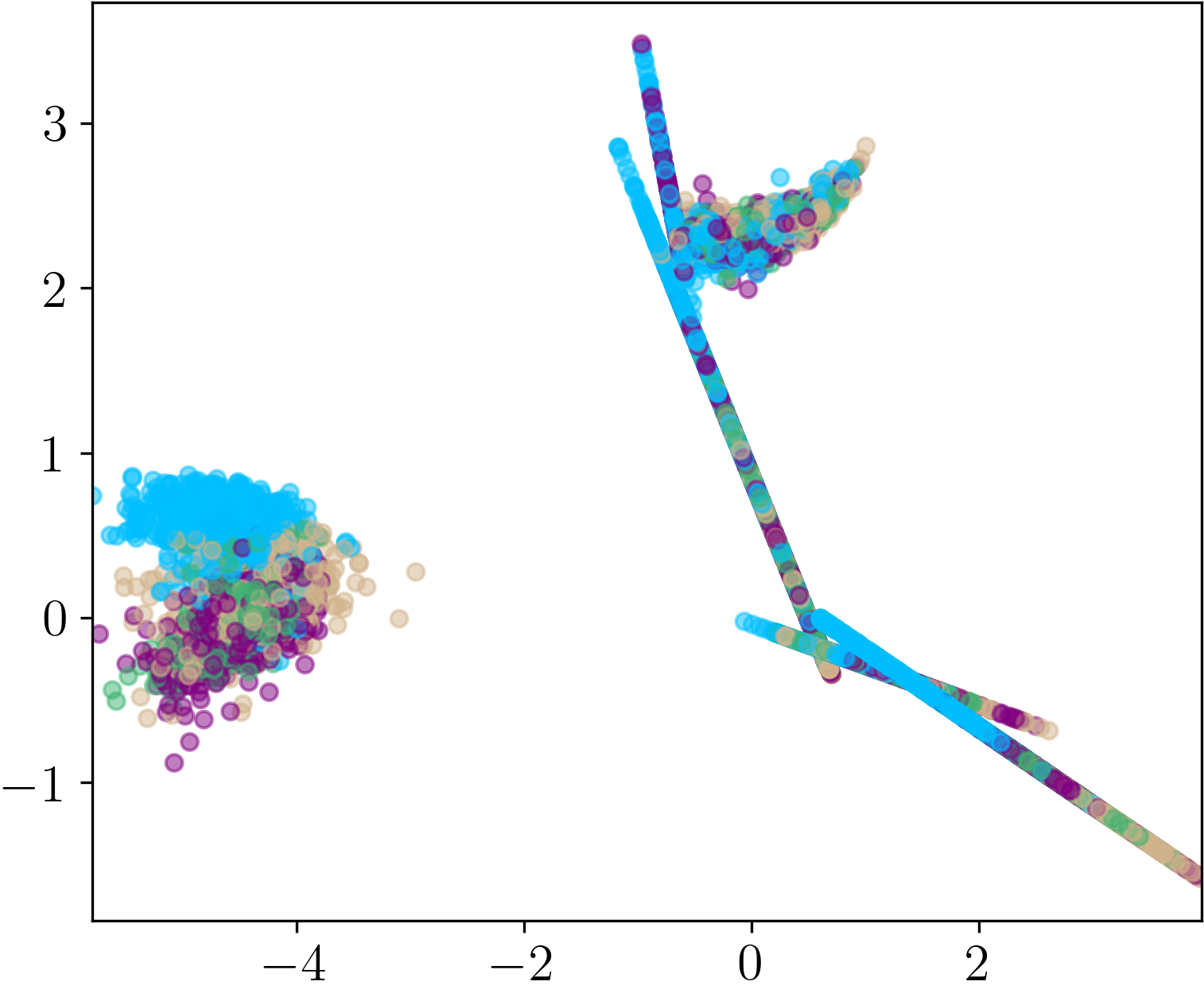}
        \caption{PoE}
        \label{fig:app_rats_latent_odor_poe}
    \end{subfigure}
    \hspace{0.25cm}
    \centering
    \begin{subfigure}[t]{0.25\textwidth}
        \includegraphics[width=1.0\textwidth]{figures/latent_odor_mopoe.png}
        \caption{MoPoE}
        \label{fig:app_rats_latent_odor_mopoe}
    \end{subfigure}
    \hspace{0.25cm}
    \centering
    \begin{subfigure}[t]{0.25\textwidth}
        \includegraphics[width=1.0\textwidth]{figures/latent_odor_mixedprior.png}
        \caption{MMVM}
        \label{fig:app_rats_latent_odor_mixedprior}
    \end{subfigure}
    \hspace{0.25cm}
    \caption{Latent neural representation during a memory experiment. Each model's performance is evaluated based on its own optimal $\beta$ value (0.00001, 0.01, 0.001 for independent, aggregated, and MMVM respectively) in terms of the self-conditioned latent representation classification accuracy according to \Cref{fig:exp_rats_downstream_recloss}. Our model can distinguish the odor stimuli in the latent space with a clear separation of odors (4 different colors). 
    }
    \label{fig:app_rats_latent_odor}
\end{figure*}

\begin{figure*}
    \centering
    \begin{subfigure}[t]{0.95\textwidth}
        \centering
        \includegraphics[width=0.75\textwidth]{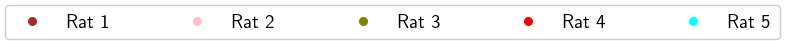}
    \end{subfigure}
    \centering
    \begin{subfigure}[t]{0.25\textwidth}
        \includegraphics[width=1\textwidth]{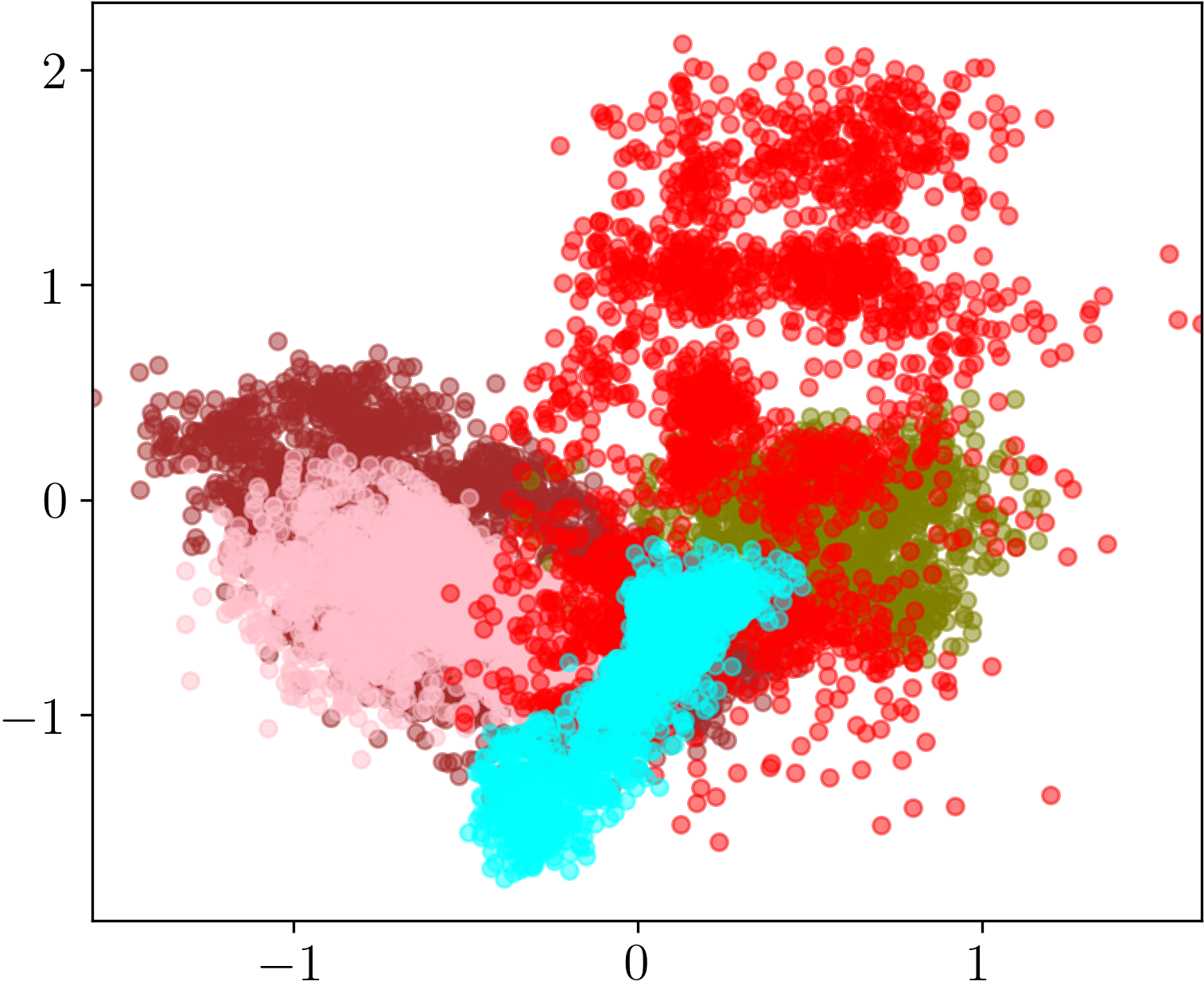}
        \caption{independent}
        \label{fig:rats_latent_rat_independent}
    \end{subfigure}
    \hspace{0.25cm}
    \centering
    \begin{subfigure}[t]{0.25\textwidth}
        \includegraphics[width=1\textwidth]{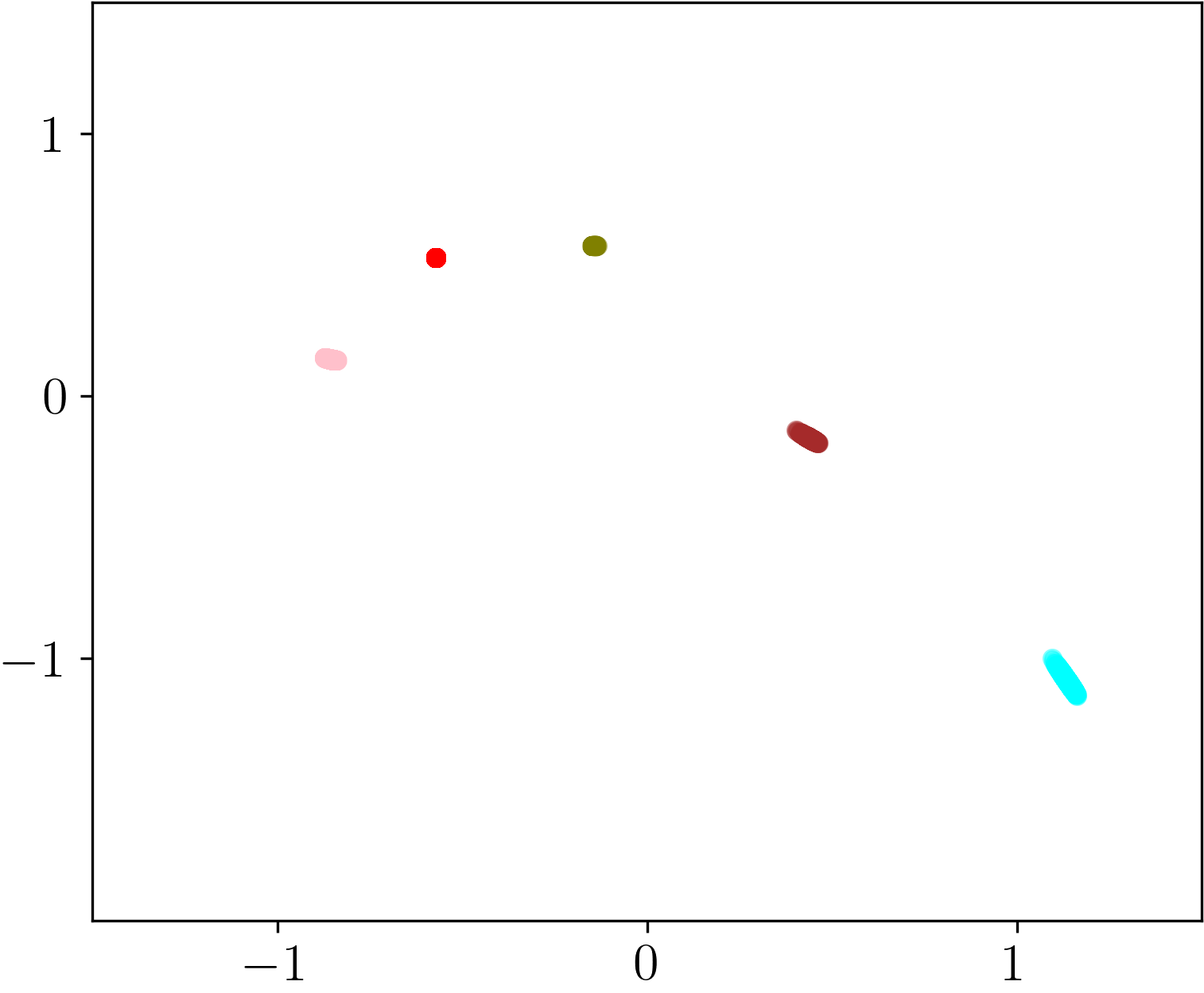}
        \caption{AVG}
        \label{fig:rats_latent_rat_joint}
    \end{subfigure}
    \hspace{0.25cm}
    \centering
    \begin{subfigure}[t]{0.25\textwidth}
        \includegraphics[width=1\textwidth]{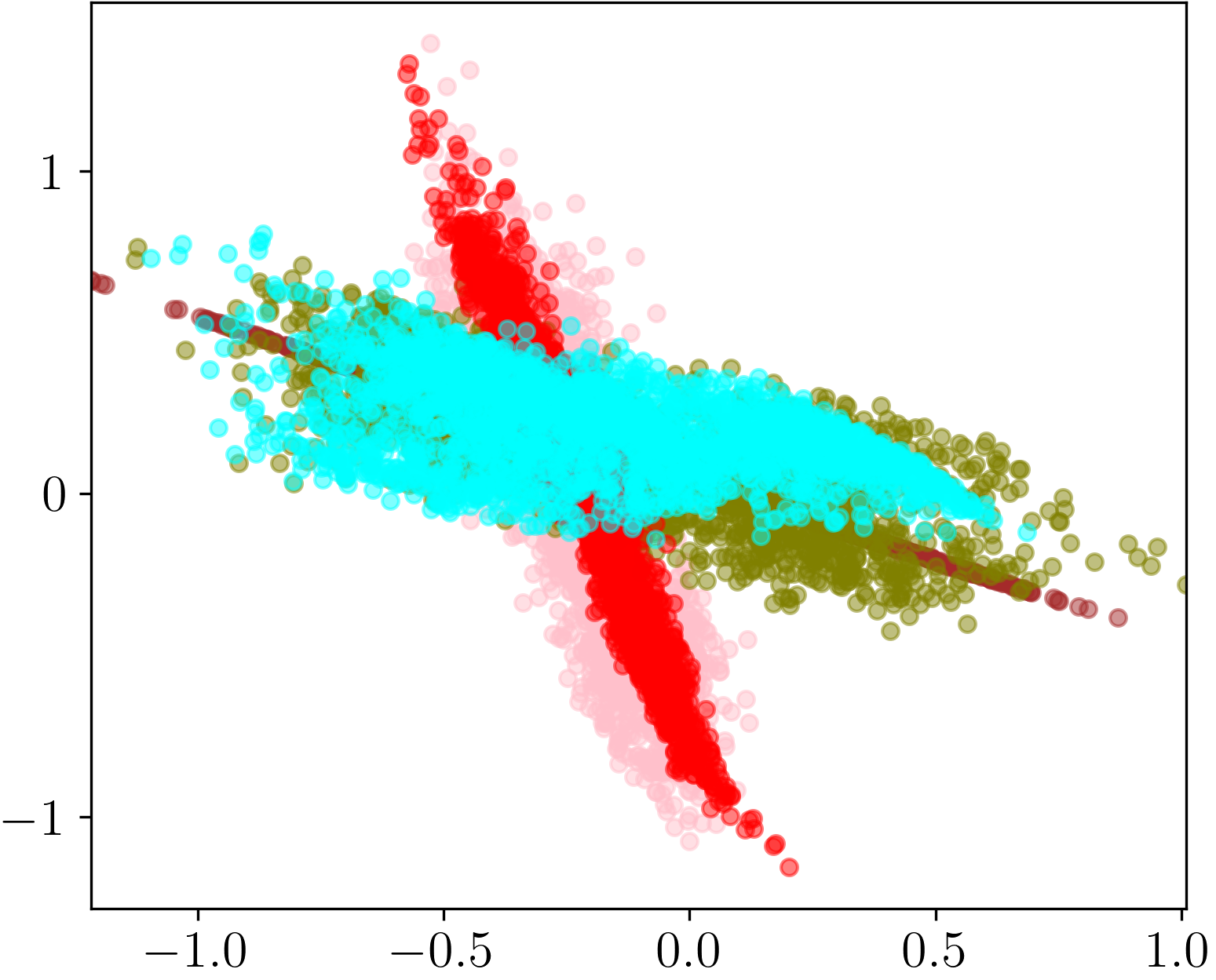}
        \caption{MoE}
        \label{fig:rats_latent_rat_moe}
    \end{subfigure}
    \hspace{0.25cm}
    \centering
    \begin{subfigure}[t]{0.25\textwidth}
        \includegraphics[width=1\textwidth]{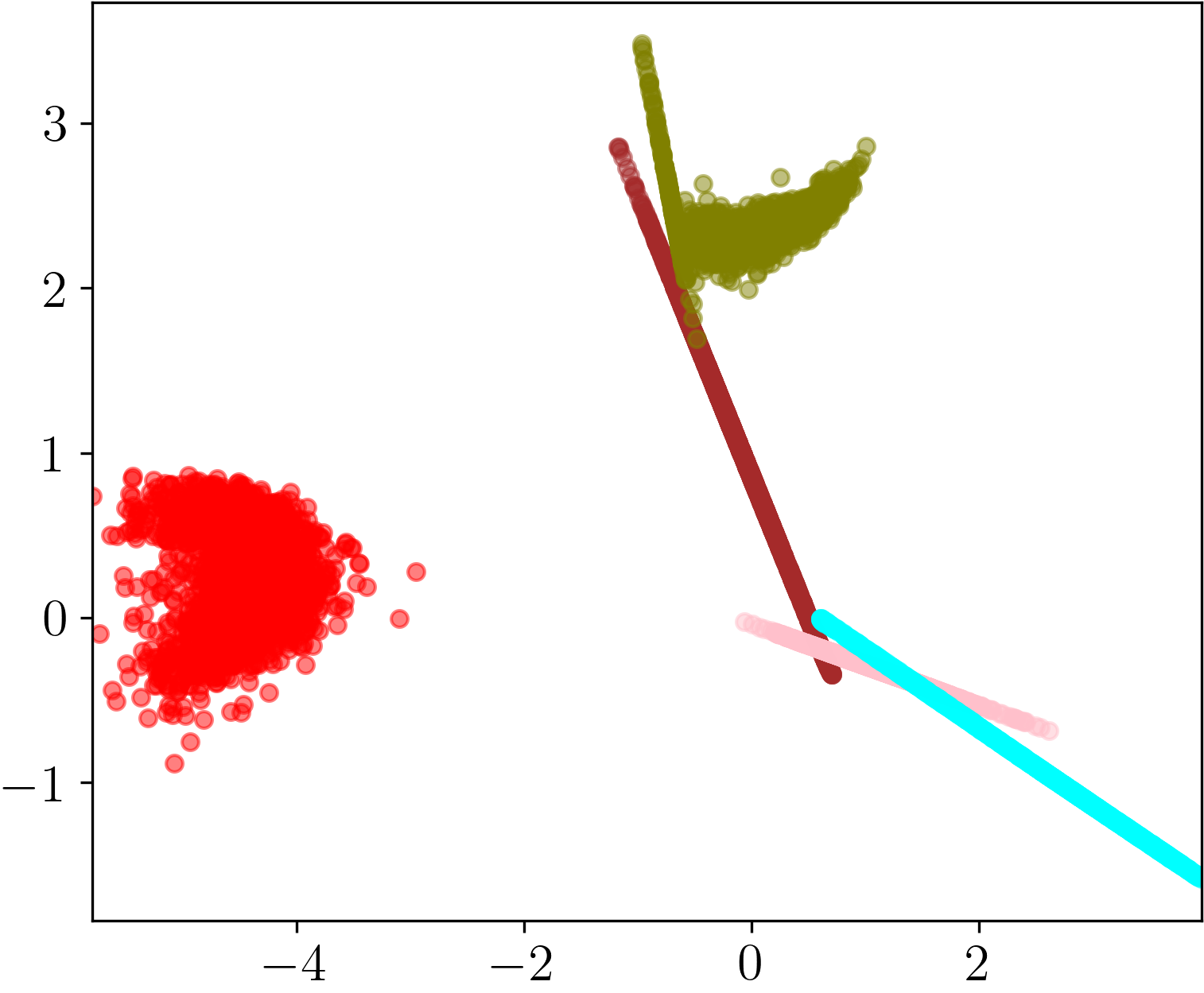}
        \caption{PoE}
        \label{fig:rats_latent_rat_poe}
    \end{subfigure}
    \hspace{0.25cm}
    \centering
    \begin{subfigure}[t]{0.25\textwidth}
        \includegraphics[width=1\textwidth]{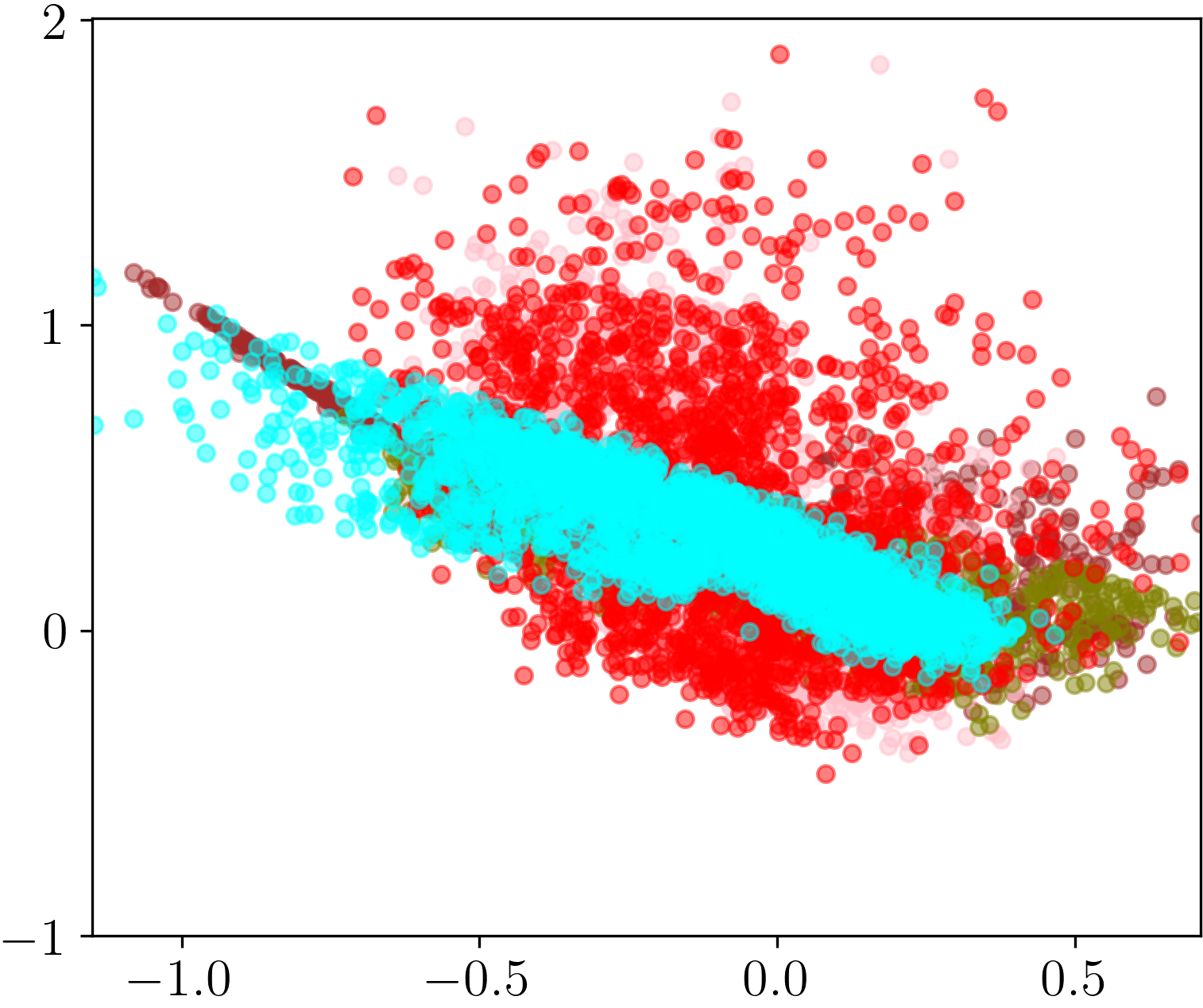}
        \caption{MoPoE}
        \label{fig:rats_latent_rat_mopoe}
    \end{subfigure}
    \hspace{0.25cm}
    \centering
    \begin{subfigure}[t]{0.25\textwidth}
        \includegraphics[width=1\textwidth]{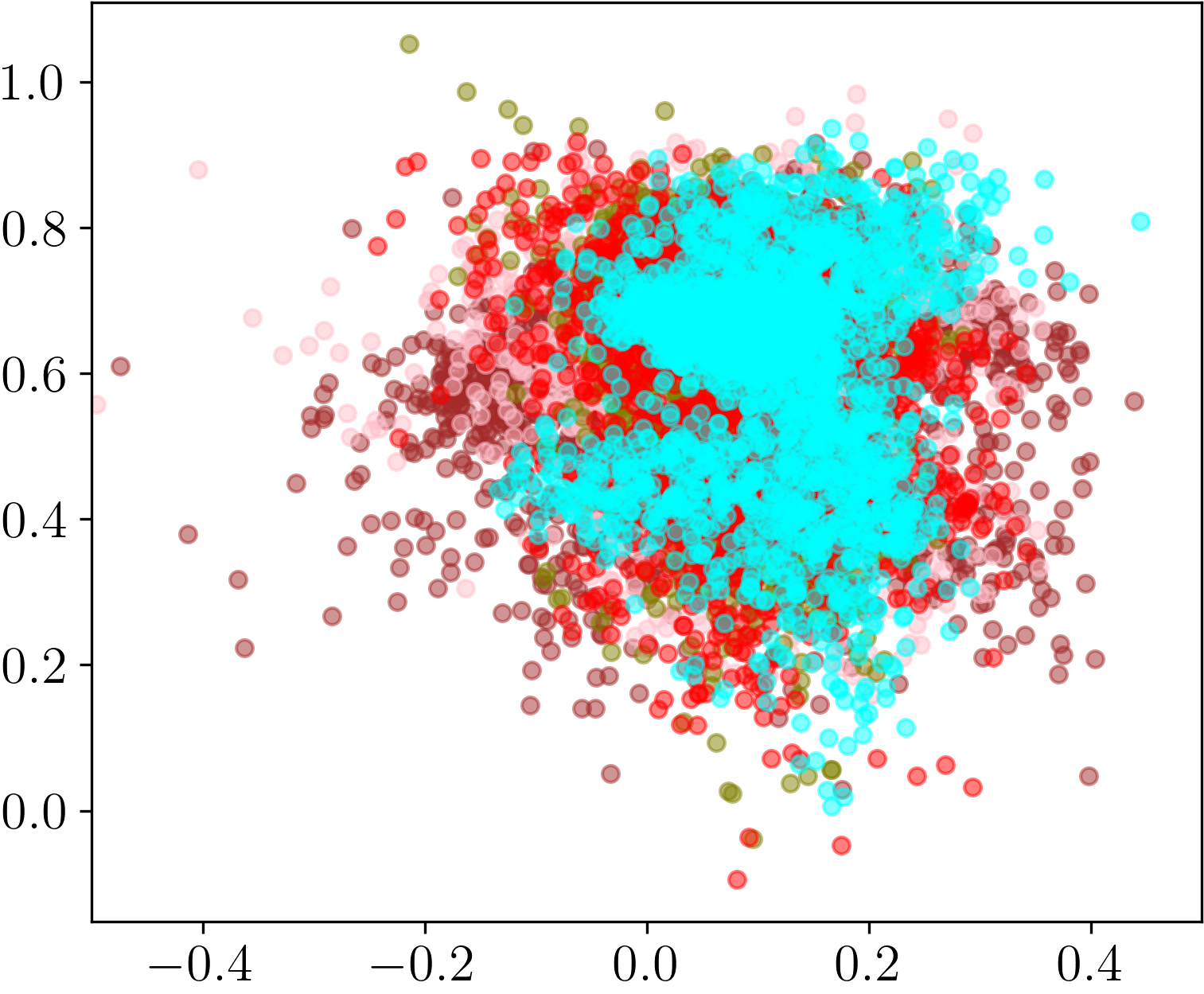}
        \caption{MMVM}
        \label{fig:rats_latent_rat_mixedprior}
    \end{subfigure}
    \hspace{0.25cm}
    \caption{Latent Representation of Rats Brain by Each Rat. In our proposed MMVM model, the five views shared latent representations as the latent representation of all five views (colors) gathered together, while the two baseline models failed to combine multi-views.
    }
    \label{fig:rats_latent_rat}
\end{figure*}

\section*{}
\newpage

\section*{NeurIPS Paper Checklist}
\begin{enumerate}

\item {\bf Claims}
    \item[] Question: Do the main claims made in the abstract and introduction accurately reflect the paper's contributions and scope?
    \item[] Answer: \answerYes{} 
    \item[] Justification: In the paper, we introduce a novel multimodal method that we evaluate against previous multimodal approaches. In the experiments, we show improved performance on multiple datasets.
    \item[] Guidelines:
    \begin{itemize}
        \item The answer NA means that the abstract and introduction do not include the claims made in the paper.
        \item The abstract and/or introduction should clearly state the claims made, including the contributions made in the paper and important assumptions and limitations. A No or NA answer to this question will not be perceived well by the reviewers. 
        \item The claims made should match theoretical and experimental results, and reflect how much the results can be expected to generalize to other settings. 
        \item It is fine to include aspirational goals as motivation as long as it is clear that these goals are not attained by the paper. 
    \end{itemize}

\item {\bf Limitations}
    \item[] Question: Does the paper discuss the limitations of the work performed by the authors?
    \item[] Answer: \answerYes{} 
    \item[] Justification: We discuss the limitations of the paper and the proposed method in the limitations section (see \cref{sec:limitations})
    \item[] Guidelines:
    \begin{itemize}
        \item The answer NA means that the paper has no limitation while the answer No means that the paper has limitations, but those are not discussed in the paper. 
        \item The authors are encouraged to create a separate "Limitations" section in their paper.
        \item The paper should point out any strong assumptions and how robust the results are to violations of these assumptions (e.g., independence assumptions, noiseless settings, model well-specification, asymptotic approximations only holding locally). The authors should reflect on how these assumptions might be violated in practice and what the implications would be.
        \item The authors should reflect on the scope of the claims made, e.g., if the approach was only tested on a few datasets or with a few runs. In general, empirical results often depend on implicit assumptions, which should be articulated.
        \item The authors should reflect on the factors that influence the performance of the approach. For example, a facial recognition algorithm may perform poorly when image resolution is low or images are taken in low lighting. Or a speech-to-text system might not be used reliably to provide closed captions for online lectures because it fails to handle technical jargon.
        \item The authors should discuss the computational efficiency of the proposed algorithms and how they scale with dataset size.
        \item If applicable, the authors should discuss possible limitations of their approach to address problems of privacy and fairness.
        \item While the authors might fear that complete honesty about limitations might be used by reviewers as grounds for rejection, a worse outcome might be that reviewers discover limitations that aren't acknowledged in the paper. The authors should use their best judgment and recognize that individual actions in favor of transparency play an important role in developing norms that preserve the integrity of the community. Reviewers will be specifically instructed to not penalize honesty concerning limitations.
    \end{itemize}

\item {\bf Theory Assumptions and Proofs}
    \item[] Question: For each theoretical result, does the paper provide the full set of assumptions and a complete (and correct) proof?
    \item[] Answer: \answerYes{} 
    \item[] Justification: We describe the assumptions to \cref{thm:optimal_prior_distribution} in \cref{sec:background,sec:method} and the proof in \cref{sec:method}.
    \item[] Guidelines:
    \begin{itemize}
        \item The answer NA means that the paper does not include theoretical results. 
        \item All the theorems, formulas, and proofs in the paper should be numbered and cross-referenced.
        \item All assumptions should be clearly stated or referenced in the statement of any theorems.
        \item The proofs can either appear in the main paper or the supplemental material, but if they appear in the supplemental material, the authors are encouraged to provide a short proof sketch to provide intuition. 
        \item Inversely, any informal proof provided in the core of the paper should be complemented by formal proofs provided in appendix or supplemental material.
        \item Theorems and Lemmas that the proof relies upon should be properly referenced. 
    \end{itemize}

    \item {\bf Experimental Result Reproducibility}
    \item[] Question: Does the paper fully disclose all the information needed to reproduce the main experimental results of the paper to the extent that it affects the main claims and/or conclusions of the paper (regardless of whether the code and data are provided or not)?
    \item[] Answer: \answerYes{}{} 
    \item[] Justification: We describe all the experiments in full detail in the appendix (see \cref{app:experiments}) such that all the results on all datasets can be reproduced.
    \item[] Guidelines:
    \begin{itemize}
        \item The answer NA means that the paper does not include experiments.
        \item If the paper includes experiments, a No answer to this question will not be perceived well by the reviewers: Making the paper reproducible is important, regardless of whether the code and data are provided or not.
        \item If the contribution is a dataset and/or model, the authors should describe the steps taken to make their results reproducible or verifiable. 
        \item Depending on the contribution, reproducibility can be accomplished in various ways. For example, if the contribution is a novel architecture, describing the architecture fully might suffice, or if the contribution is a specific model and empirical evaluation, it may be necessary to either make it possible for others to replicate the model with the same dataset, or provide access to the model. In general. releasing code and data is often one good way to accomplish this, but reproducibility can also be provided via detailed instructions for how to replicate the results, access to a hosted model (e.g., in the case of a large language model), releasing of a model checkpoint, or other means that are appropriate to the research performed.
        \item While NeurIPS does not require releasing code, the conference does require all submissions to provide some reasonable avenue for reproducibility, which may depend on the nature of the contribution. For example
        \begin{enumerate}
            \item If the contribution is primarily a new algorithm, the paper should make it clear how to reproduce that algorithm.
            \item If the contribution is primarily a new model architecture, the paper should describe the architecture clearly and fully.
            \item If the contribution is a new model (e.g., a large language model), then there should either be a way to access this model for reproducing the results or a way to reproduce the model (e.g., with an open-source dataset or instructions for how to construct the dataset).
            \item We recognize that reproducibility may be tricky in some cases, in which case authors are welcome to describe the particular way they provide for reproducibility. In the case of closed-source models, it may be that access to the model is limited in some way (e.g., to registered users), but it should be possible for other researchers to have some path to reproducing or verifying the results.
        \end{enumerate}
    \end{itemize}

\item {\bf Open access to data and code}
    \item[] Question: Does the paper provide open access to the data and code, with sufficient instructions to faithfully reproduce the main experimental results, as described in supplemental material?
    \item[] Answer: \answerYes{} 
    \item[] Justification: We provide the full code to reproduce all results, including README files and environment requirements. All the data is publicly available, and we provide information on how to access the data.
    \item[] Guidelines:
    \begin{itemize}
        \item The answer NA means that paper does not include experiments requiring code.
        \item Please see the NeurIPS code and data submission guidelines (\url{https://nips.cc/public/guides/CodeSubmissionPolicy}) for more details.
        \item While we encourage the release of code and data, we understand that this might not be possible, so “No” is an acceptable answer. Papers cannot be rejected simply for not including code, unless this is central to the contribution (e.g., for a new open-source benchmark).
        \item The instructions should contain the exact command and environment needed to run to reproduce the results. See the NeurIPS code and data submission guidelines (\url{https://nips.cc/public/guides/CodeSubmissionPolicy}) for more details.
        \item The authors should provide instructions on data access and preparation, including how to access the raw data, preprocessed data, intermediate data, and generated data, etc.
        \item The authors should provide scripts to reproduce all experimental results for the new proposed method and baselines. If only a subset of experiments are reproducible, they should state which ones are omitted from the script and why.
        \item At submission time, to preserve anonymity, the authors should release anonymized versions (if applicable).
        \item Providing as much information as possible in supplemental material (appended to the paper) is recommended, but including URLs to data and code is permitted.
    \end{itemize}

\item {\bf Experimental Setting/Details}
    \item[] Question: Does the paper specify all the training and test details (e.g., data splits, hyperparameters, how they were chosen, type of optimizer, etc.) necessary to understand the results?
    \item[] Answer: \answerYes{} 
    \item[] Justification: We provide all the necessary information either in the appendix (see \cref{app:experiments}) or link to references, where the necessary information is provided.
    \item[] Guidelines:
    \begin{itemize}
        \item The answer NA means that the paper does not include experiments.
        \item The experimental setting should be presented in the core of the paper to a level of detail that is necessary to appreciate the results and make sense of them.
        \item The full details can be provided either with the code, in appendix, or as supplemental material.
    \end{itemize}

\item {\bf Experiment Statistical Significance}
    \item[] Question: Does the paper report error bars suitably and correctly defined or other appropriate information about the statistical significance of the experiments?
    \item[] Answer: \answerYes{} 
    \item[] Justification: We report average results for all experiments performed over at least 3 seeds. For smaller experiments, we average over 5 seeds. We report the number of random seeds used per experiment in the appendix.
    \item[] Guidelines:
    \begin{itemize}
        \item The answer NA means that the paper does not include experiments.
        \item The authors should answer "Yes" if the results are accompanied by error bars, confidence intervals, or statistical significance tests, at least for the experiments that support the main claims of the paper.
        \item The factors of variability that the error bars are capturing should be clearly stated (for example, train/test split, initialization, random drawing of some parameter, or overall run with given experimental conditions).
        \item The method for calculating the error bars should be explained (closed form formula, call to a library function, bootstrap, etc.)
        \item The assumptions made should be given (e.g., Normally distributed errors).
        \item It should be clear whether the error bar is the standard deviation or the standard error of the mean.
        \item It is OK to report 1-sigma error bars, but one should state it. The authors should preferably report a 2-sigma error bar than state that they have a 96\% CI, if the hypothesis of Normality of errors is not verified.
        \item For asymmetric distributions, the authors should be careful not to show in tables or figures symmetric error bars that would yield results that are out of range (e.g. negative error rates).
        \item If error bars are reported in tables or plots, The authors should explain in the text how they were calculated and reference the corresponding figures or tables in the text.
    \end{itemize}

\item {\bf Experiments Compute Resources}
    \item[] Question: For each experiment, does the paper provide sufficient information on the computer resources (type of compute workers, memory, time of execution) needed to reproduce the experiments?
    \item[] Answer: \answerYes{} 
    \item[] Justification: We report the compute time for the experiments conducted (including a statement that more compute time was needed for the development) and the used GPU models.
    \item[] Guidelines:
    \begin{itemize}
        \item The answer NA means that the paper does not include experiments.
        \item The paper should indicate the type of compute workers CPU or GPU, internal cluster, or cloud provider, including relevant memory and storage.
        \item The paper should provide the amount of compute required for each of the individual experimental runs as well as estimate the total compute. 
        \item The paper should disclose whether the full research project required more compute than the experiments reported in the paper (e.g., preliminary or failed experiments that didn't make it into the paper). 
    \end{itemize}
    
\item {\bf Code Of Ethics}
    \item[] Question: Does the research conducted in the paper conform, in every respect, with the NeurIPS Code of Ethics \url{https://neurips.cc/public/EthicsGuidelines}?
    \item[] Answer: \answerYes{} 
    \item[] Justification: We completely follow the Neurips Code of Ethics.
    \item[] Guidelines:
    \begin{itemize}
        \item The answer NA means that the authors have not reviewed the NeurIPS Code of Ethics.
        \item If the authors answer No, they should explain the special circumstances that require a deviation from the Code of Ethics.
        \item The authors should make sure to preserve anonymity (e.g., if there is a special consideration due to laws or regulations in their jurisdiction).
    \end{itemize}

\item {\bf Broader Impacts}
    \item[] Question: Does the paper discuss both potential positive societal impacts and negative societal impacts of the work performed?
    \item[] Answer: \answerYes{} 
    \item[] Justification: We discuss the potential broader impact of the proposed work alongside its limitations in \cref{sec:limitations}.
    \item[] Guidelines:
    \begin{itemize}
        \item The answer NA means that there is no societal impact of the work performed.
        \item If the authors answer NA or No, they should explain why their work has no societal impact or why the paper does not address societal impact.
        \item Examples of negative societal impacts include potential malicious or unintended uses (e.g., disinformation, generating fake profiles, surveillance), fairness considerations (e.g., deployment of technologies that could make decisions that unfairly impact specific groups), privacy considerations, and security considerations.
        \item The conference expects that many papers will be foundational research and not tied to particular applications, let alone deployments. However, if there is a direct path to any negative applications, the authors should point it out. For example, it is legitimate to point out that an improvement in the quality of generative models could be used to generate deepfakes for disinformation. On the other hand, it is not needed to point out that a generic algorithm for optimizing neural networks could enable people to train models that generate Deepfakes faster.
        \item The authors should consider possible harms that could arise when the technology is being used as intended and functioning correctly, harms that could arise when the technology is being used as intended but gives incorrect results, and harms following from (intentional or unintentional) misuse of the technology.
        \item If there are negative societal impacts, the authors could also discuss possible mitigation strategies (e.g., gated release of models, providing defenses in addition to attacks, mechanisms for monitoring misuse, mechanisms to monitor how a system learns from feedback over time, improving the efficiency and accessibility of ML).
    \end{itemize}
    
\item {\bf Safeguards}
    \item[] Question: Does the paper describe safeguards that have been put in place for responsible release of data or models that have a high risk for misuse (e.g., pretrained language models, image generators, or scraped datasets)?
    \item[] Answer: \answerNA{} 
    \item[] Justification: We do not use any pretrained language models, image generators, or scraped datasets.
    \item[] Guidelines:
    \begin{itemize}
        \item The answer NA means that the paper poses no such risks.
        \item Released models that have a high risk for misuse or dual-use should be released with necessary safeguards to allow for controlled use of the model, for example by requiring that users adhere to usage guidelines or restrictions to access the model or implementing safety filters. 
        \item Datasets that have been scraped from the Internet could pose safety risks. The authors should describe how they avoided releasing unsafe images.
        \item We recognize that providing effective safeguards is challenging, and many papers do not require this, but we encourage authors to take this into account and make a best faith effort.
    \end{itemize}

\item {\bf Licenses for existing assets}
    \item[] Question: Are the creators or original owners of assets (e.g., code, data, models), used in the paper, properly credited and are the license and terms of use explicitly mentioned and properly respected?
    \item[] Answer: \answerYes{} 
    \item[] Justification: We cite all the papers we took inspiration from or we used code from. We include licenses for all datasets.
    \item[] Guidelines:
    \begin{itemize}
        \item The answer NA means that the paper does not use existing assets.
        \item The authors should cite the original paper that produced the code package or dataset.
        \item The authors should state which version of the asset is used and, if possible, include a URL.
        \item The name of the license (e.g., CC-BY 4.0) should be included for each asset.
        \item For scraped data from a particular source (e.g., website), the copyright and terms of service of that source should be provided.
        \item If assets are released, the license, copyright information, and terms of use in the package should be provided. For popular datasets, \url{paperswithcode.com/datasets} has curated licenses for some datasets. Their licensing guide can help determine the license of a dataset.
        \item For existing datasets that are re-packaged, both the original license and the license of the derived asset (if it has changed) should be provided.
        \item If this information is not available online, the authors are encouraged to reach out to the asset's creators.
    \end{itemize}

\item {\bf New Assets}
    \item[] Question: Are new assets introduced in the paper well documented and is the documentation provided alongside the assets?
    \item[] Answer: \answerYes{} 
    \item[] Justification: We will publicly release our code upon acceptance and document it accordingly. We attach the code to the submission as supplementary files.
    \item[] Guidelines:
    \begin{itemize}
        \item The answer NA means that the paper does not release new assets.
        \item Researchers should communicate the details of the dataset/code/model as part of their submissions via structured templates. This includes details about training, license, limitations, etc. 
        \item The paper should discuss whether and how consent was obtained from people whose asset is used.
        \item At submission time, remember to anonymize your assets (if applicable). You can either create an anonymized URL or include an anonymized zip file.
    \end{itemize}

\item {\bf Crowdsourcing and Research with Human Subjects}
    \item[] Question: For crowdsourcing experiments and research with human subjects, does the paper include the full text of instructions given to participants and screenshots, if applicable, as well as details about compensation (if any)? 
    \item[] Answer: \answerNA{} 
    \item[] Justification:
    \item[] Guidelines:
    \begin{itemize}
        \item The answer NA means that the paper does not involve crowdsourcing nor research with human subjects.
        \item Including this information in the supplemental material is fine, but if the main contribution of the paper involves human subjects, then as much detail as possible should be included in the main paper. 
        \item According to the NeurIPS Code of Ethics, workers involved in data collection, curation, or other labor should be paid at least the minimum wage in the country of the data collector. 
    \end{itemize}

\item {\bf Institutional Review Board (IRB) Approvals or Equivalent for Research with Human Subjects}
    \item[] Question: Does the paper describe potential risks incurred by study participants, whether such risks were disclosed to the subjects, and whether Institutional Review Board (IRB) approvals (or an equivalent approval/review based on the requirements of your country or institution) were obtained?
    \item[] Answer: \answerNA{} 
    \item[] Justification:
    \item[] Guidelines:
    \begin{itemize}
        \item The answer NA means that the paper does not involve crowdsourcing nor research with human subjects.
        \item Depending on the country in which research is conducted, IRB approval (or equivalent) may be required for any human subjects research. If you obtained IRB approval, you should clearly state this in the paper. 
        \item We recognize that the procedures for this may vary significantly between institutions and locations, and we expect authors to adhere to the NeurIPS Code of Ethics and the guidelines for their institution. 
        \item For initial submissions, do not include any information that would break anonymity (if applicable), such as the institution conducting the review.
    \end{itemize}

\end{enumerate}


\end{document}